\newtheorem{theorem}{Theorem}[section]
\newtheorem{lemma}[theorem]{Lemma}
\theoremstyle{definition}
\newtheorem{definition}[theorem]{Definition}
\theoremstyle{remark}
 \newcommand\independent{\protect\mathpalette{\protect\independenT}{\perp}}
    \def\independenT#1#2{\mathrel{\rlap{$#1#2$}\mkern2mu{#1#2}}}
\newcommand\notindependent{\!\perp\!\!\!\!\not\perp\!}
\newcommand{\btheos}{\begin{theorem}}
\newcommand{\etheos}{\end{theorem}}
\newcommand{\blems}{\begin{lemma}}
\newcommand{\elems}{\end{lemma}}
\newcommand{\spro}{\begin{proof}}
\newcommand{\fpro}{\end{proof}}
\newcommand{\MDR}{MDR}
\newcommand{\SMR}{SMR}
\newcommand{\MEC}{MEC}
\newcommand{\MECs}{MECs}
\begin{document}

	\begin{center}
	{\bf{\LARGE{Identifiability Assumptions and Algorithm for Directed Graphical Models with Feedback}}}

	\vspace*{.1in}
	\begin{tabular}{cc}
	Gunwoong Park$^1$\;\;Garvesh Raskutti$^{1,2,3}$\\
	\end{tabular}

	\vspace*{.1in}

	\begin{tabular}{c}
	  $^1$ Department of Statistics, University of Wisconsin-Madison \\
	  $^2$ Department of Computer Science, University of Wisconsin-Madison\\
	  $^3$ Wisconsin Institute for Discovery, Optimization Group  
	\end{tabular}

	\vspace*{.1in}


	\end{center}

\begin{abstract}
	Directed graphical models provide a useful framework for modeling causal or directional relationships for multivariate data. Prior work has largely focused on identifiability and search algorithms for directed acyclic graphical (DAG) models. In many applications, feedback naturally arises and directed graphical models that permit cycles occur. In this paper we address the issue of identifiability for general directed cyclic graphical (DCG) models satisfying the Markov assumption. In particular, in addition to the faithfulness assumption which has already been introduced for cyclic models, we introduce two new identifiability assumptions, one based on selecting the model with the fewest edges and the other based on selecting the DCG model that entails the maximum number of d-separation rules. We provide theoretical results comparing these assumptions which show that: (1) selecting models with the largest number of d-separation rules is strictly weaker than the faithfulness assumption; (2) unlike for DAG models, selecting models with the fewest edges does not necessarily result in a milder assumption than the faithfulness assumption. We also provide connections between our two new principles and minimality assumptions. We use our identifiability assumptions to develop search algorithms for small-scale DCG models. Our simulation study supports our theoretical results, showing that the algorithms based on our two new principles generally out-perform algorithms based on the faithfulness assumption in terms of selecting the true skeleton for DCG models.
\end{abstract}

	\textbf{Keywords:} Directed graphical Models, Identifiability, Faithfulness, Feedback loops.

\section{Introduction}

\label{SecInt}

A fundamental goal in many scientific problems is to determine causal or directional relationships between variables in a system. A well-known framework for representing causal or directional relationships are directed graphical models. Most prior work on directed graphical models has focused on directed acyclic graphical (DAG) models, also referred to as Bayesian networks which are directed graphical models with no directed cycles. One of the core problems is determining the underlying DAG~$G$ given the data-generating distribution $\mathbb{P}$.

A fundamental assumption in the DAG~framework is the \emph{causal Markov condition} (CMC) (see e.g.,~\cite{lauritzen1996graphical, Spirtes2000}). While the CMC is broadly assumed, in order for a directed graph $G$ to be identifiable based on the distribution $\mathbb{P}$, additional assumptions are required. For DAG models, a number of identifiability and minimality assumptions have been introduced~\citep{Glymour1987, Spirtes2000} and the connections between them have been discussed~\citep{Zhang2013}. In particular, one of the most widely used assumptions for DAG models is the \emph{causal faithfulness condition} (CFC) which is sufficient for many search algorithms. However the CFC has been shown to be extremely restrictive, especially in the limited data setting~\citep{Uhler2013}. In addition two minimality assumptions, the P-minimality and SGS-minimality assumptions have been introduced. These conditions are weaker than the CFC but do not guarantee model identifiability~\citep{Zhang2013}. On the other hand, the recently introduced sparsest Markov representation (SMR) and frugality assumptions~\citep{forster2015frugal, Raskutti2013, van2013ell} provide an alternative that is milder than the CFC and is sufficient to ensure identifiability. The main downside of the \SMR~and frugality assumptions relative to the CFC is that the \SMR~and frugality assumptions are sufficient conditions for model identifiability only when exhaustive searches over the DAG space are possible~\citep{Raskutti2013}, while the CFC is sufficient for polynomial-time algorithms~\citep{Glymour1987, Spirtes1991, Spirtes2000} for learning equivalence class of sparse graphs.

While the DAG framework is useful in many applications, it is limited since feedback loops are known to often exist (see e.g.,~\cite{Richardson1996, Richardson1995}). Hence, directed graphs with directed cycles~\citep{Spirtes2000} are more appropriate to model such feedback. However learning directed cyclic graphical (DCG) models from data is considerably more challenging than learning DAG models~\citep{Richardson1996, Richardson1995} since the presence of cycles poses a number of additional challenges and introduces additional non-identifiability. Consequently there has been considerably less work focusing on directed graphs with feedback both in terms of identifiability assumptions and search algorithms. \cite{Spirtes1995} discussed the CMC, and~\cite{Richardson1996, Richardson1995} discussed the CFC for DCG models and introduced the polynomial-time cyclic causal discovery (CCD) algorithm~\citep{Richardson1996} for recovering the Markov equivalence class for DCGs. Recently,~\cite{claassen2013learning} introduced the FCI$+$ algorithm for recovering the Markov equivalence class for sparse DCGs, which also assumes the CFC. As with DAG models, the CFC for cyclic models is extremely restrictive since it is more restrictive than the CFC for DAG models. In terms of learning algorithms that do not require the CFC, additional assumptions are typically required. For example~\cite{mooij2011causal} proved identifiability for bivariate Gaussian cyclic graphical models with additive noise which does not require the CFC while many approaches have been studied for learning graphs from the results of interventions on the graph (e.g.,~\cite{hyttinen2010causal,hyttinen2012causal,hyttinen2012learning,hyttinen2013experiment,hyttinen2013discovering}). However, these additional assumptions are often impractical and it is often impossible or very expensive to intervene many variables in the graph. This raises the question of whether milder identifiability assumptions can be imposed for learning DCG models.

In this paper, we address this question in a number of steps. Firstly, we adapt the \SMR~and frugality assumptions developed for DAG models to DCG models. Next we show that unlike for DAG models, the adapted \SMR~and frugality assumptions are not strictly weaker than the CFC. Hence we consider a new identifiability assumption based on finding the Markovian DCG entailing the maximum number of d-separation rules (MDR) which we prove is strictly weaker than the CFC and recovers the Markov equivalence class for DCGs for a strict superset of examples compared to the CFC. We also provide a comparison between the \MDR, \SMR~and frugality assumptions as well as the minimality assumptions for both DAG and DCG models. Finally we use the \MDR~and \SMR~assumptions to develop search algorithms for small-scale DCG models. Our simulation study supports our theoretical results by showing that the algorithms induced by both the \SMR~and \MDR~assumptions recover the Markov equivalence class more reliably than state-of-the art algorithms that require the CFC for DCG models. We point out that the search algorithms that result from our identifiability assumptions require exhaustive searches and are not computationally feasible for large-scale DCG models. However, the focus of this paper is to develop the weakest possible identifiability assumption which is of fundamental importance for directed graphical models. 

The remainder of the paper is organized as follows: Section~\ref{SecPriorWork} provides the background and prior work for identifiability assumptions for both DAG and DCG models. In Section~\ref{SecSMRFrugality} we adapt the \SMR~and frugality assumptions to DCG models and provide a comparison between the \SMR~assumption, the CFC, and the minimality assumptions. In Section~\ref{SecMaxDSep} we introduce our new \MDR~principle, finding the Markovian DCG that entails the maximum number of d-separation rules and provide a comparison of the new principle to the CFC, \SMR, frugality, and minimality assumptions. Finally in Section~\ref{SecSimulation}, we use our identifiability assumptions to develop a search algorithm for learning small-scale DCG models, and provide a simulation study that is consistent with our theoretical results.

\section{Prior work on directed graphical models}

\label{SecPriorWork}

In this section, we introduce the basic concepts of directed graphical models pertaining to model identifiability. A directed graph $G = (V,E)$ consists of a set of vertices $V$ and a set of directed edges $E$. Suppose that $V=\{1,2,\dots ,p\}$ and there exists a random vector $(X_1, X_2,\cdots,X_p)$ with probability distribution $\mathbb{P}$ over the vertices in $G$. A directed edge from a vertex $j$ to $k$ is denoted by $(j,k)$ or $j\to k$. The set $\mbox{pa}(k)$ of \emph{parents} of a vertex $k$ consists of all nodes $j$ such that $(j,k)\in E$. If there is a directed path $j\to \cdots \to k$, then $k$ is called a  \emph{descendant} of $j$ and $j$ is an \emph{ancestor} of $k$. The set $\mbox{de}(k)$ denotes the set of all descendants of a node $k$. The \emph{non-descendants} of a node $k$ are $\mbox{nd}(k) = V\setminus (\{k\}\cup \mbox{de}(k))$. For a subset $S\subset V$, we define $\mbox{an}(S)$ to be the set of nodes $k$ that are in $S$ or are ancestors of a subset of nodes in $S$. Two nodes that are connected by an edge are called \emph{adjacent}. A triple of nodes $(j,k,\ell)$ is an \emph{unshielded triple} if $j$ and $k$ are adjacent to $\ell$ but $j$ and $k$ are not adjacent. An unshielded triple $(j,k,\ell)$ forms a \emph{v-structure} if $j\to \ell$ and $k \to \ell$. In this case $\ell$ is called a \emph{collider}. Furthermore, let $\pi$ be an undirected path $\pi$ between $j$ and $k$. If every collider on $\pi$ is in $\mbox{an}(S)$ and every non-collider on an undirected path $\pi$ is not in $S$, an undirected path $\pi$ from $j$ to $k$ \emph{d-connects} $j$ and $k$ given $S \subset V\setminus\{j,k\}$ and $j$ is \emph{d-connected} to $k$ given $S$.  If a directed graph $G$ has no undirected path $\pi$ that d-connects $j$ and $k$ given a subset $S$, then $j$ is \emph{d-separated} from $k$ given $S$:

\begin{definition}[d-connection/separation~\citep{Spirtes1995}] 
	For disjoint sets of vertices $j, k \in V$ and $S \subset V \setminus\{j,k\}$, $j$ is \emph{d-connected} to $k$ given $S$ if and only if there is an undirected path $\pi$ between $j$ and $k$, such that 
	\begin{itemize}
		\item[(1)] If there is an edge between $a$ and $b$ on $\pi$ and an edge between $b$ and $c$ on $\pi$, and $b \in S$, then $b$ is a collider between $a$ and $c$ relative to $\pi$.
		\item[(2)] If $b$ is a collider between $a$ and $c$ relative to $\pi$, then there is a descendant $d$ of $b$ and $d \in S$. 
	\end{itemize}
\end{definition}

Finally, let $X_j \independent X_k \mid X_S$ with $S \subset V\setminus\{j, k\}$ denote the conditional independence (CI) statement that $X_j$ is conditionally independent (as determined by $\mathbb{P}$) of $X_k$ given the set of variables $X_S = \{ X_{\ell} \mid \ell \in S\}$, and let $X_j \notindependent X_k \mid X_S$ denote conditional dependence. The \emph{Causal Markov condition} associates CI statements of $\mathbb{P}$ with a directed graph $G$.

\begin{definition}[Causal Markov condition (CMC)~\citep{Spirtes2000}] 
	\label{Def:CMC}
	A probability distribution $\mathbb{P}$ over a set of vertices $V$ satisfies the \emph{Causal Markov condition} with respect to a (acyclic or cyclic) graph $G = (V, E)$ if for all $(j, k, S)$, $j$ is d-separated from $k$ given $S \subset V \setminus \{j,k\}$ in $G$, then 
	\begin{align*}
	X_j \independent X_k \mid X_S ~~\textrm{ according to $\mathbb{P}$}.
	\end{align*}
\end{definition}

The CMC applies to both acyclic and cyclic graphs (see e.g.,~\cite{Spirtes2000}). However not all directed graphical models satisfy the CMC. In order for a directed graphical model to satisfy the CMC, the joint distribution of a model should be defined by the \emph{generalized factorization}~\citep{Lauritzen1990}.

\begin{definition}[Generalized factorization~\citep{Lauritzen1990}]
	\label{Def:GenFac}
	The joint distribution of $X_S$, $f(X_S)$ \emph{factors according to directed graph} $G$ with vertices $V$ if and only if for every subset $S$ of $V$, 
	\begin{equation*}
	f(X_{\mbox{an}(S)}) = \prod_{j \in \mbox{an}(S)} g_j (X_{j},X_{\mbox{pa}(j)})
	\end{equation*}
	where $g_j$ is a non-negative function. 
\end{definition}

\cite{Spirtes1995} showed that the generalized factorization is a necessary and sufficient condition for directed graphical models to satisfy the CMC. For DAG models, $g_j(\cdot)$'s must correspond to a conditional probability distribution function whereas for graphical models with cycles, $g_j(\cdot)$'s need only be non-negative functions. As shown by~\cite{Spirtes1995}, a concrete example of a class of cyclic graphs that satisfy the factorization above is structural linear DCG equation models with additive independent errors. We will later use linear DCG models in our simulation study. 

In general, there are many directed graphs entailing the same d-separation rules. These graphs are \emph{Markov equivalent} and the set of Markov equivalent graphs is called a \emph{Markov equivalence class} (MEC)~\citep{Richardson1995, udea1991equivalence, Spirtes2000, verma1992algorithm}. For example, consider two 2-node graphs, $G_1: X_1 \rightarrow X_2$ and $G_2: X_1 \leftarrow X_2$. Then both graphs are Markov equivalent because they both entail no d-separation rules. Hence, $G_1$ and $G_2$ belong to the same \MEC~and hence it is impossible to distinguish two graphs by d-separation rules. The precise definition of the \MEC~is provided here. 

\begin{definition}[Markov Equivalence] Two directed graphs $G_1$ and $G_2$ are \emph{Markov equivalent} if any distribution which satisfies the CMC with respect to one graph satisfies the CMC with respect to the other, and vice versa. The set of graphs which are Markov equivalent to $G$ is denoted by $\mathcal{M}(G)$.
\end{definition}

The characterization of Markov equivalence classes is different for DAGs and DCGs. For DAGs,~\cite{udea1991equivalence} developed an elegant characterization of Markov equivalence classes defined by the \emph{skeleton} and \emph{v-structures}. The skeleton of a DAG model consists of the edges without directions.

However for DCGs, the presence of feedback means the characterization of the \MEC~for DCGs is considerably more involved. \cite{Richardson1996} provides a characterization. The presence of directed cycles changes the notion of adjacency between two nodes. In particular there are \emph{real} adjacencies that are a result of directed edges in the DCG and \emph{virtual} adjacencies which are edges that do not exist in the data-generating DCG but can not be recognized as a non-edge from the data. The precise definition of real and virtual adjacencies are as follows.
\begin{definition}[Adjacency \citep{Richardson1995}]
	\label{Def:Adj}
	Consider a directed graph $G = (V,E)$. 
	\begin{itemize}
		\item[(a)] For any $j, k \in V$, $j$ and $k$ are \emph{really adjacent} in $G$ if $j \rightarrow k$  or $j \leftarrow k$.
		\item[(b)] For any $j, k \in V$, $j$ and $k$ are \emph{virtually adjacent} if $j$ and $k$ have a common child $\ell$ such that $\ell$ is an ancestor of $j$ or $k$.
	\end{itemize}
\end{definition}

Note that a virtual adjacency can only occur if there is a cycle in the graph. Hence, DAGs have only real edges while DCGs can have both real edges and virtual edges. Figure~\ref{Fig:Sec2a} shows an example of a DCG with a virtual edge. In Figure~\ref{Fig:Sec2a}, a pair of nodes $(1,4)$ has a virtual edge (dotted line) because the triple $(1,4,2)$ forms a v-structure and the common child $2$ is an ancestor of $1$. This virtual edge is created by the cycle, $1 \rightarrow 2 \rightarrow 3 \rightarrow 1$. 
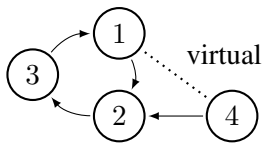
\begin{figure}[!htb]
	\centering
	\begin {tikzpicture}[-latex ,auto ,node distance =2 cm and 3cm ,on grid ,
	state/.style ={ circle ,top color =white , bottom color = white ,
		draw, black, thick , text=black , minimum width = 2mm}]
	\node[state] (A) at (1.5,1.1){$1$};
	\node[state] (B) at (0.35,0.55) {$3$};
	\node[state] (C) at (1.5,0) {$2$};
	\node[state] (D) at (3,0.0) {$4$};
	\path (B) edge [bend right = -25, shorten >=1pt, shorten <=1pt ] node[above] { } (A);
	\path (C) edge [bend right = -25, shorten >=1pt, shorten <=1pt] node[above] { } (B);
	\path (D) edge [bend right = 0, shorten >=1pt, shorten <=1pt] node[above] { } (C);
	\path (A) edge [bend right = -25, shorten >=1pt, shorten <=1pt] node[above] {} (C);
	\draw[-, dotted, shorten >=2pt, shorten <=2pt, thick , bend left = 0] 
	(A) to node[above right] {virtual} (D); 
\end{tikzpicture}
\caption{4-node example for a virtual edge}
\label{Fig:Sec2a}
\end{figure}

Virtual edges generate different types of relationships involving unshielded triples: (1) an unshielded triple $(j,k,\ell)$ (that is $j-\ell-k$) is called a \emph{conductor} if $\ell$ is an ancestor of $j$ or $k$; (2) an unshielded triple $(j,k,\ell)$ is called a \emph{perfect non-conductor} if $\ell$ is a descendant of the common child of $j$ and $k$; and (3) an unshielded triple $(j,k,\ell)$ is called an \emph{imperfect non-conductor} if the triple is not a conductor or a perfect non-conductor. 

Intuitively, the concept of (1) a conductor is analogous to the notion of a non v-structure in DAGs because for example suppose that an unshielded triple $(j,k,\ell)$ is a conductor, then $j$ is d-connected to $k$ given any set $S$ which does not contain $\ell$. Moreover, (2) a perfect non-conductor is analogous to a v-structure because suppose that $(j,k,\ell)$ is a perfect non-conductor, then $j$ is d-connected to $k$ given any set $S$ which contains $\ell$. However, there is no analogous notion of an imperfect non-conductor for DAG models. We see throughout this paper that this difference creates a major challenge in inferring DCG models from the underlying distribution $\mathbb{P}$. As shown by~\cite{Richardson1994} (Cyclic Equivalence Theorem), a necessary (but not sufficient) condition for two DCGs to belong to the same \MEC~is that they share the same real plus virtual edges and the same (1) conductors, (2) perfect non-conductors and (3) imperfect non-conductors. However unlike for DAGs, this condition is not sufficient for Markov equivalence. A complete characterization of Markov equivalence is provided in~\cite{Richardson1994, Richardson1995} and since it is quite involved, we do not include here.

Even if we weaken the goal to inferring the \MEC~for a DAG or DCG, the CMC is insufficient for discovering the true \MEC~$\mathcal{M}(G^*)$ because there are many graphs satisfying the CMC, which do not belong to $\mathcal{M}(G^*)$. For example, any fully-connected graph always satisfies the CMC because it does not entail any d-separation rules. Hence, in order to identify the true \MEC~given the distribution $\mathbb{P}$, stronger identifiability assumptions that force the removal of edges are required.

\subsection{Faithfulness and minimality assumptions}

In this section, we discuss prior work on identifiability assumptions for both DAG and DCG models. To make the notion of identifiability and our assumptions precise, we need to introduce the notion of a true data-generating graphical model $(G^*, \mathbb{P})$. All we observe is the distribution (or samples from) $\mathbb{P}$, and we know the graphical model $(G^*, \mathbb{P})$ satisfies the CMC. Let $CI(\mathbb{P})$ denote the set of conditional independence statements corresponding to $\mathbb{P}$. The graphical model $(G^*, \mathbb{P})$ is \emph{identifiable} if the Markov equivalence class of the graph $\mathcal{M}(G^*)$ can be uniquely determined based on $CI(\mathbb{P})$. For a directed graph $G$, let $E(G)$ denote the set of directed edges, $S(G)$ denote the set of edges without directions, also referred to as the skeleton, and $D_{sep}(G)$ denote the set of d-separation rules entailed by $G$. 

One of the most widely imposed identifiability assumptions for both DAG and DCG models is the \emph{causal faithfulness condition} (CFC)~\citep{Spirtes2000} also referred to as the stability condition in \cite{Pearl2014}. A directed graph is \emph{faithful} to a probability distribution if there is no probabilistic independence in the distribution that is not entailed by the CMC. The CFC states that the graph is faithful to the true probability distribution. 
\begin{definition}[Causal Faithfulness condition (CFC)~\citep{Spirtes2000}]
	\label{Def:CFC}
	Consider a directed graphical model $(G^*, \mathbb{P})$. A graph $G^*$ is \emph{faithful} to $\mathbb{P}$ if and only if for any $j,k \in V$ and any subset $S \subset V \setminus \{j,k\}$,
	\begin{equation*}
	j \textrm{ d-separated from } k \mid S \iff X_j \independent X_k \mid X_S \textrm{ according to $\mathbb{P}$}.
	\end{equation*}
\end{definition}
While the CFC is sufficient to guarantee identifiability for many polynomial-time search algorithms~\citep{claassen2013learning, Glymour1987, hyttinen2012causal,  Richardson1996, Richardson1995, Spirtes2000} for both DAGs and DCGs, the CFC is known to be a very strong assumption (see e.g.,~\cite{forster2015frugal, Raskutti2013, Uhler2013}) that is often not satisfied in practice. Hence, milder identifiability assumptions have been considered. 

Minimality assumptions, notably the \emph{P-minimality}~\citep{pearl2000} and SGS-minimality~\citep{Glymour1987} assumptions are two such assumptions. The P-minimality assumption asserts that for directed graphical models satisfying the CMC, graphs that entail more d-separation rules are preferred. For example, suppose that there are two graphs $G_1$ and $G_2$ which are not Markov equivalent. $G_1$ is \emph{strictly preferred} to $G_2$ if $D_{sep}(G_2) \subset D_{sep}(G_1)$. The P-minimality assumption asserts that no graph is strictly preferred to the true graph $G^*$. The SGS-minimality assumption asserts that there exists no proper sub-graph of $G^*$ that satisfies the CMC with respect to the probability distribution $\mathbb{P}$. To define the term sub-graph precisely, $G_1$ is a sub-graph of $G_2$ if $E(G_1) \subset E(G_2)$ and $E(G_1) \neq E(G_2)$. \cite{Zhang2013} proved that the SGS-minimality assumption is weaker than the P-minimality assumption which is weaker than the CFC for both DAG and DCG models. While~\cite{Zhang2013} states the results for DAG models, the result easily extends to DCG models.

\begin{theorem}[Sections 4 and 5 in~\cite{Zhang2013}] 
	\label{Thm:Sec2a}
	If a directed graphical model $(G^*, \mathbb{P})$ satisfies 
	\begin{itemize}
		\item[(a)] the CFC, it satisfies the P-minimality assumption.
		\item[(b)] the P-minimality assumption, it satisfies the SGS-minimality assumption.
	\end{itemize}
\end{theorem}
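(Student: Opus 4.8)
The plan is to prove both parts by translating the set-based definitions into containments between $D_{sep}(\cdot)$ and $CI(\mathbb{P})$, and then to build part (b) on a monotonicity lemma for d-separation under edge deletion. The elementary observation underlying everything is that the \CMC~for a graph $G$ is exactly the containment $D_{sep}(G) \subseteq CI(\mathbb{P})$, since it asserts that every d-separation entailed by $G$ is a conditional independence of $\mathbb{P}$; the \CFC~strengthens this to the equality $D_{sep}(G^*) = CI(\mathbb{P})$.

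For part (a) I would argue by contradiction. Suppose P-minimality fails, so there is a graph $G$ satisfying the \CMC~that is strictly preferred to $G^*$, i.e. $D_{sep}(G^*) \subsetneq D_{sep}(G)$. Since $G$ satisfies the \CMC~we have $D_{sep}(G) \subseteq CI(\mathbb{P})$, and combining this with the \CFC~equality yields $D_{sep}(G^*) \subsetneq D_{sep}(G) \subseteq CI(\mathbb{P}) = D_{sep}(G^*)$, which is absurd. Hence no such $G$ exists and P-minimality holds. This argument never invokes acyclicity, so it is identical for DAG and DCG models.

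For part (b) I would argue the contrapositive: if SGS-minimality fails then so does P-minimality. The technical core is a monotonicity lemma: whenever $E(G) \subseteq E(G^*)$ one has $D_{sep}(G^*) \subseteq D_{sep}(G)$, that is, deleting edges can only create d-separations. I would prove this directly from the d-connection definition. Every undirected path in $G$ is a path in $G^*$ with the same edge orientations, hence has exactly the same colliders relative to $\pi$; moreover every directed path in $G$ is a directed path in $G^*$, so $\mbox{de}_G(b) \subseteq \mbox{de}_{G^*}(b)$ for each node $b$. Consequently a path that d-connects $j$ and $k$ given $S$ in $G$ still satisfies conditions (1)--(2) when read in $G^*$, so it d-connects them there too; contrapositively, every d-separation of $G^*$ is a d-separation of $G$. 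Because this uses only the cyclic d-connection criterion, it transfers verbatim to both DAGs and DCGs. Now if SGS-minimality fails there is a proper subgraph $G \subsetneq G^*$ satisfying the \CMC; by the lemma $D_{sep}(G^*) \subseteq D_{sep}(G)$, and if this inclusion is strict then $G$ is a \CMC~graph strictly preferred to $G^*$, contradicting P-minimality.

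The remaining and genuinely delicate step is to upgrade $D_{sep}(G^*) \subseteq D_{sep}(G)$ to a strict inclusion, equivalently to show that a proper subgraph is never Markov equivalent to $G^*$. For DAG models this is immediate: deleting a directed edge strictly shrinks the skeleton, and Markov equivalent DAGs share a skeleton, so $G$ and $G^*$ entail different d-separations. I expect this to be the main obstacle in the cyclic case, because deleting a real edge $j\to k$ need not leave $j$ and $k$ d-separable; the cycle structure may render them virtually adjacent in $G$, so the clean skeleton argument breaks down. To handle DCGs I would instead appeal to the characterization of Markov equivalence underlying the Cyclic Equivalence Theorem, arguing that a proper subgraph cannot simultaneously preserve the real-plus-virtual adjacencies and the conductor / perfect non-conductor / imperfect non-conductor structure that Markov equivalence demands, since deleting a real edge alters the ancestor relations on which those classifications rest. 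Particular care is needed with two-cycles, where one edge of a pair $j\leftrightarrow k$ can sometimes be removed without changing the entailed d-separations; this boundary case is precisely where the DAG argument fails to transfer, and where the cyclic characterization must be brought to bear to secure strictness.
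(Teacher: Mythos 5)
First, a point of comparison: the paper does not actually prove this theorem --- it is imported from \cite{Zhang2013}, which treats DAG models, together with the one-line remark that the result ``easily extends'' to DCG models, so there is no in-paper argument to check your proposal against. On its own merits, your part (a) is correct and is the standard argument: the CMC for $G$ is exactly $D_{sep}(G) \subseteq CI(\mathbb{P})$, the CFC is the equality $D_{sep}(G^*) = CI(\mathbb{P})$, and a Markovian $G$ with $D_{sep}(G^*) \subsetneq D_{sep}(G)$ would give $D_{sep}(G^*) \subsetneq D_{sep}(G) \subseteq CI(\mathbb{P}) = D_{sep}(G^*)$, which is absurd. Your monotonicity lemma (edge deletion only creates d-separations) is also correct for the cyclic d-connection criterion, and your skeleton-based strictness argument for DAGs is exactly the classical one.

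The genuine gap is the strictness step for DCGs, and it is not merely unproved in your sketch --- the statement you propose to establish, that a proper subgraph can never be Markov equivalent to $G^*$, is \emph{false}, so no appeal to the Cyclic Equivalence Theorem can secure it. Take $V = \{1,2\}$, let $G^*$ be the two-cycle $1 \to 2$, $2 \to 1$, and let $\mathbb{P}$ be any distribution with $X_1 \notindependent X_2$ (e.g.\ a linear Gaussian feedback model). The subgraph $G$ with the single edge $1 \to 2$ satisfies $E(G) \subsetneq E(G^*)$, yet removing one edge of a two-cycle leaves the pair really adjacent, so by Lemma~\ref{Lem:Sec3a} no new d-separation appears: $D_{sep}(G) = D_{sep}(G^*) = \emptyset$ and $G \in \mathcal{M}(G^*)$. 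Consequently $(G^*, \mathbb{P})$ satisfies the CMC and P-minimality (every Markovian graph on two nodes entails exactly the empty set of d-separation rules, so nothing is strictly preferred), but violates SGS-minimality as literally defined. So your contrapositive cannot be closed, and in fact part (b) itself fails for DCGs under the paper's definitions unless SGS-minimality is read modulo Markov equivalence, i.e.\ as asserting no Markovian proper subgraph \emph{outside} $\mathcal{M}(G^*)$ exists. Under that reading your argument does go through: a Markovian proper subgraph $G \notin \mathcal{M}(G^*)$ has $D_{sep}(G^*) \subseteq D_{sep}(G)$ by your lemma, the sets differ because non-equivalent graphs entail different d-separation rules, hence the inclusion is strict and P-minimality is violated. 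You correctly sensed that two-cycles are the obstruction, but they are a counterexample forcing a reformulation of the statement, not a boundary case that a finer analysis of the cyclic equivalence characterization can overcome.
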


\subsection{Sparsest Markov Representation (SMR) for DAG models}

While the minimality assumptions are milder than the CFC, neither the P-minimality nor SGS-minimality assumptions imply identifiability of the MEC for $G^*$. Recent work by~\cite{Raskutti2013} developed the \emph{sparsest Markov representation} (SMR) assumption and a slightly weaker version later referred to as
\emph{frugality} assumption~\citep{forster2015frugal} which applies to DAG models. The \SMR~assumption which we refer to here as the identifiable \SMR~assumption states that the true DAG model is the graph satisfying the CMC with the fewest edges. Here we say that a DAG~$G_1$ is \emph{strictly sparser} than a DAG~$G_2$ if $G_1$ has \emph{fewer} edges than $G_2$. 

\begin{definition}[Identifiable \SMR~\citep{Raskutti2013}]
	\label{Def:SMR}
	A DAG model $(G^*,\mathbb{P})$ satisfies the identifiable \SMR~assumption if $(G^* ,\mathbb{P})$ satisfies the CMC and $|S(G^*)| < |S(G)|$ for every DAG $G$ such that $(G ,\mathbb{P})$ satisfies the CMC and $G \notin \mathcal{M}(G^*)$.
\end{definition} 

The identifiable SMR assumption is strictly weaker than the CFC while also ensuring a method known as the Sparsest Permutation (SP) algorithm \citep{Raskutti2013} recovers the true MEC. Hence the identifiable SMR assumption guarantees identifiability of the MEC for DAGs. A slightly weaker notion which we refer to as the weak SMR assumption does not guarantee model identifiability.

\begin{definition}[Weak \SMR~(Frugality) \citep{forster2015frugal}]
	\label{Def:Fru}
	A DAG model $(G^* ,\mathbb{P})$ satisfies the weak \SMR~assumption if $(G^* ,\mathbb{P})$ satisfies the CMC and $|S(G^*)| \leq |S(G)|$ for every DAG $G$ such that $(G ,\mathbb{P})$ satisfies the CMC and $G \notin \mathcal{M}(G^*)$.
\end{definition}
A comparison of \SMR/frugality to the CFC and the minimality assumptions for DAG models is provided in~\cite{Raskutti2013} and~\cite{forster2015frugal}.

\begin{theorem}[Theorems 2.5 and 2.8 in \cite{Raskutti2013}, and Theorem 3 in \cite{forster2015frugal}]
	\label{Thm:Sec2b}
	If a DAG model $(G^*, \mathbb{P})$ satisfies
	\begin{itemize}
		\item[(a)] the CFC, it satisfies the identifiable \SMR~assumption and consequently weak \SMR~assumption.
		\item[(b)] the weak \SMR~assumption, it satisfies the P-minimality assumption and consequently the SGS-minimality assumption.
		\item[(c)] the identifiable \SMR~assumption, $G^*$ is identifiable up to the true MEC $\mathcal{M}(G^*)$. 
	\end{itemize}
\end{theorem}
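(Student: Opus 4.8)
The plan is to prove each of the three parts of Theorem~\ref{Thm:Sec2b} separately, working directly from the definitions of the CFC, the identifiable and weak SMR assumptions, and the minimality assumptions, and leveraging Theorem~\ref{Thm:Sec2a} to avoid re-deriving the minimality chain.

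For part~(a), I would argue by contradiction. Suppose $(G^*,\mathbb{P})$ satisfies the CFC but fails the identifiable SMR assumption; then there is a DAG $G$ with $(G,\mathbb{P})$ satisfying the CMC, $G\notin\mathcal{M}(G^*)$, and $|S(G)|\leq|S(G^*)|$. The key step is to show that under the CFC, $G^*$ entails exactly the CI statements in $CI(\mathbb{P})$, so any competing $G$ satisfying the CMC can only entail a subset of the d-separation rules that are realized as independences in $\mathbb{P}$. Since $G$ has at most as many edges as $G^*$ but lies outside the true MEC, I would use the standard DAG fact that a strictly sparser or equally sparse Markov DAG outside the MEC must entail an independence not present in $\mathbb{P}$ (violating the CMC) or must fail to entail some dependence forced by faithfulness. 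Concretely, faithfulness pins down the skeleton and v-structures, and any Markov DAG with $\leq|S(G^*)|$ edges that reproduces all independences of $\mathbb{P}$ must share the skeleton and v-structures of $G^*$, hence lie in $\mathcal{M}(G^*)$ --- a contradiction. Weak SMR then follows immediately since strict inequality implies non-strict inequality.

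For part~(b), I would show the weak SMR assumption implies P-minimality, after which Theorem~\ref{Thm:Sec2a}(b) delivers SGS-minimality for free. Suppose $(G^*,\mathbb{P})$ satisfies weak SMR but violates P-minimality: then some graph $G$ with $(G,\mathbb{P})$ Markov and $G\notin\mathcal{M}(G^*)$ is strictly preferred to $G^*$, i.e.\ $D_{sep}(G^*)\subset D_{sep}(G)$. The central step is to argue that strictly more d-separation rules forces strictly fewer edges: if $G$ entails every d-separation rule of $G^*$ and at least one more, then every edge-removal that $G$ witnesses corresponds to an independence, and $G$ cannot have more edges than $G^*$ while d-separating a strictly larger set of pairs. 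I would make this precise using the fact that adjacency in a DAG is equivalent to non-d-separability by any conditioning set, so $D_{sep}(G^*)\subset D_{sep}(G)$ yields $S(G)\subseteq S(G^*)$ with the strict containment of d-separation rules forcing $|S(G)|<|S(G^*)|$, contradicting weak SMR.

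For part~(c), identifiability up to $\mathcal{M}(G^*)$, the plan is to exhibit that the true MEC is the unique minimizer of edge-count among all CMC-satisfying DAGs, which is essentially the content of the identifiable SMR definition itself, and then note that any search returning a sparsest Markov DAG must land in $\mathcal{M}(G^*)$. By Definition~\ref{Def:SMR}, every $G\notin\mathcal{M}(G^*)$ satisfying the CMC has strictly more edges than $G^*$, so $G^*$ (and every member of its MEC, which shares its skeleton) is the strict sparsest Markov representation; any two sparsest Markov DAGs must therefore be Markov equivalent, which is exactly identifiability of the MEC from $CI(\mathbb{P})$. The main obstacle across all three parts is part~(a): the delicate step is establishing that an equally-sparse Markov DAG outside the MEC cannot reproduce all independences of a faithful distribution, which requires carefully invoking the skeleton-and-v-structure characterization of DAG Markov equivalence rather than a purely counting argument. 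The remaining parts are comparatively routine once the definitions are unwound.
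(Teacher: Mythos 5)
The paper itself does not prove Theorem~\ref{Thm:Sec2b}: it is imported by citation from \cite{Raskutti2013} and \cite{forster2015frugal}, so the only in-paper comparison points are the DCG analogues built from Lemmas~\ref{Lem:Sec3a} and~\ref{Lem:Sec3b} and Theorem~\ref{Thm:Sec3a}. Your reconstruction specializes essentially that machinery to DAGs, and parts (a) and (c) are sound: under the CFC any Markov $G$ satisfies $D_{sep}(G)\subseteq CI(\mathbb{P})=D_{sep}(G^*)$; the adjacency-iff-never-d-separated fact gives $S(G^*)\subseteq S(G)$, which already kills any strictly sparser competitor; an equally sparse competitor then has $S(G)=S(G^*)$, and your appeal to the skeleton-plus-v-structure characterization of DAG Markov equivalence is exactly the right (and DAG-specific) tool to rule out a same-skeleton Markov DAG outside $\mathcal{M}(G^*)$. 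For (c), since Markov-equivalent DAGs share a skeleton, the set of sparsest Markov DAGs is exactly $\mathcal{M}(G^*)$, which is determined by $CI(\mathbb{P})$. One phrase in (a) is off: a competing Markov $G$ need not ``reproduce all independences of $\mathbb{P}$''; the CMC only forces $D_{sep}(G)\subseteq CI(\mathbb{P})$, and a Markov $G$ may entail strictly fewer d-separations. Your surrounding subset argument is the correct one, so this is cosmetic.

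The genuine soft spot is (b). You correctly derive $S(G)\subseteq S(G^*)$ from $D_{sep}(G^*)\subset D_{sep}(G)$, but then assert that strict containment of d-separation rules ``forces'' $|S(G)|<|S(G^*)|$ via a counting argument about edge removals. The adjacency fact yields only the non-strict $|S(G)|\leq|S(G^*)|$; combined with weak \SMR~this gives $S(G)=S(G^*)$, and the contradiction must then come from the DAG-only statement that two DAGs with identical skeletons cannot entail strictly nested d-separation sets --- the second sentence of Lemma~\ref{Lem:Sec3b}(b), which you invoke in (a) but conspicuously not in (b). Your counting sketch uses nothing DAG-specific, and no such argument can work: for DCGs the implication ``weak \SMR~$\Rightarrow$ P-minimality'' is false, as Theorem~\ref{Thm:Sec3a}(d) shows via Figure~\ref{Fig:Sec3b}, where $G_1$ and $G_2$ have identical skeletons yet $D_{sep}(G_1)\subsetneq D_{sep}(G_2)$. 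To close (b), argue as in (a): with $S(G)=S(G^*)$, either all unshielded triples have the same collider status, whence $G\in\mathcal{M}(G^*)$ by the Verma--Pearl characterization (contradicting the choice of $G$), or some unshielded triple $(j,\ell,k)$ is a v-structure in exactly one of the two graphs, in which case $G^*$ entails a d-separation of $j$ and $k$ (given a set containing or excluding $\ell$, as appropriate) that $G$ does not, contradicting $D_{sep}(G^*)\subset D_{sep}(G)$. With that lemma supplied, your outline is a correct and faithful reconstruction of the cited results.
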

It is unclear whether the \SMR/frugality assumptions apply naturally to DCG models since the success of the \SMR~assumption relies on the local Markov property which is known to hold for DAGs but not DCGs~\citep{Richardson1994}. In this paper, we investigate the extent to which these identifiability assumptions apply to DCG models and provide a new principle for learning DCG models.

Based on this prior work, a natural question to consider is whether the identifiable and weak \SMR~assumptions developed for DAG models apply to DCG models and whether there are similar relationships between the CFC, identifiable and weak \SMR, and minimality assumptions. In this paper we address this question by adapting both identifiable and weak \SMR~assumptions to DCG models. One of the challenges we address is dealing with the distinction between real and virtual edges in DCGs. We show that unlike for DAG models, the identifiable \SMR~assumption is not necessarily a weaker assumption than the CFC. Consequently, we introduce a new principle which is the maximum d-separation rule (MDR) principle which chooses the directed Markov graph with the greatest number of d-separation rules. We show that our \MDR~principle is strictly weaker than the CFC and stronger than the P-minimality assumption, while also guaranteeing model identifiability for DCG models. Our simulation results complement our theoretical results, showing that the \MDR~principle is more successful than the CFC in terms of recovering the true \MEC~for DCG models.

\section{Sparsity and \SMR~for DCG models}

\label{SecSMRFrugality}

In this section, we extend notions of sparsity and the \SMR~assumptions to DCG models. As mentioned earlier, in contrast to DAGs, DCGs can have two different types of edges which are real and virtual edges. In this paper, we define the \emph{sparsest} DCG as the graph with the fewest \emph{total edges} which are virtual edges plus real edges. The main reason we choose total edges rather than just real edges is that all DCGs in the same Markov equivalence class (MEC) have the same number of total edges~\citep{Richardson1994}. However, the number of real edges may not be the same among the graphs even in the same \MEC. For example in Figure~\ref{Fig:Sec3a}, there are two different \MECs~and each \MEC~has two graphs: $G_1, G_2 \in \mathcal{M}(G_1)$ and $G_3, G_4 \in \mathcal{M}(G_3)$. $G_1$ and $G_2$ have $9$ total edges but $G_3$ and $G_4$ has $7$ total edges. On the other hand, $G_1$ has $6$ real edges, $G_2$ has $9$ real edges, $G_3$ has $5$ real edges, and $G_4$ has $7$ real edges (a bi-directed edge is counted as 1 total edge). For a DCG $G$, let $S(G)$ denote the \emph{skeleton} of $G$ where $(j,k) \in S(G)$ is a real or virtual edge.

\begin{figure}[!t]
	\centering
	\begin {tikzpicture}[ -latex ,auto,
	state/.style={circle, draw=black, fill= white, thick, minimum size= 2mm},
	state2/.style ={ rectangle, rounded corners,
		thick, text depth = 30mm, text centered, text width= 50mm, minimum height= 40mm, draw, black , text= black , minimum width = 72mm},
	label/.style={thick, minimum size= 2mm}
	]
	
	\node[state2] (M1) at (3.1,0.6) {$\mathcal{M}(G_1)$};
	\node[state2] (M2) at (11.15,0.6) {$\mathcal{M}(G_3)$};
	
	\node[state] (A)  at (0,0) {$1$};
	\node[state] (B)  at (1.3,0) {$2$};
	\node[state] (C)  at (2.6,0) {$3$};
	\node[state] (D)  at (1.3,1) {$4$};
	\node[state] (E)  at (1.3,2) {$5$};
	\node[label] (G1) at (1.3,-1.0) {$G_1$};
	
	\node[state] (A2) at (3.6,0)  {$1$};
	\node[state] (B2) at (4.9,0) {$2$};
	\node[state] (C2) at (6.2,0) {$3$};
	\node[state] (D2) at (4.9,1) {$4$};
	\node[state] (E2) at (4.9,2) {$5$};
	\node[label] (G2) at (4.9,-1.0) {$G_2$};
	
	\node[state] (A3)  at (8.0, 0) {$1$};
	\node[state] (B3)  at (9.3,0) {$2$};
	\node[state] (C3)  at (10.7,0) {$3$};
	\node[state] (D3)  at (9.3,1) {$4$};
	\node[state] (E3)  at (9.3,2) {$5$};
	\node[label] (G3) at (9.3,-1.0) {$G_3$};
	
	\node[state] (A4) at (11.7,0)  {$1$};
	\node[state] (B4) at (13.0,0) {$2$};
	\node[state] (C4) at (14.3,0) {$3$};
	\node[state] (D4) at (13.0,1) {$4$};
	\node[state] (E4) at (13.0,2) {$5$};
	\node[label] (G4) at (13.0,-1.0) {$G_4$};
	
	\path (A) edge [right=25] node[above]  { } (D);
	\path (B) edge [left =25] node[above]  { } (D);
	\path (C) edge [left =25] node[above]  { } (D);
	\path (D) edge [left =25] node[above] { } (E);
	\path (E) edge [bend right =35] node[above]  { } (A);
	\path (E) edge [bend left =35] node[above]  { } (C);
	\path (A) edge [-, dotted,thick ] node[above] { } (B);
	\path (B) edge [-, dotted,thick ] node[above] { } (C);
	\path (A) edge [-, dotted,thick, bend right =30 ] node[above] { } (C);
	
	\path (A2) edge [right=25] node[above]  { } (D2);
	\path (B2) edge [left =15] node[above]  { } (D2);
	\path (C2) edge [left =25] node[above]  { } (D2);
	\path (D2) edge [left =25] node[above] { } (E2);
	\path (E2) edge [bend right =35] node[above]  { } (A2);
	\path (E2) edge [bend left =35] node[above]  { } (C2);
	\path (A2) edge [ ] node[above] { } (B2);
	\path (B2) edge [ ] node[above] { } (C2);
	\path (A2) edge [bend right =30 ] node[above] { } (C2);
	
	\path (A3) edge [right=25] node[above]  { } (D3);
	\path (B3) edge [bend left =15] node[above]  { } (D3);
	\path (D3) edge [bend left =15] node[above]  { } (B3);
	\path (C3) edge [left =25] node[above]  { } (D3);
	\path (A3) edge [-,dotted,thick] node[above] { } (B3);
	\path (C3) edge [-,dotted,thick] node[above]  { } (B3);
	\path (A3) edge [bend right =45] node[above]  { } (C3);
	\path (D3) edge [left =45] node[above]  { } (E3);
	
	\path (A4) edge [right=25] node[above]  { } (D4);
	\path (B4) edge [left =25] node[above]  { } (D4);
	\path (C4) edge [left =25] node[above]  { } (D4);
	\path (A4) edge [left =25] node[above] { } (B4);
	\path (A4) edge [bend right =45] node[above]  { } (C4);
	\path (C4) edge [left =45] node[above]  { } (B4);
	\path (D4) edge [left =45] node[above]  { } (E4);
\end{tikzpicture}
\caption{5-node examples with different numbers of real and total edges}
\label{Fig:Sec3a}
\end{figure}
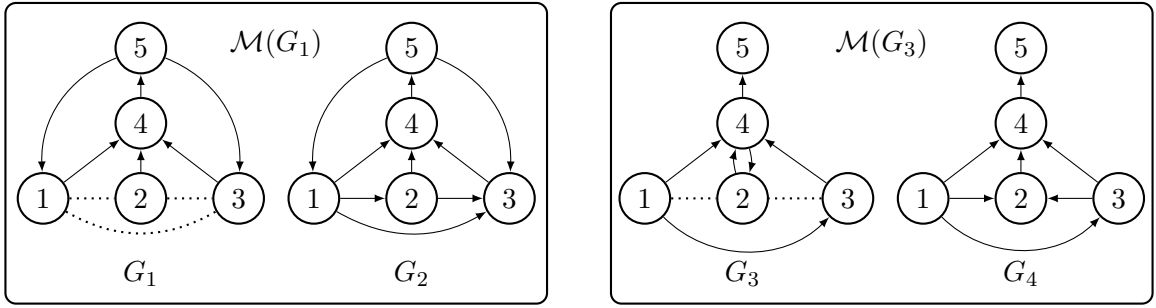

Using this definition of the skeleton $S(G)$ for a DCG $G$, the definitions of the identifiable and weak \SMR~assumptions carry over from DAG to DCG models. For completeness, we re-state the definitions here.
\begin{definition}[Identifiable \SMR~for DCG models]
	\label{DefSMRDCG}
	A DCG model $(G^* ,\mathbb{P})$ satisfies the identifiable \SMR~assumption if $(G^* ,\mathbb{P})$ satisfies the CMC and $|S(G^*)| < |S(G)|$ for every DCG $G$ such that $(G ,\mathbb{P})$ satisfies the CMC and $G \notin \mathcal{M}(G^*)$.
\end{definition} 

\begin{definition}[Weak \SMR~for DCG models]
	\label{DefFruDCG}
	A DCG model $(G^* ,\mathbb{P})$ satisfies the weak \SMR~assumption if $(G^* ,\mathbb{P})$ satisfies the CMC and $|S(G^*)| \leq |S(G)|$ for every DCG $G$ such that $(G ,\mathbb{P})$ satisfies the CMC and $G \notin \mathcal{M}(G^*)$.
\end{definition}

Both the \SMR~and SGS minimality assumptions prefer graphs with the fewest total edges. The main difference between the SGS-minimality assumption and the \SMR~assumptions is that the SGS-minimality assumption requires that there is no DCGs with a \emph{strict subset} of edges whereas the \SMR~assumptions simply require that there are no DCGs with \emph{fewer} edges.

Unfortunately as we observe later unlike for DAG models, the identifiable \SMR~assumption is not weaker than the CFC for DCG models. Therefore, the identifiable \SMR~assumption does not guarantee identifiability of \MECs~for DCG models. On the other hand, while the weak \SMR~assumption may not guarantee uniqueness, we prove it is a strictly weaker assumption than the CFC. We explore the relationships between the CFC, identifiable and weak \SMR, and minimality assumptions in the next section.

\subsection{Comparison of SMR, CFC and minimality assumptions for DCG models}

\label{SubSecSMR}

Before presenting our main result in this section, we provide a lemma which highlights the important difference between the \SMR~assumptions for graphical models with cycles compared to DAG models. Recall that the \SMR~assumptions involve counting the number of edges, whereas the CFC and P-minimality assumption involve d-separation rules. First, we provide a fundamental link between the presence of an edge in $S(G)$ and d-separation/connection rules.

\begin{lemma}
	\label{Lem:Sec3a}
	For a DCG $G$, $(j,k) \in S(G)$ if and only if $j$ is d-connected to $k$ given $S$ for all $S \subset V \setminus \{j,k\}$.
\end{lemma}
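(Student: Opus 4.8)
The plan is to prove this biconditional by establishing each direction separately, with the key insight that the statement converts the purely graphical notion of adjacency (real or virtual) in $S(G)$ into a statement about d-connection that holds uniformly over all conditioning sets.

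\textbf{The forward direction.} Suppose $(j,k) \in S(G)$, meaning $j$ and $k$ are either really adjacent or virtually adjacent. First I would handle the real adjacency case: if $j \to k$ or $j \leftarrow k$, then the single edge between $j$ and $k$ is itself an undirected path $\pi$ with no intermediate nodes, hence no colliders or non-colliders to worry about, so conditions (1) and (2) of the d-connection definition hold vacuously for every $S$. Thus $j$ is d-connected to $k$ given any $S \subseteq V \setminus \{j,k\}$. Next I would handle the virtual adjacency case using Definition~\ref{Def:Adj}(b): here $j$ and $k$ share a common child $\ell$ (so $j \to \ell \leftarrow k$) with $\ell$ an ancestor of $j$ or $k$. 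The path $\pi: j \to \ell \leftarrow k$ has $\ell$ as its unique collider. For arbitrary $S$, I must exhibit a d-connecting path. If $\ell \in \mbox{an}(S)$ — in particular $\ell$ itself or one of its descendants lies in $S$ — the collider is activated and $\pi$ d-connects. If $\ell \notin \mbox{an}(S)$, then since $\ell$ is an ancestor of (say) $j$, there is a directed path from $\ell$ into $j$; I would concatenate the directed path $k \to \ell \to \cdots \to j$, which routes through $\ell$ as a non-collider. The care needed here is to extract a genuinely d-connecting path from these ingredients for every $S$, checking both collider and non-collider conditions case by case.

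\textbf{The reverse direction.} For the contrapositive, suppose $(j,k) \notin S(G)$, so $j$ and $k$ are neither really nor virtually adjacent. I must produce a separating set $S$ witnessing that $j$ is \emph{not} d-connected to $k$ given $S$. The natural candidate is $S = \mbox{pa}(j) \cup \mbox{pa}(k)$ or a related set built from ancestors, analogous to the DAG argument where conditioning on parents blocks every path. The claim is that with this choice every undirected path between $j$ and $k$ is blocked: any path leaving $j$ through a parent is blocked at that parent (a non-collider in $S$), and the only way a path can evade the conditioning set is to enter $j$ or $k$ through a collider whose descendants are excluded from $S$ — which is precisely ruled out by the absence of a virtual adjacency. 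This is where I must lean on the definition of virtual adjacency, since the non-existence of a common child that is an ancestor is exactly what prevents an unblockable collider-path.

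\textbf{Main obstacle.} The hard part will be the reverse direction, specifically pinning down the correct choice of $S$ and rigorously verifying that no d-connecting path survives. In the cyclic setting the interplay between colliders, ancestors, and directed cycles is delicate: a node can be simultaneously an ancestor and a descendant of another, so the usual DAG intuition that conditioning on parents suffices must be checked against the possibility of cycle-induced paths. I expect to argue that if some path remained d-connecting for every admissible $S$, one could trace back through its colliders and non-colliders to exhibit either a real edge or a common-child-that-is-an-ancestor configuration, contradicting $(j,k) \notin S(G)$. Establishing this trace-back cleanly — showing that an unblockable path forces exactly a real or virtual adjacency — is the crux of the proof.
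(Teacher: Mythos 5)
Your forward direction is essentially correct and is in fact more explicit than the paper's own one-line treatment of that half; the only details to add are that when $\ell \notin \mbox{an}(S)$, every vertex on the directed path from $\ell$ to $j$ is a descendant of $\ell$ and hence automatically outside $S$ (otherwise $\ell \in \mbox{an}(S)$), and that if this directed path passes through $k$ you should instead use the path $j \to \ell \to \cdots \to k$. The genuine gap is in the reverse direction: your primary candidate $S = \mbox{pa}(j) \cup \mbox{pa}(k)$ fails for DCGs, and the fallback (``a related set built from ancestors'') is never pinned down, while you yourself flag the trace-back as unresolved. Concretely, take the paper's Figure~\ref{Fig:Sec2a} graph with edges $1 \to 2 \to 3 \to 1$ and $4 \to 2$, and set $j = 3$, $k = 4$. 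These nodes are neither really adjacent nor virtually adjacent (they have no common child), so $(3,4) \notin S(G)$; yet $\mbox{pa}(3) \cup \mbox{pa}(4) = \{2\}$, and the path $3 \to 1 \to 2 \leftarrow 4$ d-connects $3$ and $4$ given $\{2\}$, since the collider $2$ lies in $S$ (indeed $2$ is even its own descendant via the cycle) while the non-collider $1$ does not. This is exactly the cycle-induced activation your ``main obstacle'' paragraph worries about: in a DCG a parent of $j$ can simultaneously be a descendant of a child of $j$, so the DAG intuition that conditioning on parents blocks everything genuinely breaks (consistent with the paper's remark that the local Markov property fails for DCGs).

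The paper resolves precisely this point by choosing $S = \left(\mbox{an}(j) \cup \mbox{an}(k)\right) \setminus \{j,k\}$, and with that choice the trace-back you hoped for becomes short. On any putatively d-connecting path $\pi$, every collider lies in $\mbox{an}(S) \subseteq \mbox{an}(\{j,k\})$; every intermediate non-collider has an outgoing edge along $\pi$, and following such edges along $\pi$ one reaches either an endpoint or a collider, so every non-collider is an ancestor of $j$, of $k$, or of a collider, hence lies in $\mbox{an}(\{j,k\})$ and therefore in $S$ --- contradicting the requirement that non-colliders avoid $S$. Thus all intermediate vertices of $\pi$ are colliders; since two consecutive colliders are impossible, $\pi$ is either a single edge between $j$ and $k$ (a real adjacency) or $j \to \ell \leftarrow k$ with $\ell \in \mbox{an}(\{j,k\})$ (exactly a virtual adjacency), contradicting $(j,k) \notin S(G)$. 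Until you commit to the ancestral conditioning set and run this argument, the reverse direction of your proposal --- which you correctly identify as the crux --- remains unproven, and with the parent set it is false.
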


\begin{proof}
	First, we show that if $(j,k) \in S(G)$ then $j$ is d-connected to $k$ given $S$ for all $S \subset V \setminus \{j,k\}$. By the definition of d-connection/separation, there is no subset $S \subset V \setminus \{j,k\}$ such that $j$ is d-separated from $k$ given $S$. 
	Second, we prove that if $(j,k) \notin S(G)$ then there exists $S \subset V \setminus \{j,k\}$ such that $j$ is d-separated from $k$ given $S$. Let $S = \mbox{an}(j) \cup \mbox{an}(k)$. Then $S$ has no common children or descendants, otherwise $(j,k)$ are virtually adjacent. Then there is no undirected path between $j$ and $k$ conditioned on the union of ancestors of $j$ and $k$, and therefore $j$ is d-separated from $k$ given $S$. This completes the proof.
\end{proof}

Note that the above statement is true for real or virtual edges and not real edges alone. We now state an important lemma which shows the key difference in comparing the \SMR~assumptions to other identifiability assumptions (CFC, P-minimality, SGS-minimality) for 
graphical models with cycles, which does not arise for DAG models. 

\begin{lemma}
	
	\label{Lem:Sec3b}
	\begin{itemize}
		\item[(a)] For any two DCGs $G_1$ and $G_2$, $D_{sep}(G_1) \subseteq D_{sep}(G_2)$ implies $S(G_2) \subseteq S(G_1)$.
		
		\item[(b)] There exist two DCGs $G_1$ and $G_2$ such that $S(G_1) = S(G_2)$, but $D_{sep}(G_1)$ $\neq$ $D_{sep}(G_2)$ and $D_{sep}(G_1) \subset D_{sep}(G_2)$. For DAGs, no two such graphs exist.
	\end{itemize}
\end{lemma}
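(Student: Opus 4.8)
The plan is to dispatch part (a) immediately from Lemma~\ref{Lem:Sec3a}, and to treat part (b) as two separate tasks: exhibiting an explicit pair of DCGs that realize the strict containment, and proving that no DAG pair can.

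For part (a) I would argue by contrapositive on individual edges. Fix a pair $(j,k)$ and suppose $(j,k)\notin S(G_1)$. By Lemma~\ref{Lem:Sec3a} there is a set $S\subset V\setminus\{j,k\}$ with $j$ d-separated from $k$ given $S$ in $G_1$, i.e.\ $(j,k,S)\in D_{sep}(G_1)$. The hypothesis $D_{sep}(G_1)\subseteq D_{sep}(G_2)$ then gives $(j,k,S)\in D_{sep}(G_2)$, so $j$ is d-separated from $k$ given $S$ in $G_2$; applying the ``only if'' direction of Lemma~\ref{Lem:Sec3a} to $G_2$ yields $(j,k)\notin S(G_2)$. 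Hence $(j,k)\notin S(G_1)\Rightarrow (j,k)\notin S(G_2)$ for every pair, which is exactly $S(G_2)\subseteq S(G_1)$.

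For the existence statement in part (b) I would build an explicit small DCG in which a directed cycle forces an unshielded triple $(j,\ell,k)$ to be an \emph{imperfect non-conductor}, together with a companion DCG on the same vertex set whose real-plus-virtual skeleton is identical but in which the corresponding triple is a \emph{perfect non-conductor}, so that $\ell$ acts as a clean collider. Because a perfect non-conductor d-connects $j$ and $k$ only for conditioning sets containing $\ell$, whereas the irregular imperfect non-conductor leaves $j$ and $k$ d-connected for a strictly larger family of conditioning sets, the perfect-non-conductor graph entails every d-separation rule of the other and strictly more; labelling the latter $G_1$ and the former $G_2$ gives $D_{sep}(G_1)\subsetneq D_{sep}(G_2)$ with $S(G_1)=S(G_2)$. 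The main obstacle here is the bookkeeping of \emph{virtual} edges: reorienting edges to convert an imperfect non-conductor into a perfect one can silently create or destroy virtual adjacencies elsewhere, so the two graphs must be chosen so that all real and virtual adjacencies coincide. I would confirm both the skeleton equality and the strict containment by direct enumeration of the d-separation triples, relying on the definition of d-connection/separation.

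Finally, for the DAG half of part (b) I would show that equal skeleton together with $D_{sep}(G_1)\subseteq D_{sep}(G_2)$ forces $D_{sep}(G_1)=D_{sep}(G_2)$, thereby ruling out strict containment. By the characterization of Markov equivalence for DAGs through skeleton and v-structures~\citep{udea1991equivalence}, it suffices to show that $G_1$ and $G_2$ have the same v-structures. If some unshielded triple $(j,\ell,k)$ were a v-structure in one graph (say $G_a$) but not in the other ($G_b$), then on the one hand a suitable separating set omitting $\ell$ and its descendants d-separates $j$ and $k$ in $G_a$ but not in $G_b$, and on the other hand a separating set containing $\ell$ d-separates them in $G_b$ but not in $G_a$; hence $D_{sep}(G_a)$ and $D_{sep}(G_b)$ are incomparable, contradicting the assumed inclusion in either direction. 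Therefore the v-structures agree, the two DAGs are Markov equivalent, and $D_{sep}(G_1)=D_{sep}(G_2)$. This contrast is precisely the point of the lemma: the imperfect non-conductor, which has no DAG analogue, is what permits strict containment of d-separation rules under a fixed skeleton in the cyclic setting.
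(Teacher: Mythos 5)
Your part (a) is correct and is exactly the paper's route: an edge-by-edge contrapositive through Lemma~\ref{Lem:Sec3a}. (Your write-up in fact fixes a slip in the paper's own proof of (a), which swaps the roles of $G_1$ and $G_2$ when setting up the contradiction.) Your proof of the DAG claim in (b) is also sound --- equal skeletons plus an inclusion of d-separation sets force equal v-structures, hence Markov equivalence by the Verma--Pearl characterization~\citep{udea1991equivalence} --- and it supplies an argument the paper merely asserts without proof.

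The genuine gap is in the cyclic half of (b). That half is an existence claim, and you never exhibit a witness: what you give is a construction template whose driving step is the assertion that a same-skeleton graph with a \emph{perfect} non-conductor entails every d-separation rule of the graph with an \emph{imperfect} non-conductor, and strictly more. That assertion is both unproven and wrong as a general principle. First, a perfect non-conductor does not d-connect $j$ and $k$ ``only for conditioning sets containing $\ell$'': any set containing the common child $m$ of $j$ and $k$, or a descendant of $m$, d-connects them whether or not $\ell$ is in the set. Second, imperfect non-conductors can generate \emph{extra} d-separations rather than extra d-connections: in the graph with edges $A\to B$, $B\to C$, $C\to B$, $D\to C$, the triple $(A,D,B)$ is an imperfect non-conductor and $A$ is d-separated from $D$ given $\{B,C\}$ --- a rule no perfect (v-structure-like) configuration at that triple could entail, since conditioning on the common child opens the path. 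So the two d-separation sets in your template are in general incomparable, and the strict containment you need would have to be verified on a concrete example, which is precisely the step you defer (``direct enumeration'') without carrying out; your own worry about silently created or destroyed virtual adjacencies is well founded and never resolved. The paper sidesteps both difficulties in Figure~\ref{Fig:Sec3b}: it compares a \emph{conductor} with an imperfect non-conductor, obtained by reversing the single edge between $4$ and $5$, which leaves every real and virtual adjacency unchanged (the virtual edges $(1,3)$ and $(2,4)$ persist in both graphs), and the containment runs in the direction opposite to the one you posit --- the imperfect-non-conductor graph $G_2$ \emph{gains} the rule that $1$ is d-separated from $4$ given $\emptyset$, so $D_{sep}(G_1)\subset D_{sep}(G_2)$ strictly. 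Until you produce and check an explicit pair, the existence half of (b) remains unestablished in your argument.
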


\begin{proof}
	We begin with the proof of (a). Suppose that $S(G_1)$ is not a sub-skeleton of $S(G_2)$, meaning that there exists a pair $(j,k) \in S(G_1)$ and $(j,k) \notin S(G_2)$. By Lemma~\ref{Lem:Sec3a}, $j$ is d-connected to $k$ given $S$ for all $S \subset V \setminus \{j,k\}$ in $G_1$ while there exists $S \subset V \setminus \{j,k\}$ such that $j$ is d-separated from $k$ given $S$ entailed by $G_2$. Hence it is contradictory that $D_{sep}(G_1) \subset D_{sep}(G_2)$. For (b), we refer to the example in Figure~\ref{Fig:Sec3b}. In Figure~\ref{Fig:Sec3b}, the unshielded triple $(1, 4, 2)$ is a conductor in $G_1$ and an imperfect non-conductor in $G_2$ because of a reversed directed edge between $4$ and $5$. By the property of a conductor, $1$ is not d-separated from $4$ given the empty set for $G_1$. In contrast for $G_2$, $1$ is d-separated from $4$ given the empty set. Other d-separation rules are the same for both $G_1$ and $G_2$. 
	
	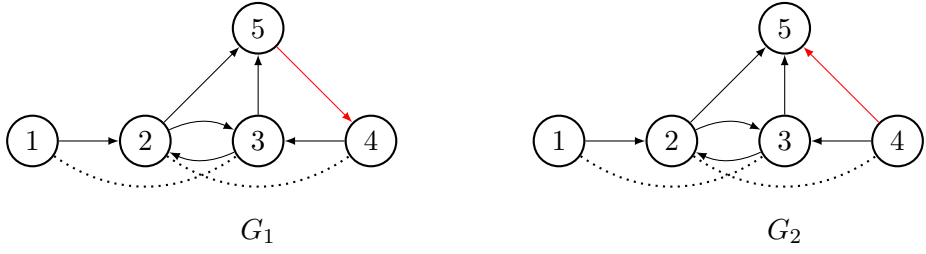
\begin{figure}[!t]
		\centering
		\begin {tikzpicture}[ -latex ,auto,
		state/.style={circle, draw=black, fill= white, thick, minimum size= 2mm},
		label/.style={thick, minimum size= 2mm}
		]
		\node[state] (A) at(0,0) {$1$};
		\node[state] (B) at(1.5,0) {$2$};
		\node[state] (C) at(3,0) {$3$};
		\node[state] (D) at(4.5,0) {$4$};
		\node[state] (E) at(3,1.5) {$5$};
		\node[label] (G1) at(3, -1.2) {$G_1$};
		
		\node[state] (A2) at(7,0) {$1$};
		\node[state] (B2) at(8.5,0) {$2$};
		\node[state] (C2) at(10,0) {$3$};
		\node[state] (D2) at(11.5,0) {$4$};
		\node[state] (E2) at(10,1.5) {$5$};
		\node[label] (G1) at(10, -1.2) {$G_2$};
		
		\path (A) edge [right=25] node[above]  { } (B);
		\path (B) edge [bend left =25] node[above]  { } (C);
		\path (C) edge [bend left =25] node[above]  { } (B);
		\path (D) edge [left =25] node[above] { } (C);
		\path (B) edge [left =25] node[above]  { } (E);
		\path (C) edge [left =25] node[above]  { } (E);
		\path (E) edge [left =25, color= red] node[above]  { } (D);
		\path (A) edge [-, dotted, bend right= 35, thick ] node[above]  { } (C);
		\path (B) edge [-, dotted, bend right= 35, thick ] node[above]  { } (D);
		
		\path (A2) edge [right=25] node[above]  { } (B2);
		\path (B2) edge [bend left =25] node[above]  { } (C2);
		\path (C2) edge [bend left =25] node[above]  { } (B2);
		\path (D2) edge [left =25] node[above] { } (C2);
		\path (B2) edge [left =25] node[above]  { } (E2);
		\path (C2) edge [left =25] node[above]  { } (E2);
		\path (D2) edge [left =25,  color= red] node[above]  { } (E2);
		\path (A2) edge [-, dotted, bend right= 35, thick ] node[above]  { } (C2);
		\path (B2) edge [-, dotted, bend right= 35, thick ] node[above]  { } (D2);
	\end{tikzpicture}
	\caption{5-node examples for Lemma~\ref{Lem:Sec3b} and Theorem~\ref{Thm:Sec3a}}
	\label{Fig:Sec3b}
	\end{figure}

\end{proof}

Lemma~\ref{Lem:Sec3b} (a) holds for both DAGs and DCGs, and allows us to conclude a subset-superset relation between edges in the skeleton and d-separation rules in a graph $G$. Part (b) is where there is a key difference DAGs and directed graphs with cycles. Part (b) asserts that there are examples in which the edge set in the skeleton may be totally equivalent, yet one graph entails a strict superset of d-separation rules.

Now we present the main result of this section which compares the identifiable and weak \SMR~assumptions with the CFC and P-minimality assumption.

\begin{theorem}
	\label{Thm:Sec3a}
	For DCG models,
	\begin{itemize}
		\item[(a)] the weak \SMR~assumption is weaker than the CFC.
		\item[(b)] there exists a DCG model~$(G, \mathbb{P})$ satisfying the CFC that does not satisfy the identifiable \SMR~assumption.
		\item[(c)] the identifiable \SMR~assumption is stronger than the P-minimality assumption. 
		\item[(d)] there exists a DCG model~$(G, \mathbb{P})$ satisfying the weak \SMR~assumption that does not satisfy the P-minimality assumption.
	\end{itemize}
\end{theorem}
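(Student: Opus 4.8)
The plan is to prove the two implications (a) and (c) directly from the skeleton/d-separation correspondence in Lemma~\ref{Lem:Sec3b}(a), and to produce both separating examples (b) and (d) from the single pair of graphs in Figure~\ref{Fig:Sec3b}, exploiting the phenomenon isolated in Lemma~\ref{Lem:Sec3b}(b). For (a), suppose $(G^*,\mathbb{P})$ satisfies the CFC. For any DCG $G$ with $(G,\mathbb{P})$ satisfying the CMC I would note that the CMC gives $D_{sep}(G) \subseteq CI(\mathbb{P})$, while faithfulness gives $CI(\mathbb{P}) = D_{sep}(G^*)$, so $D_{sep}(G) \subseteq D_{sep}(G^*)$. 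Lemma~\ref{Lem:Sec3b}(a) then yields $S(G^*) \subseteq S(G)$, hence $|S(G^*)| \leq |S(G)|$, which is exactly the inequality in the weak \SMR~assumption; since this holds for every competing $G$, in particular those with $G \notin \mathcal{M}(G^*)$, weak \SMR~follows.

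For (c) I would argue by contradiction. Assume identifiable \SMR~holds but P-minimality fails, so there is a $G$ satisfying the CMC with $G \notin \mathcal{M}(G^*)$ and $D_{sep}(G^*) \subsetneq D_{sep}(G)$. Applying Lemma~\ref{Lem:Sec3b}(a) to $D_{sep}(G^*) \subseteq D_{sep}(G)$ gives $S(G) \subseteq S(G^*)$, hence $|S(G)| \leq |S(G^*)|$. But identifiable \SMR~forces $|S(G^*)| < |S(G)|$ for this same $G$, a contradiction; hence P-minimality holds. The fact that the assumption is \emph{strictly} stronger follows from part (b) together with Theorem~\ref{Thm:Sec2a}(a): the model constructed in (b) satisfies the CFC, hence P-minimality, yet fails identifiable \SMR.

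Both counterexamples (b) and (d) come from the graphs $G_1$ (with the conductor $(1,4,2)$) and $G_2$ (with the imperfect non-conductor) of Figure~\ref{Fig:Sec3b}, for which $S(G_1)=S(G_2)$ while $D_{sep}(G_1) \subsetneq D_{sep}(G_2)$, the one extra rule being that $1$ is d-separated from $4$ given the empty set. I would fix a distribution $\mathbb{P}$ faithful to $G_2$ (a generic linear DCG parametrization suffices). For (b), take $G^* = G_2$: faithfulness gives the CFC, but $G_1$ satisfies the CMC (since $D_{sep}(G_1) \subset D_{sep}(G_2) = CI(\mathbb{P})$), is not Markov equivalent to $G_2$, and has $|S(G_1)| = |S(G_2)|$, so the strict inequality defining identifiable \SMR~fails. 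For (d), take $G^* = G_1$ with the \emph{same} $\mathbb{P}$: P-minimality fails because $G_2$ satisfies the CMC, lies outside $\mathcal{M}(G_1)$, and strictly refines $G_1$ ($D_{sep}(G_1) \subsetneq D_{sep}(G_2)$); yet weak \SMR~holds, since any CMC graph $G$ obeys $D_{sep}(G) \subseteq CI(\mathbb{P}) = D_{sep}(G_2)$, whence Lemma~\ref{Lem:Sec3b}(a) gives $|S(G_2)| \leq |S(G)|$, and $|S(G_1)| = |S(G_2)|$ then yields $|S(G_1)| \leq |S(G)|$.

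The main obstacle is not (a) and (c), which are routine once Lemma~\ref{Lem:Sec3b}(a) is available, but (b) and (d): the crux is recognizing that the skeleton-equal, d-separation-nested pair furnished by Lemma~\ref{Lem:Sec3b}(b) is precisely the configuration that decouples edge-counting from d-separation counting, and that one faithful distribution serves both examples under opposite designations of the true graph. I would therefore verify carefully that $\mathbb{P}$ can be chosen faithful to $G_2$ and that each claimed CMC membership holds, since the entire construction rests on the strict inclusion $D_{sep}(G_1) \subsetneq D_{sep}(G_2)$ coexisting with $|S(G_1)| = |S(G_2)|$.
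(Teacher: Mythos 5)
Your proof is correct, and for parts (a), (b), and (c) it coincides with the paper's own argument: (a) and (c) are exactly the paper's applications of Lemma~\ref{Lem:Sec3b}(a), and your (b) is the paper's use of the Figure~\ref{Fig:Sec3b} pair with a distribution faithful to $G_2$. The only genuine divergence is in (d). The paper does not reuse the faithful-to-$G_2$ distribution there; instead (Figure~\ref{Fig:App2} in the appendix) it builds an \emph{unfaithful} Gaussian linear parametrization of $G_1$ itself, choosing the weight $-\alpha_3\alpha_7$ on $X_2 \to X_5$ to cancel the path $X_2 \to X_3 \to X_5$, so that $CI(\mathbb{P}) = D_{sep}(G_1) \cup \{X_1 \independent X_4 \mid \emptyset \text{ or } X_5;\; X_1 \independent X_5 \mid \emptyset \text{ or } X_4\}$ --- a set strictly larger than $D_{sep}(G_2)$ --- and then argues weak \SMR~by pair-counting: since the extra CI statements concern only the pairs $(X_1,X_4)$ and $(X_1,X_5)$, Lemma~\ref{Lem:Sec3a} forces every graph satisfying the CMC to retain the remaining eight adjacencies, hence at least as many edges as $G_1$. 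Your route --- keep the same faithful-to-$G_2$ distribution as in (b), declare $G_1$ the true graph, and deduce weak \SMR~from $D_{sep}(G) \subseteq CI(\mathbb{P}) = D_{sep}(G_2)$ together with Lemma~\ref{Lem:Sec3b}(a) and $|S(G_1)| = |S(G_2)|$ --- is valid and more economical: one distribution serves both (b) and (d), with no cancellation bookkeeping, and it presupposes nothing beyond what the paper's own part (b) already assumes, namely that a distribution faithful to the cyclic graph $G_2$ exists (generic linear cyclic parametrizations supply this, as you note and should indeed verify). What the paper's construction buys in exchange is an explicit parametrization with displayed coefficients and an argument that does not require $CI(\mathbb{P})$ to equal $D_{sep}(G_2)$ exactly. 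Your added remark that strictness in (c) follows from (b) combined with Theorem~\ref{Thm:Sec2a}(a) is also correct and consistent with the paper's summary in Figure~\ref{Flowchart}.
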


\begin{proof}
	\begin{itemize}
		\item[(a)] The proof for (a) follows from Lemma~\ref{Lem:Sec3b} (a). If a DCG model~$(G^*, \mathbb{P})$ satisfies the CFC, then for any graph $G$ such that $(G, \mathbb{P})$ satisfies the CMC, $D_{sep}(G) \subseteq D_{sep}(G^*)$. Hence based on Lemma~\ref{Lem:Sec3b} (a), $S(G^*) \subseteq S(G)$ and $(G^*,\mathbb{P})$ satisfies the weak \SMR~assumption.
		\item[(b)] We refer to the example in Figure~\ref{Fig:Sec3b} where $(G_2, \mathbb{P})$ satisfies the CFC and fails to satisfy the identifiable \SMR~assumption because $S(G_1) = S(G_2)$ and $(G_1, \mathbb{P})$ satisfies the CMC.
		\item[(c)] The proof for (c) again follows from Lemma~\ref{Lem:Sec3b} (a). Suppose that a DCG model~$(G^*, \mathbb{P})$ fails to satisfy the P-minimality assumption. This implies that there exists a DCG $G$ such that $(G, \mathbb{P})$ satisfies the CMC, $G \notin \mathcal{M}(G^*)$ and $D_{sep}(G^*) \subset D_{sep}(G)$. Lemma~\ref{Lem:Sec3b} (a) implies $S(G) \subseteq S(G^*)$. Hence $G^*$ cannot have the fewest edges uniquely, therefore $(G^*, \mathbb{P})$ fails to satisfy the identifiable \SMR~assumption. 
		\item[(d)] We refer to the example in Figure~\ref{Fig:Sec3b} where $(G_1,\mathbb{P})$ satisfies the weak \SMR~assumption and fails to satisfy the P-minimality assumption. Further explanation is given in Figure~\ref{Fig:App2} in the appendix. 
	\end{itemize}
\end{proof}

Theorem~\ref{Thm:Sec3a} shows that if a DCG model $(G, \mathbb{P})$ satisfies the CFC, the weak \SMR~assumption is satisfied whereas the identifiable \SMR~assumption is not necessarily satisfied. For DAG models, the identifiable \SMR~assumption is strictly weaker than the CFC and the identifiable \SMR~assumption guarantees identifiability of the true \MEC. However, Theorem \ref{Thm:Sec3a} (b) implies that the identifiable \SMR~assumption is not strictly weaker than the CFC for DCG models. On the other hand, unlike for DAG models, the weak \SMR~assumption does not imply the P-minimality assumption for DCG models, according to (d). In Section~\ref{SecSimulation}, we implement an algorithm that uses the identifiable \SMR~assumption and the results seem to suggest that on average for DCG models, the identifiable \SMR~assumption is weaker than the CFC.

\section{New principle: Maximum d-separation rules (MDR)}

\label{SecMaxDSep}

In light of the fact that the identifiable \SMR~assumption does not lead to a strictly weaker assumption than the CFC, we introduce the maximum d-separation rules (MDR) assumption. The \MDR~assumption asserts that $G^*$ entails more d-separation rules than any other graph satisfying the CMC according to the given distribution $\mathbb{P}$. We use $CI(\mathbb{P})$ to denote the conditional independence (CI) statements corresponding to the distribution $\mathbb{P}$.

\begin{definition}[Maximum d-separation rules (MDR)]
	A DCG model $(G^* ,\mathbb{P})$ satisfies the maximum \emph{d-separation} rules (MDR) assumption if $(G^* ,\mathbb{P})$ satisfies the CMC and $|D_{sep}(G)| < |D_{sep}(G^*)|$ for every DCG $G$ such that $(G ,\mathbb{P})$ satisfies the CMC and $G \notin \mathcal{M}(G^*)$.
\end{definition}

There is a natural and intuitive connection between the MDR assumption and the P-minimality assumption. Both assumptions encourage DCGs to entail more d-separation rules. The key difference between the P-minimality assumption and the MDR assumption is that the P-minimality assumption requires that there is no DCGs that entail a \emph{strict superset} of d-separation rules whereas the MDR assumption simply requires that there are no DCGs that entail a \emph{greater number} of d-separation rules.

\subsection{Comparison of \MDR~to CFC and minimality assumptions for DCGs}

\label{SubSecMDROcc}

In this section, we provide a comparison of the MDR assumption to the CFC and P-minimality assumption. For ease of notation, let $\mathcal{G}_{M}(\mathbb{P})$ and $\mathcal{G}_{F}(\mathbb{P})$ denote the set of Markovian DCG models satisfying the MDR assumption and CFC, respectively. In addition, let $\mathcal{G}_{P}(\mathbb{P})$ denote the set of DCG models satisfying the P-minimality assumption.

\begin{theorem}
	\label{Thm:Sec4a}
	Consider a DCG model $(G^*, \mathbb{P})$. 
	\begin{itemize}
		\item[(a)] If $\mathcal{G}_F(\mathbb{P}) \neq \emptyset$, then $\mathcal{G}_F (\mathbb{P}) = \mathcal{G}_{M}(\mathbb{P})$. Consequently if $(G^*, \mathbb{P})$ satisfies the CFC, then $\mathcal{G}_F(\mathbb{P}) = \mathcal{G}_{M}(\mathbb{P}) = \mathcal{M}(G^*)$.
		\item[(b)] There exists a distribution $\mathbb{P}$ for which $\mathcal{G}_F(\mathbb{P}) = \emptyset$ while $(G^*, \mathbb{P})$ satisfies the \MDR~assumption and $\mathcal{G}_{M}(\mathbb{P}) = \mathcal{M}(G^*)$.
		\item[(c)] $\mathcal{G}_{M}(\mathbb{P}) \subseteq \mathcal{G}_{P}(\mathbb{P})$. 
		\item[(d)] There exists a distribution $\mathbb{P}$ for which $\mathcal{G}_{M}(\mathbb{P}) = \emptyset$ while $(G^*, \mathbb{P})$ satisfies the P-minimality assumption and $\mathcal{G}_{P}(\mathbb{P}) \supseteq \mathcal{M}(G^*)$.
	\end{itemize}
\end{theorem}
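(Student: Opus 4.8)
The plan is to dispatch the two inclusion statements (a) and (c) directly from the definitions, and to prove the two existence statements (b) and (d) by explicit DCG constructions. The single fact I would use throughout is that the CMC is equivalent to $D_{sep}(G) \subseteq CI(\mathbb{P})$ and faithfulness to $D_{sep}(G) = CI(\mathbb{P})$; hence a faithful graph attains the maximal value of $|D_{sep}(\cdot)|$ among all Markovian DCGs, and any two faithful graphs, both entailing exactly $CI(\mathbb{P})$, are Markov equivalent. For (a) I would first show $\mathcal{G}_F(\mathbb{P}) \subseteq \mathcal{G}_{M}(\mathbb{P})$: given faithful $G^*$ and any Markovian $G \notin \mathcal{M}(G^*)$ we have $D_{sep}(G) \subseteq CI(\mathbb{P}) = D_{sep}(G^*)$ with the inclusion strict (equality would force Markov equivalence), so $|D_{sep}(G)| < |D_{sep}(G^*)|$ and $G^*$ satisfies the \MDR~assumption. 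For the reverse inclusion, assuming $\mathcal{G}_F(\mathbb{P}) \neq \emptyset$, fix a faithful $G_F$; if some $G^* \in \mathcal{G}_{M}(\mathbb{P})$ were not faithful then $D_{sep}(G^*) \subsetneq CI(\mathbb{P}) = D_{sep}(G_F)$, so $G_F$ is a Markovian graph outside $\mathcal{M}(G^*)$ with strictly more d-separation rules, contradicting \MDR. The ``consequently'' clause follows since all faithful graphs share the d-separation set $CI(\mathbb{P}) = D_{sep}(G^*)$ and therefore constitute exactly $\mathcal{M}(G^*)$.

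Part (c) is a short cardinality argument. Suppose $G^* \in \mathcal{G}_{M}(\mathbb{P})$ but $G^* \notin \mathcal{G}_{P}(\mathbb{P})$. Then there is a Markovian $G$ with $D_{sep}(G^*) \subsetneq D_{sep}(G)$; a strict superset of a finite set has strictly larger cardinality, so $|D_{sep}(G)| > |D_{sep}(G^*)|$, and since the two d-separation sets differ we have $G \notin \mathcal{M}(G^*)$. This directly contradicts the \MDR~inequality $|D_{sep}(G)| < |D_{sep}(G^*)|$, so $\mathcal{G}_{M}(\mathbb{P}) \subseteq \mathcal{G}_{P}(\mathbb{P})$.

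Parts (b) and (d) I would establish through explicit small DCG examples, e.g. linear structural-equation models on five nodes in the spirit of Figure~\ref{Fig:Sec3b}. For (b) I would take a true DCG $G^*$ and choose the edge coefficients so that a single cancellation produces one conditional-independence statement in $CI(\mathbb{P})$ that is \emph{not} entailed by $D_{sep}(G^*)$. To see $\mathcal{G}_F(\mathbb{P}) = \emptyset$, I would argue via Lemma~\ref{Lem:Sec3a} that any faithful graph must drop the corresponding adjacency, yet doing so forces either a d-separation rule absent from $CI(\mathbb{P})$ (violating the CMC) or the loss of a genuine dependence, so no graph achieves $D_{sep}(G) = CI(\mathbb{P})$; simultaneously I would verify that $G^*$ still strictly maximizes $|D_{sep}(\cdot)|$ over non-equivalent Markovian DCGs, giving both the \MDR~property and $\mathcal{G}_{M}(\mathbb{P}) = \mathcal{M}(G^*)$. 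For (d) I would instead construct $\mathbb{P}$ admitting two non-Markov-equivalent Markovian DCGs $G^*$ and $G'$ that tie for the maximum number of d-separation rules with \emph{incomparable} d-separation sets; then no graph can satisfy the strict \MDR~inequality (each maximizer has an equally good competitor, and every non-maximizer is beaten by a maximizer), so $\mathcal{G}_{M}(\mathbb{P}) = \emptyset$, while $G^*$ has no Markovian graph entailing a strict superset of its d-separation rules and hence satisfies P-minimality, with the whole class $\mathcal{M}(G^*)$ lying in $\mathcal{G}_{P}(\mathbb{P})$.

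The main obstacle will be the verification underlying (b) and (d): showing in (b) that no DCG whatsoever is faithful and that $G^*$ is the strict maximizer, and in (d) that two non-equivalent graphs genuinely tie for the maximum with neither d-separation set containing the other. Both reduce to an essentially exhaustive comparison of candidate DCGs, which must be carried out carefully using the cyclic notions of conductors and perfect/imperfect non-conductors, since these phenomena --- absent for DAGs --- are precisely what allow equal skeletons to entail different d-separation sets, as recorded in Lemma~\ref{Lem:Sec3b}(b).
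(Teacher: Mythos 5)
Your parts (a) and (c) are correct and essentially the paper's own arguments. For (a) you use exactly the paper's key observation --- the CMC means $D_{sep}(G)\subseteq CI(\mathbb{P})$ while faithfulness means $D_{sep}(G^*)=CI(\mathbb{P})$, so faithful graphs are precisely the strict maximizers of $|D_{sep}(\cdot)|$ among non-equivalent Markovian DCGs --- and your write-up is in fact slightly more careful than the paper's, since you prove both inclusions $\mathcal{G}_F(\mathbb{P})\subseteq\mathcal{G}_M(\mathbb{P})$ and $\mathcal{G}_M(\mathbb{P})\subseteq\mathcal{G}_F(\mathbb{P})$ explicitly (the paper leaves the second implicit in ``graphs belonging to $\mathcal{M}(G^*)$ should entail the maximum number of d-separation rules''). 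Your (c) is the same contrapositive cardinality argument as the paper's, word for word in substance.

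The genuine gap is in (b) and (d): these are existence claims, so the construction \emph{is} the proof, and you supply only a strategy. The paper proves both with one concrete 4-node example, a distribution with $CI(\mathbb{P})=\{X_1\independent X_3\mid X_2;\;X_2\independent X_4\mid X_1,X_3;\;X_1\independent X_2\mid X_4\}$, where the 4-cycle $G_1$ entails exactly the first two rules, and the load-bearing step is the (exhaustive) verification that no graph entails all three rules or any other pair --- this simultaneously gives $\mathcal{G}_F(\mathbb{P})=\emptyset$ and $\mathcal{G}_M(\mathbb{P})=\mathcal{M}(G_1)$. Your sketch for (b) (``a single cancellation produces one extra CI statement'') does not by itself rule out that some \emph{other} DCG is faithful to the augmented CI set; that is precisely the exhaustive check you defer, so as written (b) is unproven. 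For (d) your route genuinely diverges from the paper's, in an interesting way: you aim at the literal claim $\mathcal{G}_M(\mathbb{P})=\emptyset$ via two non-equivalent Markovian DCGs tying for the maximum with incomparable d-separation sets, whereas the paper reuses the same $\mathbb{P}$ and only shows that $G_2$ (entailing the single rule $X_1$ d-sep $X_2\mid X_4$) satisfies P-minimality but fails \MDR~--- under that $\mathbb{P}$ one has $\mathcal{G}_M(\mathbb{P})=\mathcal{M}(G_1)\neq\emptyset$, so the paper's own proof establishes the intended separation (P-minimality does not imply \MDR) rather than the emptiness clause as stated. Your tie-based plan would, if instantiated, prove the stronger literal statement, and it is realizable: e.g., on three nodes take $\mathbb{P}$ with $CI(\mathbb{P})=\{X_1\independent X_2;\;X_1\independent X_2\mid X_3\}$ (such distributions exist outside the Gaussian family); then $X_1\to X_3\leftarrow X_2$ and $X_1\to X_3\to X_2$ each entail exactly one of the two rules, no Markovian graph entails both, so $\mathcal{G}_M(\mathbb{P})=\emptyset$ while both graphs satisfy P-minimality. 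But no such example appears in your proposal, so (b) and (d) stand incomplete until you exhibit the graphs and carry out the verification you yourself flag as the main obstacle.
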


\begin{proof}
	\begin{itemize}
		\item[(a)] 
		Suppose that $(G^*, \mathbb{P})$ satisfies the CFC. Then $CI(\mathbb{P})$ corresponds to the set of d-separation rules entailed by $G^*$. Note that if $(G, \mathbb{P})$ satisfies the CMC and $G \notin \mathcal{M}(G^*)$, then $CI(\mathbb{P})$ is a superset of the set of d-separation rules entailed by $G$ and therefore $D_{sep}(G) \subset D_{sep}(G^*)$. This allows us to conclude that graphs belonging to $\mathcal{M}(G^*)$ should entail the maximum number of d-separation rules
		among graphs satisfying the CMC. Furthermore, based on the CFC $\mathcal{G}_F(\mathbb{P}) = \mathcal{M}(G^*)$ which completes the proof.
		
		\item[(c)] 
		Suppose that $(G^*,\mathbb{P})$ fails to satisfy the P-minimality assumption. By the definition of the P-minimality assumption, there exists $(G,\mathbb{P})$ satisfying the CMC such that $G \notin \mathcal{M}(G^*)$ and $D_{sep}(G^*) \subset D_{sep}(G)$. Hence, $G^*$ entails strictly less d-separation rules than $G$, and therefore $(G^*,\mathbb{P})$ violates the \MDR~assumption.
		
		\item[(b)] For (b) and (d), we refer to the example in Figure~$\ref{fig:Sec4a}$. Suppose that $X_1$, $X_2$, $X_3$, $X_4$ are random variables with distribution $\mathbb{P}$ with the following CI statements: 
		\begin{equation}
		\label{CIrelations}
		CI(\mathbb{P}) = \{X_1 \independent X_3 \mid X_2;~X_2 \independent X_4 \mid X_1,  X_3;~X_1 \independent X_2 \mid X_4\}.
		\end{equation}
		
		We show that $(G_1, \mathbb{P})$ satisfies the MDR assumption but not the CFC, whereas $(G_2, \mathbb{P})$ satisfies the P-minimality assumption but not the MDR assumption. Any graph satisfying the CMC with respect to $\mathbb{P}$ must only entail a subset of the three d-separation rules: $\{X_1~\mbox{d-sep}~X_3 \mid X_2; X_2~\mbox{d-sep} $ $X_4 \mid X_1,X_3;~X_1~\mbox{d-sep}~X_2 \mid X_4 \}$. Clearly $D_{sep}(G_1) = \{X_1 ~\mbox{d-sep} ~X_3 \mid X_2; ~X_2 ~\mbox{d-sep} ~X_4 \mid X_1, X_3\}$, therefore $(G_1, \mathbb{P})$ satisfies the CMC. It can be shown that no graph entails any subset containing two or three of these d-separation rules other than $G_1$. Hence no graph follows the CFC with respect to $\mathbb{P}$ since there is no graph that entails all three d-separation rules and $(G_1, \mathbb{P})$ satisfies the MDR assumption because no graph entails more or as many d-separation rules as $G_1$ entails, and satisfies the CMC with respect to $\mathbb{P}$.
		
		\item[(d)] Note that $G_2$ entails the sole d-separation rule, $D_{sep}(G_2) = \{X_1~\mbox{d-sep}~X_2 \mid X_4\}$ and it is clear that $(G_2, \mathbb{P})$ satisfies the CMC. If $(G_2, \mathbb{P})$ does not satisfy the P-minimality assumption, there exists a graph $G$ such that $(G,\mathbb{P})$ satisfies the CMC and $D_{sep}(G_2) \subsetneq D_{sep}(G)$. It can be shown that no such graph exists. Therefore, $(G_2, \mathbb{P})$ satisfies the P-minimality assumption. Clearly, $(G_2, \mathbb{P})$ fails to satisfy the \MDR~assumption because $G_1$ entails more d-separation rules.
	
		\begin{figure}[!t]
			\centering
			\begin {tikzpicture}[ -latex ,auto,
			state/.style={circle, draw=black, fill= white, thick, minimum size= 2mm},
			label/.style={thick, minimum size= 2mm}
			]
			\node[state] (A) at (0,0) {$X_1$};
			\node[state] (B) at (2,0) {$X_2$};
			\node[state] (C) at (2,-1.5) {$X_3$};
			\node[state] (D) at (0,-1.5) {$X_4$};
			\node[label] (G1) at(1, -2.5) {$G_1$};
			
			\path (A) edge [shorten <= 1pt, shorten >= 1pt] node[above] { } (B);
			\path (B) edge [shorten <= 1pt, shorten >= 1pt] node[above] { } (C);
			\path (C) edge [shorten <= 1pt, shorten >= 1pt] node[above ]{ } (D);
			\path (A) edge [shorten <= 1pt, shorten >= 1pt] node[above ]{ } (D);
			
			\node[state] (A2) at (5,0) {$X_1$};
			\node[state] (B2) at (7,0) {$X_2$};
			\node[state] (C2) at (7,-1.5) {$X_3$};
			\node[state] (D2) at (5,-1.5) {$X_4$};
			\node[label] (G1) at(6, -2.5) {$G_2$};
			
			\path (A2) edge [shorten <= 1pt, shorten >= 1pt] node[above] { } (C2);
			\path (B2) edge [shorten <= 1pt, shorten >= 1pt] node[above] { } (D2);
			\path (B2) edge [shorten <= 1pt, shorten >= 1pt] node[above] { } (C2);
			\path (D2) edge [shorten <= 1pt, shorten >= 1pt] node[above ]{ } (C2);
			\path (D2) edge [shorten <= 1pt, shorten >= 1pt] node[above ]{ } (A2);
			
		\end{tikzpicture}
		\caption{4-node examples for Theorem~\ref{Thm:Sec4a}}
		\label{fig:Sec4a}
	\end{figure}
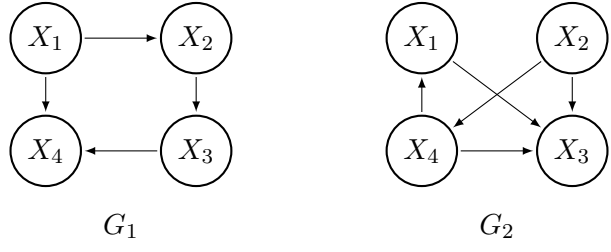
	\end{itemize}
 
\end{proof}
Theorem~\ref{Thm:Sec4a} (a) asserts that whenever the set of DCG models satisfying the CFC is not empty, it is equivalent to the set of DCG models satisfying the \MDR~assumption. Part (b) claims that there exists a distribution in which no DCG model satisfies the CFC, while the set of DCG models~satisfying the \MDR~assumption consists of its \MEC. Hence, (a) and (b) show that the \MDR~assumption is strictly superior to the CFC in terms of recovering the true \MEC. Theorem \ref{Thm:Sec4a} (c) claims that any DCG models satisfying the \MDR~assumption should lie in the set of DCG models satisfying the P-minimality assumption. (d) asserts that there exist DCG models satisfying the P-minimality assumption but violating the \MDR~assumption. Therefore, (c) and (d) prove that the \MDR~assumption is strictly stronger than the P-minimality assumption.

\subsection{Comparison between the \MDR~and \SMR~assumptions}

\label{SubSecMDRSMR}

Now we show that the \MDR~assumption is neither weaker nor stronger than the \SMR~assumptions for both DAG and DCG models.
\begin{lemma} 
	\label{Lem:Sec4a}
	\begin{itemize}
		\item[(a)] There exists a DAG model satisfying the identifiable \SMR~assumption that does not satisfy the \MDR~assumption. Further, there exists a DAG model satisfying the \MDR~assumption that does not satisfy the weak \SMR~assumption. 
		\item[(b)] There exists a DCG model that is not a DAG that satisfies the same conclusion as (a).
	\end{itemize}
\end{lemma}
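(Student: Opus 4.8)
The plan is to prove both parts by exhibiting explicit counterexamples, following the style of the proof of Theorem~\ref{Thm:Sec4a}, where a distribution is specified through its list of conditional independence statements $CI(\mathbb{P})$. Before searching for examples, it is worth recording why they cannot be faithful: if $\mathbb{P}$ is faithful to $G^*$ then $CI(\mathbb{P}) = D_{sep}(G^*)$, so every Markovian $G$ satisfies $D_{sep}(G) \subseteq D_{sep}(G^*)$, and Lemma~\ref{Lem:Sec3b}(a) forces $S(G^*) \subseteq S(G)$; hence $G^*$ simultaneously has the fewest edges and the most d-separation rules, so both the \SMR~and \MDR~assumptions hold. Consequently every witness must use an \emph{unfaithful} $\mathbb{P}$, i.e. $CI(\mathbb{P}) \supsetneq D_{sep}(G^*)$. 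The mechanism that makes the two assumptions diverge is that, unlike the edge count $|S(G)|$, the cardinality $|D_{sep}(G)|$ is sensitive to the collider structure of $G$ and not just to its skeleton: colliders create d-separation statements while adjacencies destroy them, so a graph with more edges can nonetheless entail at least as many d-separation rules, and this is precisely what decouples ``fewest edges'' from ``most rules.''

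For part (a), first statement, I would fix a DAG $G^*$ that is the \emph{unique} sparsest graph satisfying the CMC for a carefully chosen $CI(\mathbb{P})$, and then produce a second Markovian DAG $G'$, not Markov equivalent to $G^*$, that has strictly more edges yet entails at least as many d-separation rules; then $(G^*,\mathbb{P})$ satisfies the identifiable \SMR~assumption but violates the \MDR~assumption. The verification is purely by enumeration: list $D_{sep}(G^*)$ and $D_{sep}(G')$, check $D_{sep}(G^*), D_{sep}(G') \subseteq CI(\mathbb{P})$ so both are Markovian, and confirm $|S(G^*)| < |S(G)|$ for every other Markovian $G$ while $|D_{sep}(G')| \ge |D_{sep}(G^*)|$. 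For the second statement of (a) I would proceed dually: choose $G^*$ that entails strictly the most d-separation rules among Markovian graphs (so the \MDR~assumption holds), together with a non-equivalent Markovian $G'$ having strictly fewer edges (so the weak \SMR~assumption fails because $|S(G')| < |S(G^*)|$). Because $|D_{sep}(\cdot)|$ is not monotone in the edge count, such a sparser-but-rule-poorer $G'$ can be arranged; the node count will need to be large enough (on the order of five nodes, since for very few nodes the rule-count ranges for different edge counts do not overlap) that the two orderings are free to disagree.

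For part (b) I would transport each DAG witness to a genuinely cyclic graph. Concretely, I would modify the constructions of (a) by inserting a directed cycle so that the resulting $G^*$ and $G'$ are DCGs that are not DAGs, and then recompute all quantities in Richardson's cyclic framework. Here the skeleton $S(G)$ must be counted as \emph{real plus virtual} edges in the sense of Definition~\ref{Def:Adj}, and the d-separation rules must be re-derived using the conductor, perfect non-conductor and imperfect non-conductor classification of unshielded triples; in particular any virtual adjacency created by the inserted cycle has to be tracked, since it changes $|S(G)|$ and hence the \SMR~comparison. After this bookkeeping the same two inequalities (``unique fewest edges but not most rules,'' and ``most rules but not fewest edges'') are verified against all Markovian DCGs, giving DCG models that are not DAGs and satisfy the same two conclusions as in (a).

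The main obstacle is the construction and the \emph{global} verification rather than any single inequality. First, the specified $CI(\mathbb{P})$ must be a genuinely realizable independence model, i.e. the CI set of some distribution satisfying the CMC with respect to $G^*$ (a linear structural DCG with additive noise, as in the simulation section, is the natural realizer). Second, the inequalities $|S(G^*)| < |S(G)|$ and $|D_{sep}(G')| \ge |D_{sep}(G^*)|$ must be checked against \emph{every} graph satisfying the CMC, not merely the exhibited competitor $G'$, so the argument must rule out any accidental Markovian graph that is even sparser or entails even more rules. For part (b) this is compounded by the need to enumerate virtual edges and imperfect non-conductors correctly, exactly the feature absent for DAGs, and this is where I expect the bulk of the care to be required.
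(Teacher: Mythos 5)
Your strategy is essentially the paper's: exhibit explicit counterexample pairs built from an \emph{unfaithful} distribution (realized by a Gaussian linear structural model), verify Markovianity by enumerating d-separation rules, check the \SMR/\MDR~inequalities against \emph{all} competing Markovian graphs, and for (b) transport the construction to the cyclic setting with the skeleton counted as real plus virtual edges and the triples reclassified in Richardson's framework. You even anticipate the correct scale (the paper's DAG pair in Figure~\ref{fig:Sec4c} has five nodes; the cyclic pair in Figure~\ref{fig:Sec4d} has twelve). One structural difference: where you propose two ``dual'' constructions for the two halves of (a), the paper is more economical — a \emph{single} distribution $\mathbb{P}$ with a single pair of non-equivalent Markovian graphs settles both claims simultaneously, since $(G_2,\mathbb{P})$ satisfies the identifiable \SMR~assumption but fails \MDR~(because $G_1$ entails more d-separation rules), while $(G_1,\mathbb{P})$ satisfies \MDR~but fails weak \SMR~(because $G_2$ is strictly sparser); the same economy is used for (b).

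As a proof, however, the proposal has a genuine gap: the lemma is an existence statement, and you never produce the witnesses. Everything you correctly flag as ``the main obstacle'' — a realizable $CI(\mathbb{P})$, the global exclusion of sparser or rule-richer Markovian competitors, the virtual-edge bookkeeping — \emph{is} the proof, and it is deferred rather than done. In the paper this content is substantive: the extra CI statements are realized by path cancellations among edge weights (e.g. the weight $\beta_1\beta_2$ on $X_1 \to X_5$ in Appendix~\ref{Proof:lemma(b)}, mirroring the $-\alpha_3\alpha_7$ cancellation used for Theorem~\ref{Thm:Sec3a}(d)), the \SMR~verification for the cyclic $G_2$ requires showing that deleting the edges forced by the extra independences creates directed cycles whose virtual edges restore the edge count, and the \MDR~verification for $G_1$ rests on a counting argument — a competitor gains at most the six rules in families (b)--(d) but any added adjacency destroys an entire family among (1)--(15), each containing more than six rules. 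None of this follows from the plan as stated, and asserting that such graphs ``can be arranged'' is exactly what must be demonstrated. Separately, one inaccuracy in your preamble: from faithfulness you may conclude only $S(G^*) \subseteq S(G)$, i.e.\ \emph{weak} \SMR; the identifiable \SMR~assumption can fail for a faithful DCG model (Theorem~\ref{Thm:Sec3a}(b)). This does not damage your unfaithfulness argument — the failures you actually need, of \MDR~and of weak \SMR, are indeed impossible under the CFC — but the side claim that ``both the \SMR~and \MDR~assumptions hold'' under faithfulness is false as stated for DCGs.
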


\begin{proof}
	Our proof for Lemma~\ref{Lem:Sec4a} involves us constructing two sets of examples, one for DAGs corresponding to (a) and one for cyclic graphs corresponding to (b). For (a), Figure $\ref{fig:Sec4c}$ displays two DAGs, $G_1$ and $G_2$ which are clearly not in the same \MEC. For clarity, we use red arrows to represent the edges/directions that are different between the graphs. We associate the same distribution $\mathbb{P}$ to each DAG where $CI(\mathbb{P})$ is provided in Appendix~\ref{Proof:lemma(a)}. With this $CI(\mathbb{P})$, both $(G_1, \mathbb{P})$ and $(G_2, \mathbb{P})$ satisfy the CMC (explained in Appendix~\ref{Proof:lemma(a)}). The main point of this example is that $(G_2,\mathbb{P})$ satisfies the identifiable and weak \SMR~assumptions whereas $(G_1,\mathbb{P})$ satisfies the \MDR~assumption, and therefore two different graphs are determined depending on the given identifiability assumption with respect to the same $\mathbb{P}$. A more detailed proof that $(G_1, \mathbb{P})$ satisfies the \MDR~assumption whereas $(G_2,\mathbb{P})$ satisfies the \SMR~assumption is provided in Appendix~\ref{Proof:lemma(a)}. 

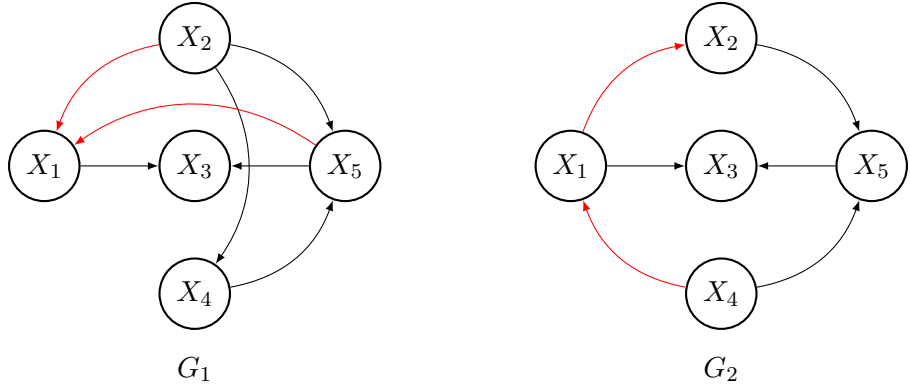
\begin{figure}[!t]
	\centering
	\begin {tikzpicture}[ -latex ,auto,
	state/.style={circle, draw=black, fill= white, thick, minimum size= 2mm},
	state2/.style={circle, draw=black, fill= white, minimum size= 5mm},
	label/.style={thick, minimum size= 2mm}
	]
	
	\node[state] (Y1)  at (0,0) {$X_1$};
	\node[state] (Y2)  at (2,1.7) {$X_2$};
	\node[state] (Y3)  at (2,0) {$X_3$};
	\node[state] (Y4)  at (2,-1.7) {$X_4$};
	\node[state] (Y5)  at (4,0) {$X_5$};
	\node[label] (Y12) at (2,-2.7) {$G_1$};
	
	\path (Y1) edge [right =-35] node[above]  { } (Y3);
	\path (Y2) edge [bend right = 30, color = red] node[above]  { } (Y1);
	\path (Y2) edge [bend right =-35] node[above]  { } (Y4);
	\path (Y2) edge [bend right = -30] node[above]  { } (Y5);
	\path (Y4) edge [bend right = 30 ] node[above]  { } (Y5);
	\path (Y5) edge [bend right =35, color = red] node[above]  { } (Y1);
	\path (Y5) edge [right =35] node[above]  { } (Y3);

	\node[state] (X1)  at (7,0) {$X_1$};
	\node[state] (X2)  at (9,1.7) {$X_2$};
	\node[state] (X3)  at (9,0) {$X_3$};
	\node[state] (X4)  at (9,-1.7) {$X_4$};
	\node[state] (X5)  at (11,0) {$X_5$};
	\node[label] (X12) at (9,-2.7) {$G_2$};
	
	\path (X1) edge [bend right =-30, color = red] node[above]  { } (X2);
	\path (X1) edge [right =25] node[above]  { } (X3);
	\path (X4) edge [bend right =-30, color = red] node[above]  { } (X1);
	\path (X2) edge [bend right =-30] node[above]  { } (X5);
	\path (X5) edge [right =25] node[above]  { } (X3);
	\path (X4) edge [bend right =30] node[above]  { } (X5);
	
\end{tikzpicture}

\caption{5-node examples for Lemma~\ref{Lem:Sec4a}.(a)}
\label{fig:Sec4c}
\end{figure}

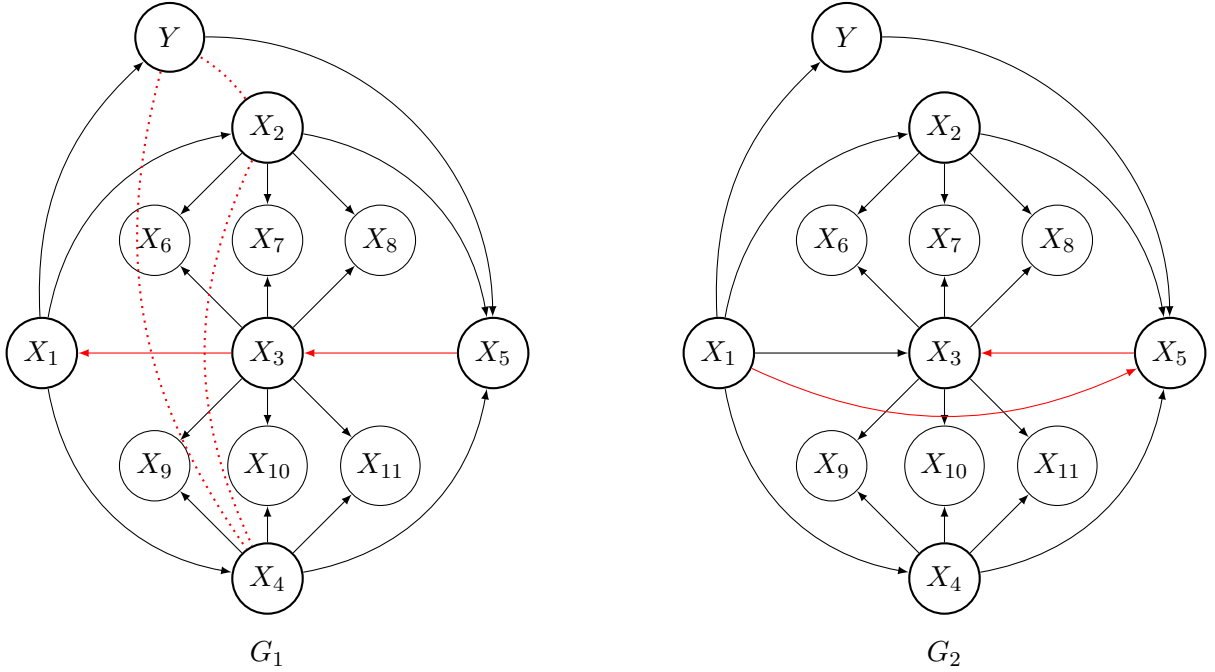
\begin{figure}[!t]
	\centering
	\begin {tikzpicture}[ -latex ,auto,
	state/.style={circle, draw=black, fill= white, thick, minimum size= 2mm},
	state2/.style={circle, draw=black, fill= white, minimum size= 2mm},
	label/.style={thick, minimum size= 2mm}
	]
	\node[state] (Z1)  at (0,  0) {$X_1$};
	\node[state] (Z2)  at (3,  3) {$X_2$};
	\node[state] (Z3)  at (3,  0) {$X_3$};
	\node[state] (Z4)  at (3, -3) {$X_4$};
	\node[state] (Z5)  at (6,  0) {$X_5$};
	\node[state2] (Z6)  at (1.5, 1.5) {$X_6$};
	\node[state2] (Z7)  at (3,   1.5) {$X_7$};
	\node[state2] (Z8)  at (4.5, 1.5) {$X_8$};
	\node[state2] (Z9)  at (1.5,-1.5) {$X_9$};
	\node[state2] (Z10) at (3,  -1.5) {$X_{10}$};
	\node[state2] (Z11) at (4.5,-1.5) {$X_{11}$};
	
	\node[state] (K1) at (1.7, 4.2) {$~Y~$} ;
	\node[label] (Z12) at (3,-4) {$G_1$};
	
	\path (Z1) edge [bend right=-35] node[above]  { } (Z2);
	\path (Z3) edge [right=25, color =red] node[above]  { } (Z1);
	\path (Z1) edge [ bend right= 35] node[above]  { } (Z4);
	\path (Z2) edge [bend right= -35] node[above]  { } (Z5);
	\path (Z5) edge [right=25, color = red] node[above]  { } (Z3);
	\path (Z4) edge [bend right=35] node[above]  { } (Z5);
	
	\path (Z2) edge [right=25] node[above]  { } (Z6);
	\path (Z2) edge [right=25] node[above]  { } (Z7);
	\path (Z2) edge [right=25] node[above]  { } (Z8);
	
	\path (Z3) edge [right=25] node[above]  { } (Z6);
	\path (Z3) edge [right=25] node[above]  { } (Z7);
	\path (Z3) edge [right=25] node[above]  { } (Z8);
	
	\path (Z3) edge [right=25] node[above]  { } (Z9);
	\path (Z3) edge [right=25] node[above]  { } (Z10);
	\path (Z3) edge [right=25] node[above]  { } (Z11);
	
	\path (Z4) edge [right=25] node[above]  { } (Z9);
	\path (Z4) edge [right=25] node[above]  { } (Z10);
	\path (Z4) edge [right=25] node[above]  { } (Z11);
	
	\path (Z2) edge [-, dotted, color =red, thick, bend right=25] node[above]  { } (Z4);
	\path (Z2) edge [-, dotted, color =red, thick, bend right=10] node[above]  { } (K1);
	\path (K1) edge [-, dotted, color =red, thick, bend right=25] node[above]  { } (Z4);	
	\path (Z1) edge [bend right=-25] node[above]  { } (K1);
	\path (K1) edge [bend right=-45] node[above]  { } (Z5);
	
	\node[state] (K2) at (10.7, 4.2) {$~Y~$} ;
	
	\node[state] (Y1)  at (9,0) {$X_1$};
	\node[state] (Y2)  at (12,3) {$X_2$};
	\node[state] (Y3)  at (12,0) {$X_3$};
	\node[state] (Y4)  at (12,-3) {$X_4$};
	\node[state] (Y5)  at (15,0) {$X_5$};
	\node[state2] (Y6)  at (10.5,1.5) {$X_6$};
	\node[state2] (Y7)  at (12,1.5) {$X_7$};
	\node[state2] (Y8)  at (13.5,1.5) {$X_8$};
	\node[state2] (Y9)  at (10.5,-1.5) {$X_9$};
	
	\node[state2] (Y10) at (12,-1.5) {$X_{10}$};
	\node[state2] (Y11) at (13.5,-1.5) {$X_{11}$};
	\node[label] (Y12) at (12,-4) {$G_2$};
	
	\path (Y1) edge [bend right=-35] node[above left]  {  } (Y2);
	\path (Y1) edge [right=25] node[above]  {  } (Y3);
	\path (Y1) edge [bend right=35] node[below left]  {  } (Y4);
	\path (Y2) edge [bend right=-35] node[above right]  {  } (Y5);
	\path (Y5) edge [right=25, color = red] node[above]  {  } (Y3);
	\path (Y4) edge [bend right=35] node[below right]   {  } (Y5);
	
	\path (Y2) edge [right=25] node[above]  { } (Y6);
	\path (Y2) edge [right=25] node[above]  { } (Y7);
	\path (Y2) edge [right=25] node[above]  { } (Y8);
	
	\path (Y3) edge [right=25] node[above]  { } (Y6);
	\path (Y3) edge [right=25] node[above]  { } (Y7);
	\path (Y3) edge [right=25] node[above]  { } (Y8);
	
	\path (Y3) edge [right=25] node[above]  { } (Y9);
	\path (Y3) edge [right=25] node[above]  { } (Y10);
	\path (Y3) edge [right=25] node[above]  { } (Y11);
	
	\path (Y4) edge [right=25] node[above]  { } (Y9);
	\path (Y4) edge [right=25] node[above]  { } (Y10);
	\path (Y4) edge [right=25] node[above]  { } (Y11);
	\path (Y1) edge [bend right= 25, color = red] node[ above left ]  { } (Y5);
	
	\path (Y1) edge [bend right=-25] node[above]  { } (K2);
	\path (K2) edge [bend right=-45] node[above]  { } (Y5);
	
\end{tikzpicture}

\caption{12-node examples for Lemma~\ref{Lem:Sec4a}.(b)}
\label{fig:Sec4d}
\end{figure}

For (b), Figure~\ref{fig:Sec4d} displays two DCGs $G_1$ and $G_2$ which do not belong to the same \MEC. Once again red arrows are used to denote the edges (both real and virtual) that are different between the graphs. We associate the same distribution $\mathbb{P}$ with conditional independent statements $CI(\mathbb{P})$ (provided in Appendix~\ref{Proof:lemma(b)}) to each graph such that both $(G_1,\mathbb{P})$ and $(G_2,\mathbb{P})$ satisfy the CMC (explained in Appendix~\ref{Proof:lemma(b)}). Again, the main idea of this example is that $(G_1,\mathbb{P})$ satisfies the \MDR~assumption whereas $(G_2,\mathbb{P})$ satisfies the identifiable \SMR~assumption. A detailed proof that $(G_1, \mathbb{P})$ satisfies the \MDR~assumption whereas $(G_2,\mathbb{P})$ satisfies the identifiable \SMR~assumption can be found in Appendix \ref{Proof:lemma(b)}.

\end{proof}

Intuitively, the reason why fewer edges does not necessarily translate to entailing more d-separation rules is that the placement of edges relative to the rest of the graph and what additional paths they allow affects the total number of d-separation rules entailed by the graph.

In summary, the flow chart in Figure~\ref{Flowchart} shows how the CFC, SMR, MDR and minimality assumptions are related for both DAG and DCG models:

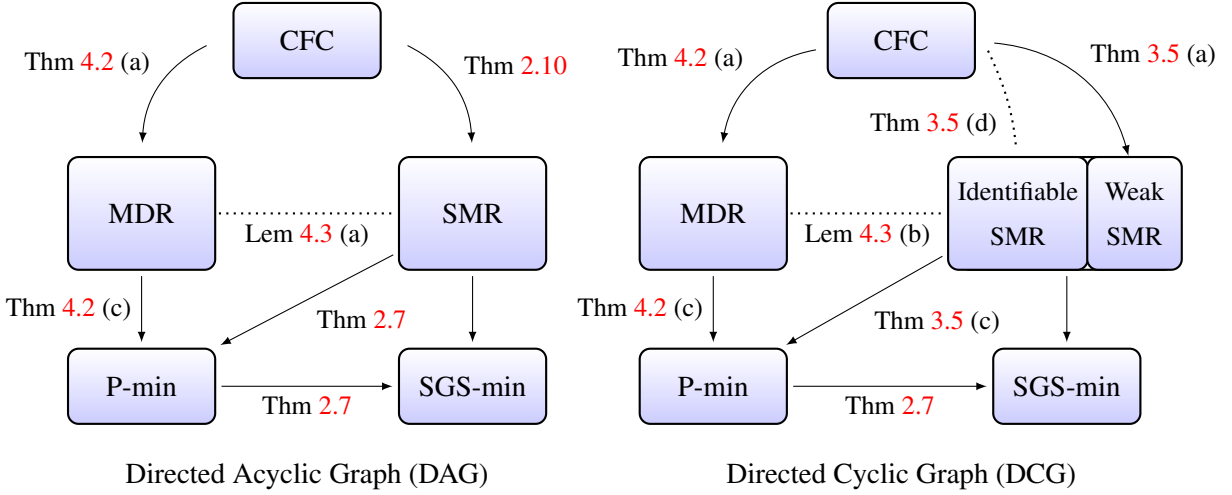
\begin{figure}[!t]
	\centering
	\begin{tikzpicture}
	[-latex ,node distance = 2 cm and 3cm ,on grid ,
	state/.style ={ rectangle, rounded corners,
		top color =white , bottom color=blue!20 , thick, text centered,text width= 1.7cm, minimum height=10mm, draw, black , text=black , minimum width =1 cm},
	state2/.style ={ rectangle, rounded corners,
		top color =white , bottom color=white , dotted, thick, text centered,text width= 2.0cm, minimum height=10mm, draw, white , text=black , minimum width =1 cm},
	state3/.style ={ rectangle, rounded corners,
		top color =white , bottom color=blue!20 , thick, text centered,text width= 1.7cm, minimum height=15mm, draw, black , text=black , minimum width =1 cm},
	state4/.style ={ minimum height= 2mm, minimum width = 2mm},
	state5/.style ={ rectangle, rounded corners,
		top color =white , bottom color=white , thick, text centered,text width= 2.6cm, minimum height=15mm, draw, black , text=black , minimum width =2.6cm},
	state6/.style ={ rectangle, rounded corners,
		top color =white , bottom color=blue!20 , thick, text centered,text width= 1.6cm, minimum height=15mm, draw, black , text=black , minimum width =1.6cm},
	state7/.style ={ rectangle, rounded corners,
		top color =white , bottom color=blue!20 , thick, text centered,text width= 1.0cm, minimum height=15mm, draw, black , text=black , minimum width =1.0cm},
	label/.style={thick, minimum size= 2mm}
	]
	\node[state] (A) at (0,10) {CFC};
	\node[state3] (B) at (-2.2,7.7) {MDR};
	\node[state3] (Z) at (2.2,7.7) {SMR};
	\node[state] (D) at (-2.2,5.4) {P-min};
	\node[state] (E) at ( 2.2,5.4) {SGS-min};
	\node[label] (G) at (0,4.2) {Directed Acyclic Graph (DAG)};
	
	\node[state4] (An1) at (-1.2,10) {};
	\node[state4] (An2) at ( 1.2,10) {};
	\node[state4] (An3) at (-2.2,8.5) {};
	\node[state4] (An4) at ( 2.2,8.5) {};
	
	\path (An1) edge [bend right  =30] node[above left] {\small Thm \ref{Thm:Sec4a} (a) } (An3);
	\path (An2) edge [bend left  =30] node[above right] {\small Thm \ref{Thm:Sec2b} } (An4);
	
	\path (B) edge [shorten <= 2pt, shorten >= 2pt] node[left] {\small Thm \ref{Thm:Sec4a} (c) } (D);
	\draw[dotted, - , thick , bend left = 0, shorten <= 1pt, shorten >= 1pt] 
	(B) to node[below] {\small Lem \ref{Lem:Sec4a} (a) } (Z); 
	\path (Z) edge [shorten <= 2pt, shorten >= 2pt] node[below right] {\small Thm \ref{Thm:Sec2a} } (D);
	\path (D) edge [shorten <= 2pt, shorten >= 2pt] node[below] {\small Thm \ref{Thm:Sec2a} } (E);
	\path (Z) edge [shorten <= 2pt, shorten >= 2pt] node[below] {  } (E);

	\node[state] (A2) at (7.9, 10) {CFC};
	\node[state3](B2) at (5.4, 7.7) {MDR};
	\node[state5] (Z2) at (10.1,7.7) {};
	\node[state6] (C2) at (9.45,7.7) {\small Identifiable SMR };
	\node[state7] (K2) at (11.00,7.7) {\small Weak SMR}; 
	\node[state] (D2) at (5.4,  5.4) {P-min};
	\node[state] (E2) at (10.1,  5.4) {SGS-min};
	\node[label] (G)  at (7.9,  4.2) {Directed Cyclic Graph (DCG)};
	
	\node[state4] (Bn1) at ( 6.9,9.9) {};
	\node[state4] (Bn2) at ( 9.0,10) {};
	\node[state4] (Bn3) at ( 5.5,8.5) {};
	\node[state4] (Bn4) at ( 9.7 ,8.5) {};
	\node[state4] (Bn9) at ( 11.30,8.5) {};
	\node[state4] (Bn10)at ( 9.70,6.9) {};
	
	\path (Bn1) edge [bend right  =35] node[above left] { \small Thm \ref{Thm:Sec4a} (a) } (Bn3);
	\draw[dotted, - , thick , bend left = 0] 
	(Bn2) edge [bend left  =10, shorten >= 3pt] node[below left] { \small Thm \ref{Thm:Sec3a} (d)  } (C2);
	\path (Bn2) edge [bend right =-35] node[auto] { \small Thm \ref{Thm:Sec3a} (a) } (K2);   
	\path (B2) edge [shorten <= 2pt, shorten >= 2pt] node[left] {\small Thm \ref{Thm:Sec4a} (c) } (D2);
	
	\draw[dotted, - , thick , bend left = 0, shorten <= 1pt, shorten >= 1pt] 
	(B2) to node[below ] { \small Lem \ref{Lem:Sec4a} (b) } (C2); 
	\path (D2) edge [shorten <= 2pt, shorten >= 2pt] node[below] {\small Thm \ref{Thm:Sec2a}  } (E2);
	\path (C2) edge [shorten <= 2pt, shorten >= 2pt] node[below right] {\small Thm \ref{Thm:Sec3a} (c) } (D2);
	\path (Z2) edge [shorten <= 2pt, shorten >= 2pt] node[below right] { } (E2);
	\draw[dotted, - , thick , bend left = 30]; 
	\end{tikzpicture}
	\caption{Summary of relationships between assumptions}
	\label{Flowchart}
\end{figure}

\section{Simulation results}

\label{SecSimulation}

In Sections~\ref{SecSMRFrugality} and~\ref{SecMaxDSep}, we proved that the \MDR~assumption is strictly weaker than the CFC and stronger than the P-minimality assumption for both DAG and DCG models, and the identifiable \SMR~assumption is stronger than the P-minimality assumption for DCG models. In this section, we support our theoretical results with numerical experiments on small-scale Gaussian linear DCG models (see e.g.,~\cite{Spirtes1995}) using the generic Algorithm~\ref{algorithm}. We also provide a comparison of Algorithm~\ref{algorithm} to state-of-the-art algorithms for small-scale DCG models in terms of recovering the skeleton of a DCG model. 

\setlength{\algomargin}{0.5em}

\begin{algorithm}[t]
	\caption{Directed Graph Learning Algorithm}
	\label{algorithm}
	\SetKwInOut{Input}{Input}
	\SetKwInOut{Output}{Output}
	\SetKwInOut{Return}{Return}
	\Input{iid $n$ samples from the DCG model $(G, \mathbb{P})$}
	\Output{\MEC~$\widehat{\mathcal{M}}(G)$ and skeleton $\widehat{S}(G)$}
	\BlankLine
	Step 1: Find all conditional independence statements $\widehat{CI}(\mathbb{P})$ using a conditional independence test\;
	Step 2: Find the set of graphs $\widehat{\mathcal{G}}$ satisfying the given identifiability assumption\; 
	$\widehat{\mathcal{M}}(G) \gets \emptyset$\;
	$\widehat{S}(G) \gets \emptyset$\;
	\If{All graphs of $\widehat{\mathcal{G}}$ belong to the same \MEC~$\mathcal{M}(\widehat{\mathcal{G}})$}
	{$\widehat{\mathcal{M}}(G) \gets \mathcal{M}(\widehat{\mathcal{G}})$\;}
	\If{All graphs of $\widehat{\mathcal{G}}$ have the same skeleton $S(\widehat{\mathcal{G}})$}
	{$\widehat{S}(G) \gets S(\widehat{\mathcal{G}})$\;}
	\Return{$\widehat{\mathcal{M}}(G)$ and $\widehat{S}(G)$}
\end{algorithm} 

\subsection{DCG model and simulation setup}

Our simulation study involves simulating DCG models from $p$-node random Gaussian linear DCG models where the distribution $\mathbb{P}$ is defined by the following linear structural equations:
\begin{equation}
\label{eq:GGM}
(X_1,X_2,\cdots,X_p)^T = B^T (X_1,X_2,\cdots,X_p)^T + \epsilon
\end{equation}
where $B \in \mathbb{R}^{p \times p}$ is an edge weight matrix with $B_{jk} = \beta_{jk}$ and $\beta_{jk}$ is a weight of an edge from $X_j$ to $X_k$. Furthermore, $\epsilon \sim \mathcal{N}(\mathbf{0}_{p}, I_p)$ where $\mathbf{0}_{p} = (0,0,\cdots,0)^T \in \mathbb{R}^{p}$ and $I_p \in \mathbb{R}^{p \times p}$ is the identity matrix. 

The matrix $B$ encodes the DCG structure since if $\beta_{jk}$ is non-zero, $X_j \to X_k$ and the pair $(X_j, X_k)$ is \emph{really adjacent}, otherwise there is no directed edge from $X_j$ to $X_k$. In addition if there is a set of nodes $S = (s_1, s_2,\cdots,s_t)$ such that the product of $\beta_{j s_1}, \beta_{k s_1}, \beta_{s_1 s_2}, \cdots, \beta_{s_t j}$ is non-zero, the pair $(X_j, X_k)$ is \emph{virtually adjacent}. Note that if the graph is a DAG, we would need to impose the constraint that $B$ is upper triangular; however for DCGs we impose no such constraints. 

We present simulation results for two sets of models, DCG models where edges and directions are determined randomly, and DCG models whose edges have a specific graph structure. For the set of random DCG models, the simulation was conducted using $100$ realizations of 5-node random Gaussian linear DCG models~\eqref{eq:GGM} where we impose sparsity by assigning a probability that each entry of the matrix $B$ is non-zero and we set the expected neighborhood size range from $1$ (sparse graph) to $4$ (fully connected graph) depending on the non-zero edge weight probability. Furthermore the non-zero edge weight parameters were chosen uniformly at random from the range $\beta_{jk} \in [-1, -0.25] \cup [0.25, 1]$ which ensures the edge weights are bounded away from $0$. 

We also ran simulations using $100$ realizations of a 5-node Gaussian linear DCG models~\eqref{eq:GGM} with specific graph structures, namely trees, bipartite graphs, and cycles. Figure~\ref{fig:Sec5g} shows examples of skeletons of these special graphs. We generate these graphs as follows: First, we set the skeleton for our desired graph based on Figure.~\ref{fig:Sec5g} and then determine the non-zero edge weights which are chosen uniformly at random from the range $\beta_{jk} \in [-1, -0.25] \cup [0.25, 1]$. Second, we repeatedly assign a randomly chosen direction to each edge until every graph has at least one possible directed cycle. Therefore, the bipartite graphs always have at least one directed cycle. However, tree graphs have no cycles because they have no cycles in the skeleton. For cycle graphs, we fix the directions of edges to have a directed cycle $X_1 \to X_2 \to \cdots \to X_5 \to X_1$. 

\begin{figure}[!t]
	\centering
	\begin {tikzpicture}[ -latex ,auto,
	state/.style={circle, draw=black, fill= white, thick, minimum size= 2mm},
	label/.style={thick, minimum size= 2mm}
	]
	\node[state] (A) at (0,0) {$X_1$};
	\node[state] (B) at (1.5, 1) {$X_2$};
	\node[state] (C) at (1.5,-1) {$X_3$};
	\node[state] (D) at (3, 2) {$X_4$};
	\node[state] (E) at (3, 0) {$X_5$};
	\node[label] (G1) at(1.5,-2) {Tree (1)};
	
	\path (A) edge [-, shorten <= 1pt, shorten >= 1pt] node[above] { } (B);
	\path (A) edge [-, shorten <= 1pt, shorten >= 1pt] node[above] { } (C);
	\path (B) edge [-, shorten <= 1pt, shorten >= 1pt] node[above ]{ } (D);
	\path (B) edge [-, shorten <= 1pt, shorten >= 1pt] node[above ]{ } (E);
	
	\node[state] (A) at (5.7, 0.5) {$X_1$};
	\node[state] (B) at (4.2, 0.5) {$X_2$};
	\node[state] (C) at (7.2, 0.5) {$X_3$};
	\node[state] (D) at (5.7, 2.0) {$X_4$};
	\node[state] (E) at (5.7, -1.0) {$X_5$};
	\node[label] (G1) at(5.7, -2.0) {Tree (2)};
	
	\path (A) edge [-, shorten <= 1pt, shorten >= 1pt] node[above] { } (B);
	\path (A) edge [-, shorten <= 1pt, shorten >= 1pt] node[above] { } (C);
	\path (A) edge [-, shorten <= 1pt, shorten >= 1pt] node[above ]{ } (D);
	\path (A) edge [-, shorten <= 1pt, shorten >= 1pt] node[above ]{ } (E);
	
	\node[state] (A2) at (8.7,0.5) {$X_1$};
	\node[state] (B2) at (10.2, 2) {$X_2$};
	\node[state] (C2) at (10.2, 0.5) {$X_3$};
	\node[state] (D2) at (10.2,-1) {$X_4$};
	\node[state] (E2) at (11.7, 0.5) {$X_5$};
	\node[label] (G1) at (10.2, -2) {Bipartite};
	
	\path (A2) edge [-, shorten <= 1pt, shorten >= 1pt] node[above] { } (B2);
	\path (A2) edge [-, shorten <= 1pt, shorten >= 1pt] node[above] { } (C2);
	\path (A2) edge [-, shorten <= 1pt, shorten >= 1pt] node[above] { } (D2);
	\path (B2) edge [-, shorten <= 1pt, shorten >= 1pt] node[above ]{ } (E2);
	\path (C2) edge [-, shorten <= 1pt, shorten >= 1pt] node[above ]{ } (E2);
	\path (D2) edge [-, shorten <= 1pt, shorten >= 1pt] node[above ]{ } (E2);
	
	\node[state] (A) at (13.0, 1.0) {$X_1$};
	\node[state] (B) at (14.4, 2.0) {$X_2$};
	\node[state] (C) at (15.5, 0.5) {$X_3$};
	\node[state] (D) at (14.7,-1.0) {$X_4$};
	\node[state] (E) at (13.0,-0.5) {$X_5$};
	\node[label] (G1) at(14.2, -2.0) {Cycle};
	
	\path (A) edge [-, shorten <= 1pt, shorten >= 1pt] node[above] { } (B);
	\path (B) edge [-, shorten <= 1pt, shorten >= 1pt] node[above] { } (C);
	\path (C) edge [-, shorten <= 1pt, shorten >= 1pt] node[above ]{ } (D);
	\path (D) edge [-, shorten <= 1pt, shorten >= 1pt] node[above ]{ } (E);
	\path (E) edge [-, shorten <= 1pt, shorten >= 1pt] node[above ]{ } (A);
	
\end{tikzpicture}
\caption{Skeletons of tree, bipartite, and cycle graphs}
\label{fig:Sec5g}
\end{figure}
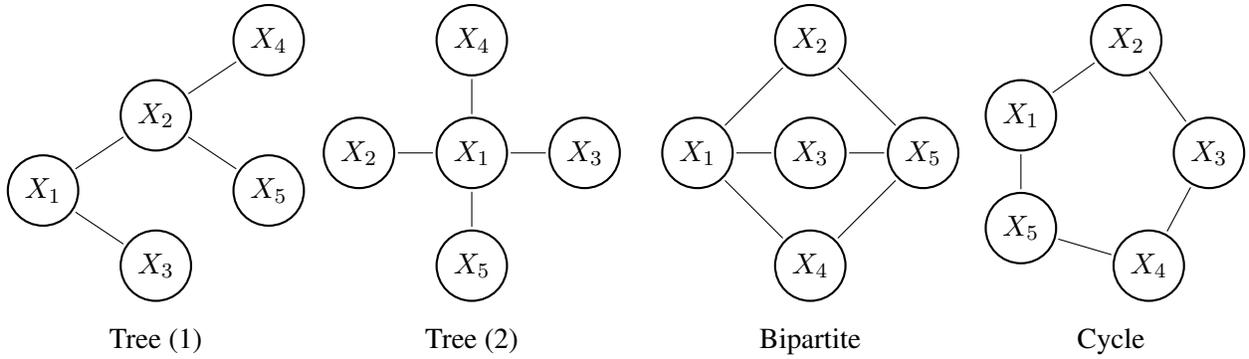

\subsection{Comparison of assumptions}


In this section we provide a simulation comparison between the SMR, MDR, CFC and minimality assumptions. The CI statements were estimated based on $n$ independent samples drawn from $\mathbb{P}$ using Fisher's conditional correlation test with significance level $\alpha = 0.001$. We detected all directed graphs satisfying the CMC and we measured what proportion of graphs in the simulation satisfy each assumption (CFC, \MDR, identifiable \SMR, P-minimality). 

In Figures~\ref{fig:Sec5a},~\ref{fig:Sec5b} and~\ref{fig:Sec5e}, we simulated how restrictive each identifiability assumption (CFC, \MDR, identifiable \SMR, P-minimality) is for random DCG models and specific graph structures with sample sizes $n \in \{100, 200, 500, 1000\}$ and expected neighborhood sizes from $1$ (sparse graph) to $4$ (fully connected graph). As shown in Figures~\ref{fig:Sec5b} and~\ref{fig:Sec5e}, the proportion of graphs satisfying each assumption increases as sample size increases because of fewer errors in CI tests. Furthermore, there are more DCG models satisfying the \MDR~assumption than the CFC and less DCG models satisfying the \MDR~assumption than the P-minimality assumption for all sample sizes and different expected neighborhood sizes. We can also see similar relationships between the CFC, identifiable \SMR~and P-minimality assumptions. The simulation study supports our theoretical result that the \MDR~assumption is weaker than the CFC but stronger than the P-minimality assumption, and the identifiable \SMR~assumption is stronger than the P-minimality assumption. Although there are no theoretical guarantees that the identifiable \SMR~assumption is stronger than the \MDR~assumption and weaker than the CFC, Figures~\ref{fig:Sec5a} and~\ref{fig:Sec5b} represent that the identifiable \SMR~assumption is substantially stronger than the \MDR~assumption and weaker than the CFC on average.

\begin{figure}[!t]
	\centering 
	\hspace{-2mm}
	\begin{subfigure}[!htb]{.30\textwidth}
		\includegraphics[width=\textwidth,height= 40mm]{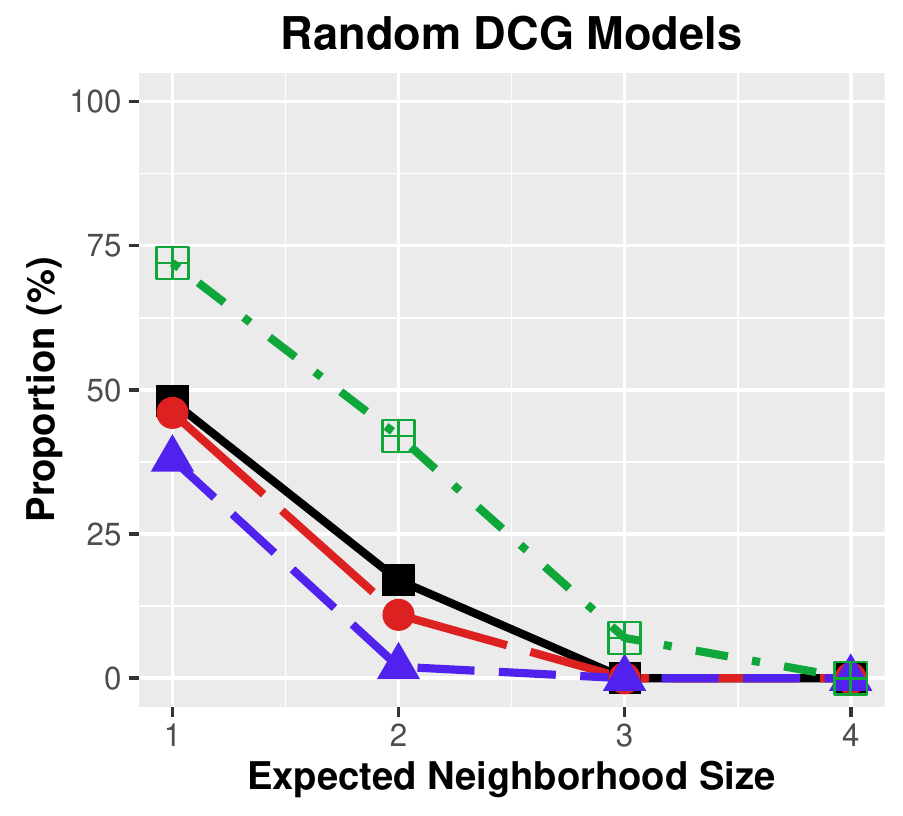}
		\caption{$ n=100$}
	\end{subfigure}
	\hspace{-2mm}
	\begin{subfigure}[!htb]{.30\textwidth}
		\includegraphics[width=\textwidth,height= 40mm]{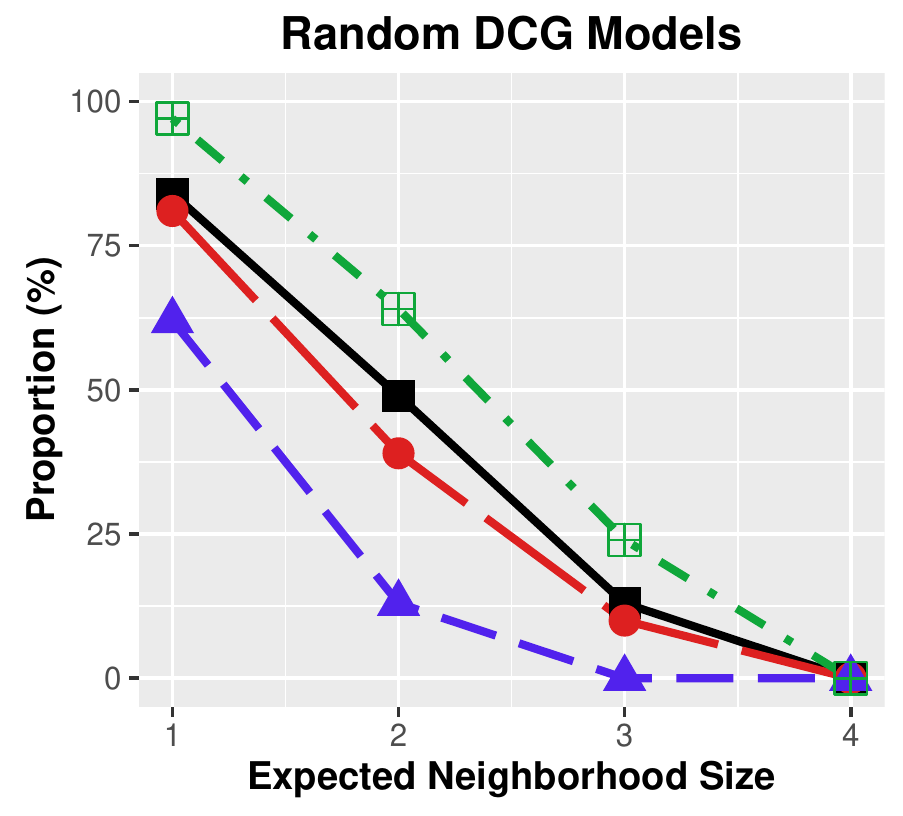}
		\caption{$ n=500$}
	\end{subfigure}
	\hspace{-2mm}
	\begin{subfigure}[!htb]{.30\textwidth}
		\includegraphics[width=\textwidth,height= 40mm]{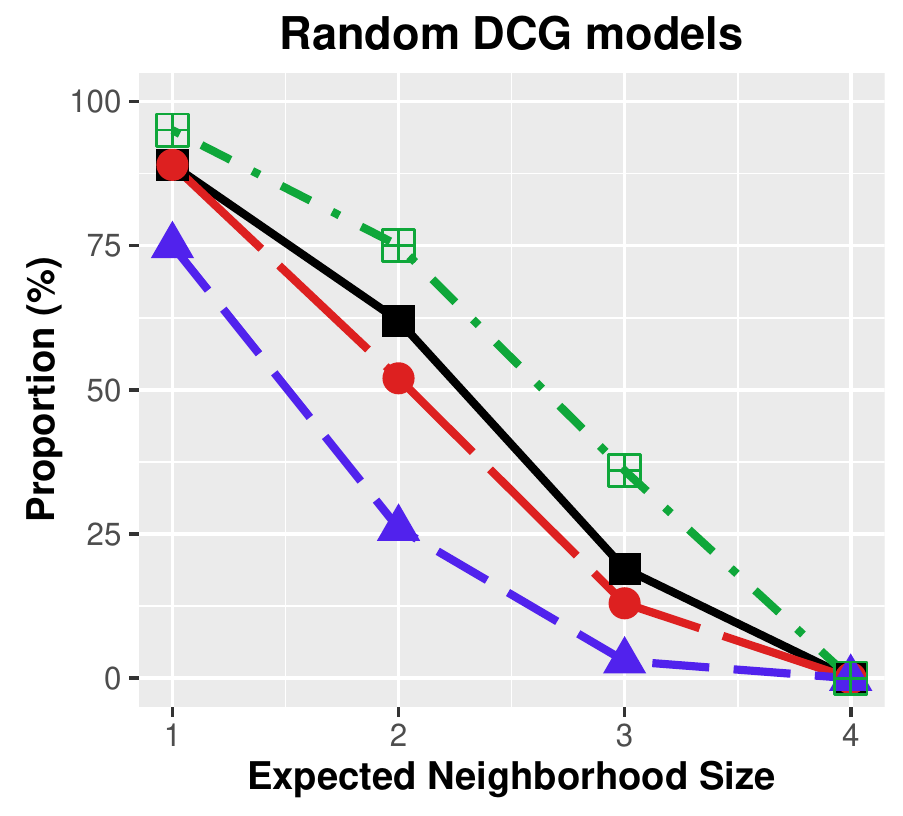}
		\caption{$ n=1000$}
	\end{subfigure}\hspace{-1mm}
	\begin{subfigure}[!htb]{.10\textwidth}
		\includegraphics[width=\textwidth,height= 32mm,trim = 7mm 0mm 7mm 5mm, clip]{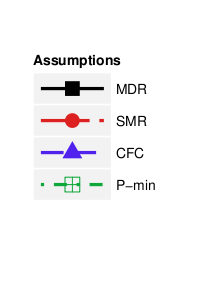}
	\end{subfigure}
	\caption{Proportions of 5-node random DCG models satisfying the CFC, \MDR, identifiable \SMR~and P-minimality assumptions with different sample sizes, varying expected neighborhood size}
	\label{fig:Sec5a} \vspace{-2mm} 
\end{figure}
\begin{figure}[!t]
	\centering 
	\hspace{-2mm}
	\begin{subfigure}[!htb]{.30\textwidth}
		\includegraphics[width=\textwidth,height= 40mm]{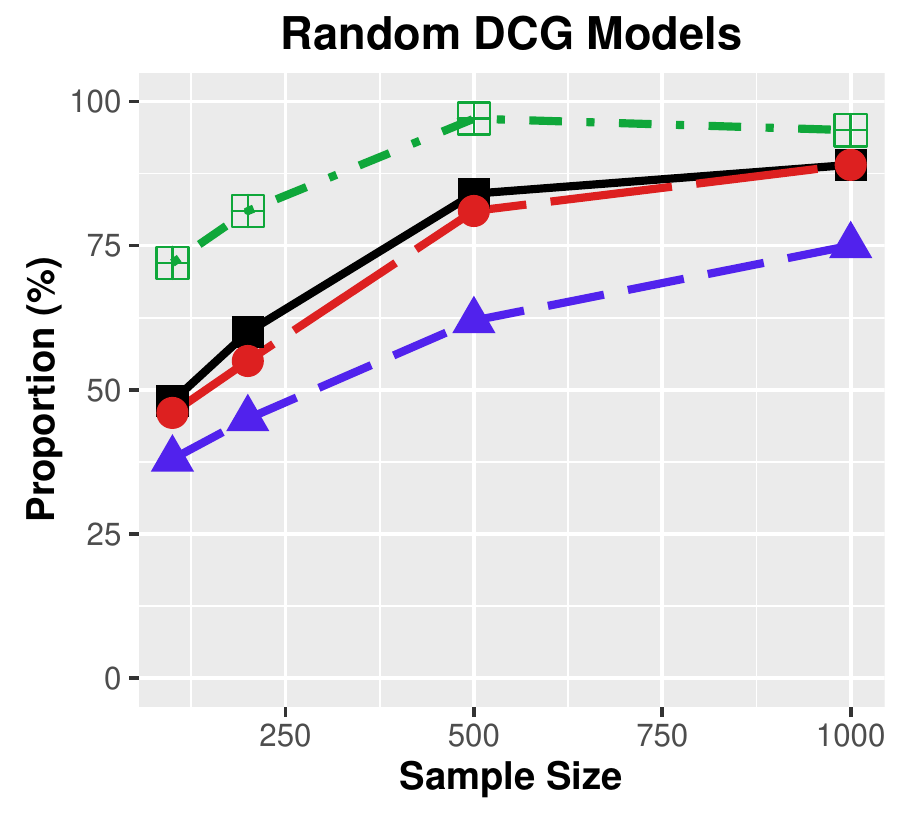}
		\caption{Neighborhood sizes: 1}
	\end{subfigure}
	\hspace{-2mm}
	\begin{subfigure}[!htb]{.30\textwidth}
		\includegraphics[width=\textwidth,height= 40mm]{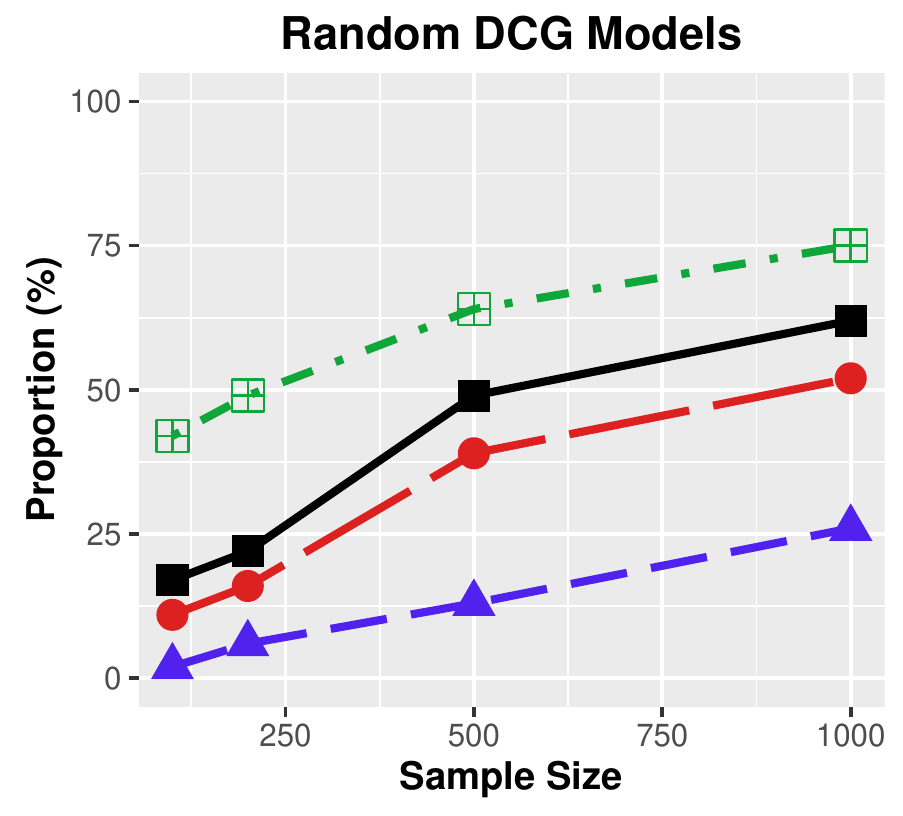}
		\caption{Neighborhood sizes: 2}
	\end{subfigure}
	\hspace{-2mm}
	\begin{subfigure}[!htb]{.30\textwidth}
		\includegraphics[width=\textwidth,height= 40mm]{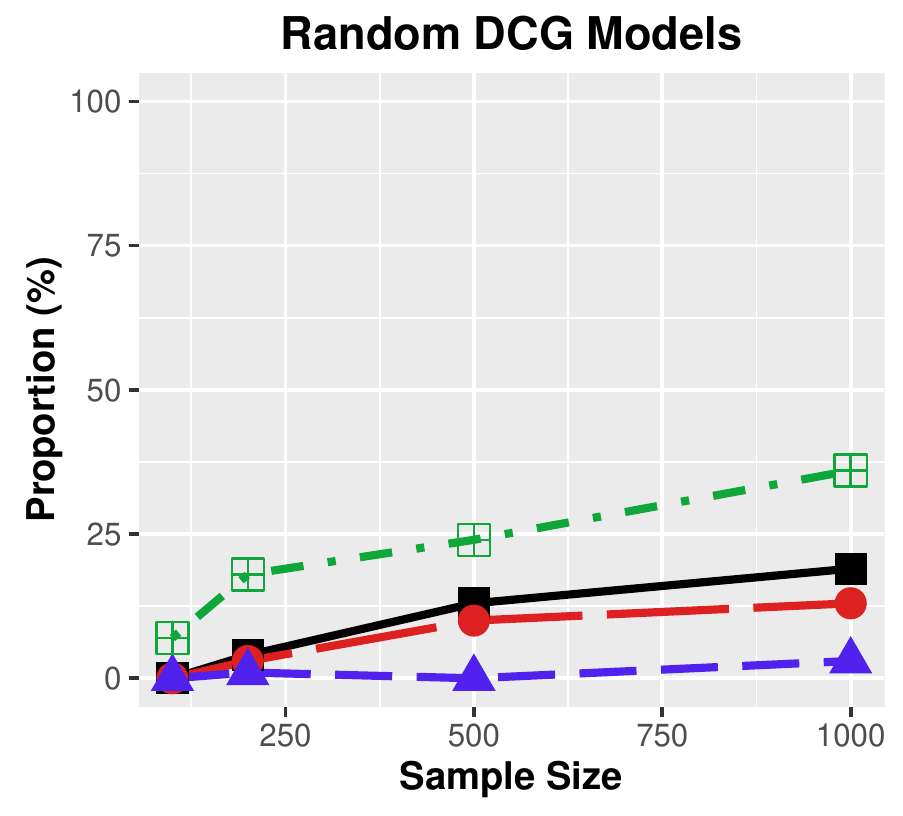}
		\caption{Neighborhood sizes: 3}
	\end{subfigure} 
	\hspace{-1mm}
	\begin{subfigure}[!htb]{.10\textwidth}
		\includegraphics[width=\textwidth,height= 32mm,trim = 7mm 0mm 7mm 5mm, clip]{legend01.png}
	\end{subfigure}
	\caption{Proportions of 5-node random DCG models satisfying the CFC, \MDR, identifiable \SMR~and P-minimality assumptions with different expected neighborhood sizes, varying sample size}
	\label{fig:Sec5b} \vspace{-2mm} 
\end{figure}

\begin{figure}[!t]
	\centering 
	\hspace{-2mm}
	\begin{subfigure}[!htb]{.30\textwidth}
		\includegraphics[width=\textwidth,height= 40mm]{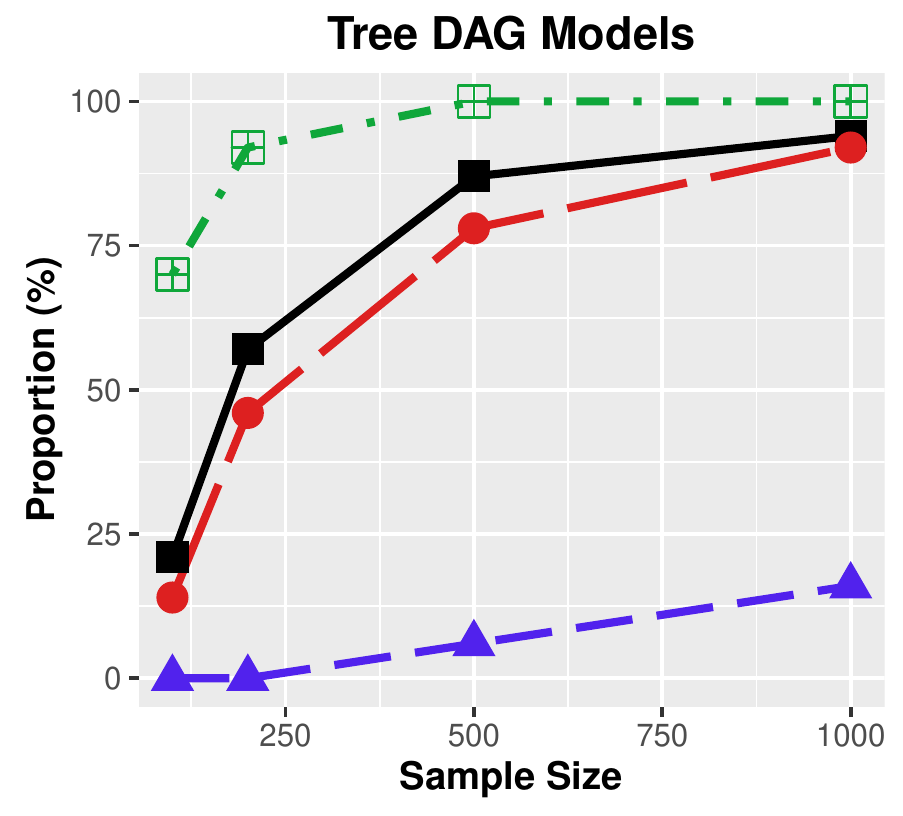}
		\caption{Tree}
	\end{subfigure}
	\hspace{-2mm}
	\begin{subfigure}[!htb]{.30\textwidth}
		\includegraphics[width=\textwidth,height= 40mm]{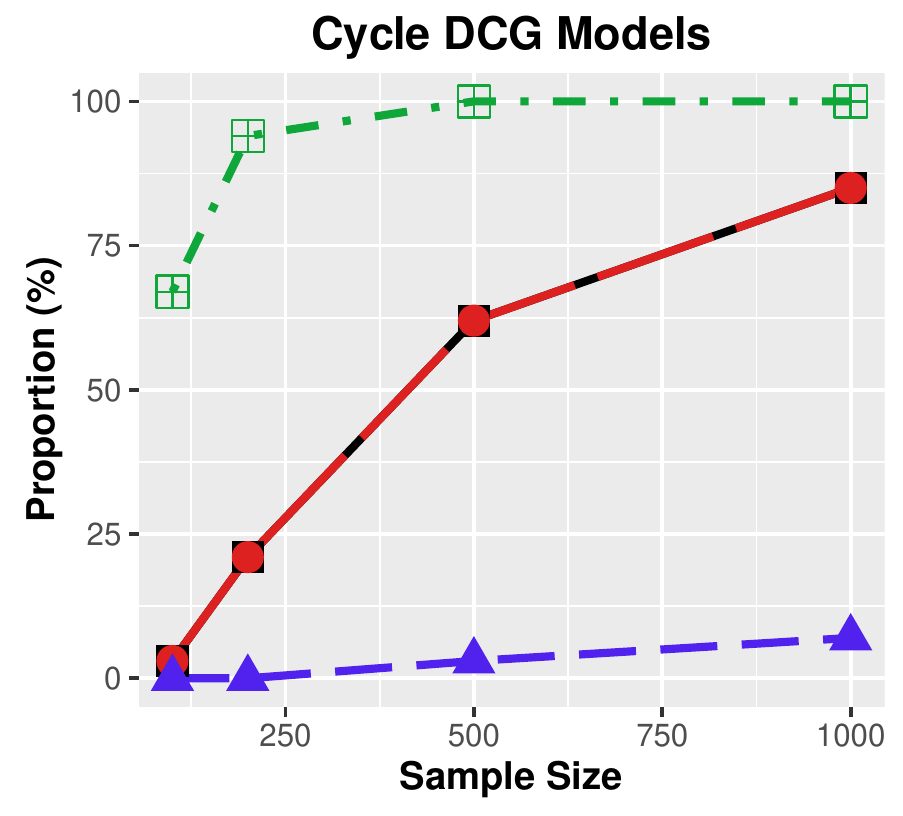}
		\caption{Cycle}
	\end{subfigure}
	\hspace{-2mm}
	\begin{subfigure}[!htb]{.30\textwidth}
		\includegraphics[width=\textwidth,height= 40mm]{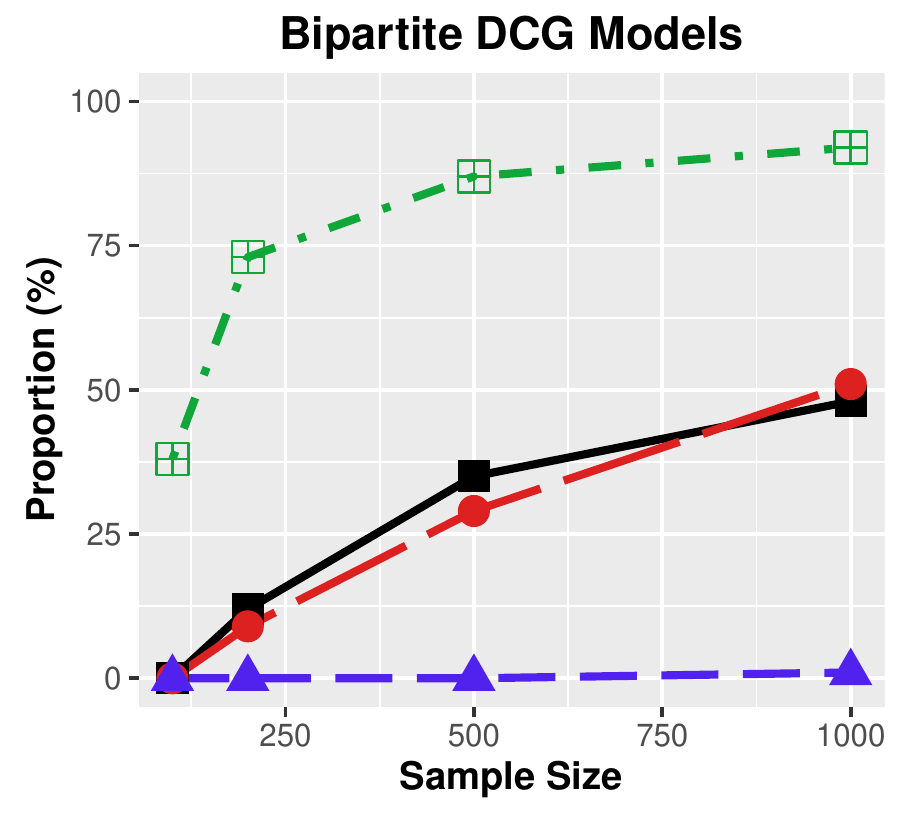}
		\caption{Bipartite}
	\end{subfigure}\hspace{-1mm}
	\begin{subfigure}[!htb]{.10\textwidth}
		\includegraphics[width=\textwidth,height= 32mm, trim = 7mm 0mm 7mm 5mm, clip]{legend01.png}
	\end{subfigure}
	\caption{Proportions of special types of 5-node DAG and DCG models satisfying the CFC, \MDR, identifiable \SMR, and P-minimality assumptions, varying sample size}
	\label{fig:Sec5e} \vspace{-2mm} 
\end{figure}

\subsection{Comparison to state-of-the-art algorithms}

In this section, we compare Algorithm~\ref{algorithm} to state-of-the-art algorithms for small-scale DCG models in terms of recovering the skeleton $S(G)$ for the graph. This addresses the issue of how likely Algorithm~\ref{algorithm} based on each assumption is to recover the skeleton of a graph compared to state-of-the-art algorithms.

Once again we used Fisher's conditional correlation test with significance level $\alpha = 0.001$ for Step 1) of Algorithm~\ref{algorithm}, and we used the MDR and identifiable \SMR~assumptions for Step 2). For comparison algorithms, we used the state-of-the-art GES algorithm~\citep{chickering2002finding} and the FCI$+$ algorithms~\citep{claassen2013learning} for small-scale DCG models. We used the R package 'pcalg'~\citep{Kalisch2012} for the FCI$+$ algorithm, and 'bnlearn'~\citep{scutari2009learning} for the GES algorithm. 

\begin{figure}[!t]
	\centering 
	\hspace{-2mm}
	\begin{subfigure}[!htb]{.30\textwidth}
		\includegraphics[width=\textwidth,height= 40mm]{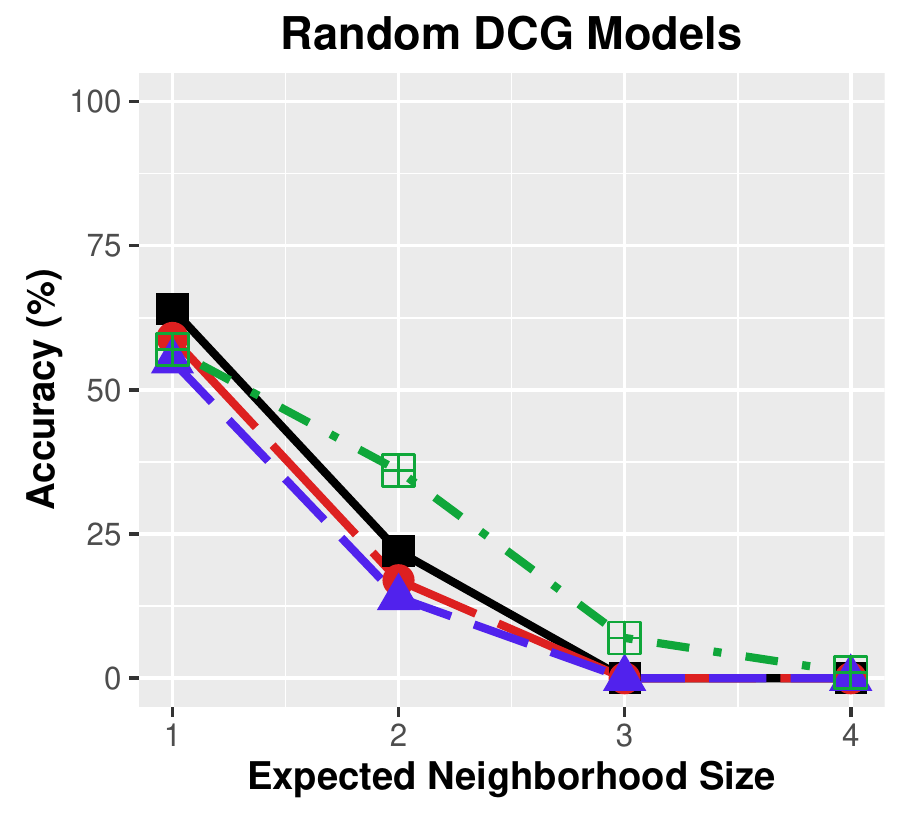}
		\caption{$n=100$}
	\end{subfigure}
	\hspace{-2mm}
	\begin{subfigure}[!htb]{.30\textwidth}
		\includegraphics[width=\textwidth,height= 40mm]{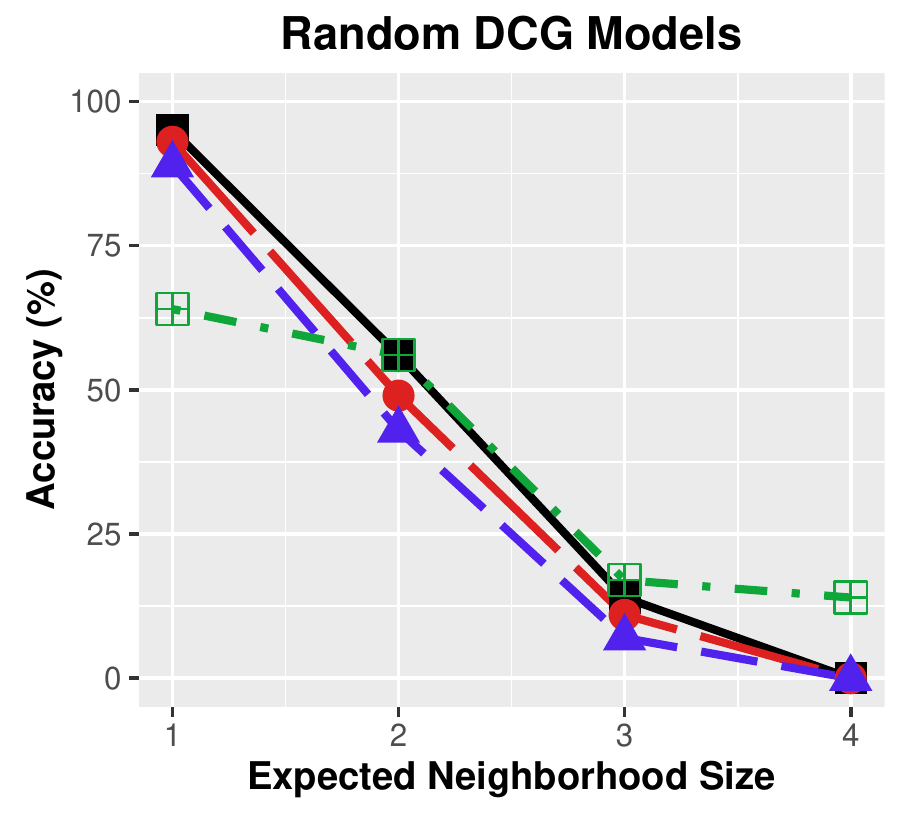}
		\caption{$n=500$}
	\end{subfigure}
	\hspace{-2mm}
	\begin{subfigure}[!htb]{.30\textwidth}
		\includegraphics[width=\textwidth,height= 40mm]{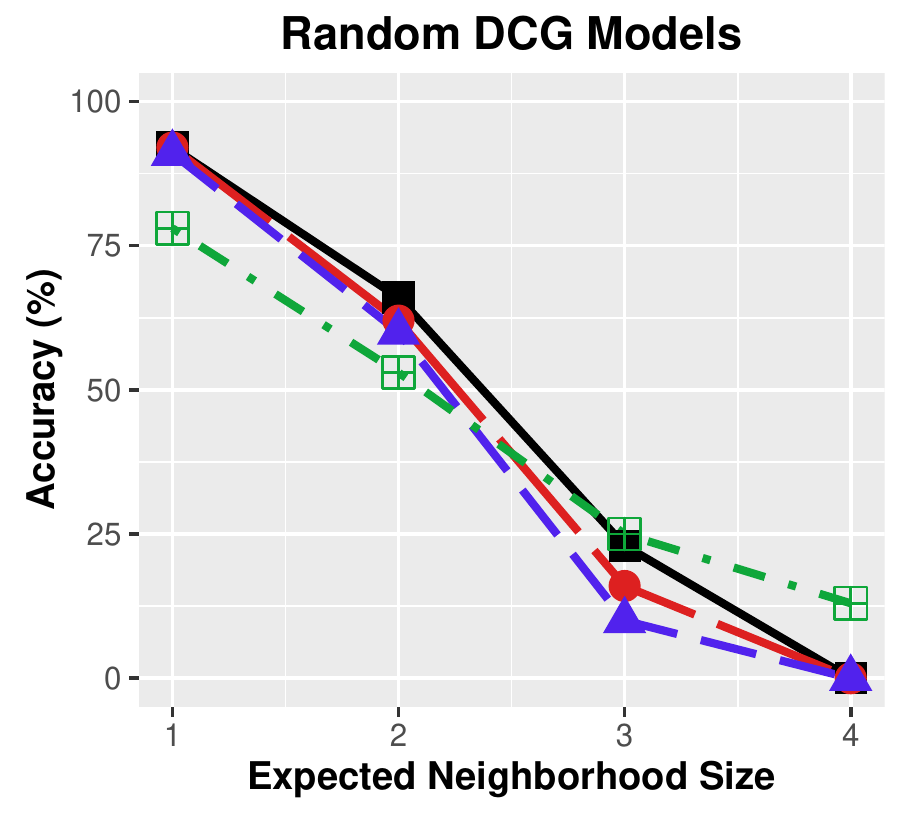}
		\caption{$n=1000$}
	\end{subfigure} \hspace{-1mm}
	\begin{subfigure}[!htb]{.10\textwidth}
		\includegraphics[width=\textwidth,height= 32mm, trim = 4mm 0mm 2mm 17mm, clip]{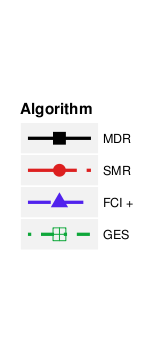}
	\end{subfigure}
	\caption{Accuracy rates of recovering skeletons of 5-node random DCG models using the \MDR~and identifiable \SMR~assumptions, the GES algorithm, and the FCI$+$ algorithm with different sample sizes, varying expected neighborhood size}
	\label{fig:Sec5c} \vspace{-2mm} 
\end{figure}
\begin{figure}[!t]
	\centering
	\hspace{-2mm} 
	\begin{subfigure}[!htb]{.30\textwidth}
		\includegraphics[width=\textwidth,height= 40mm]{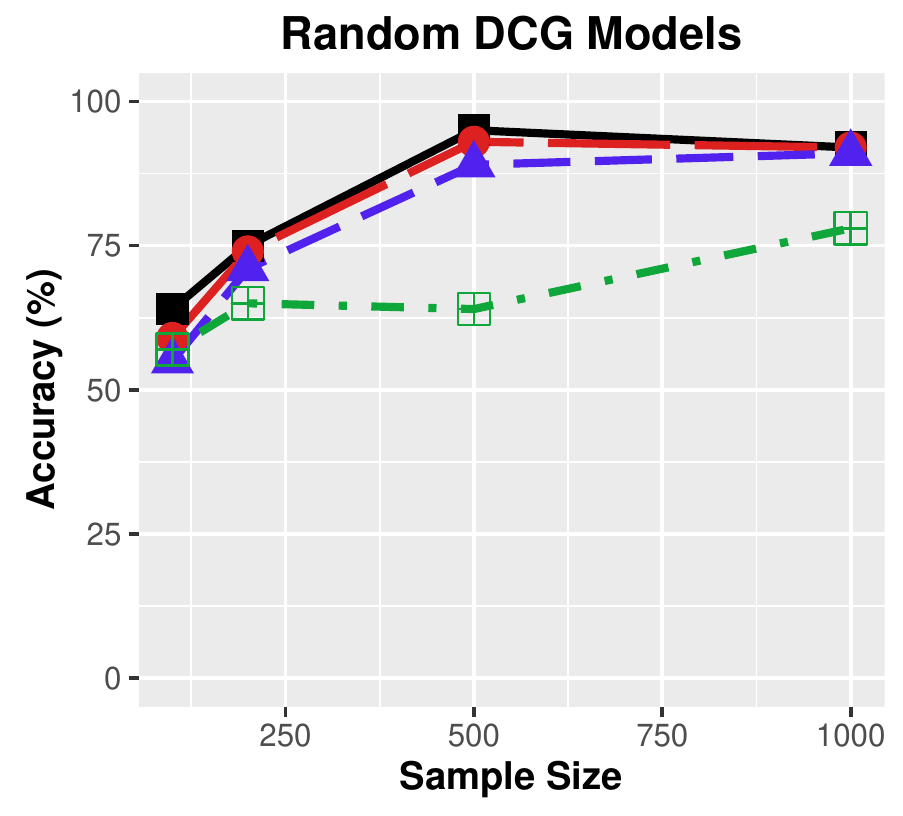}
		\caption{Neighborhood sizes: 1}
	\end{subfigure}
	\hspace{-2mm}
	\begin{subfigure}[!htb]{.30\textwidth}
		\includegraphics[width=\textwidth,height= 40mm]{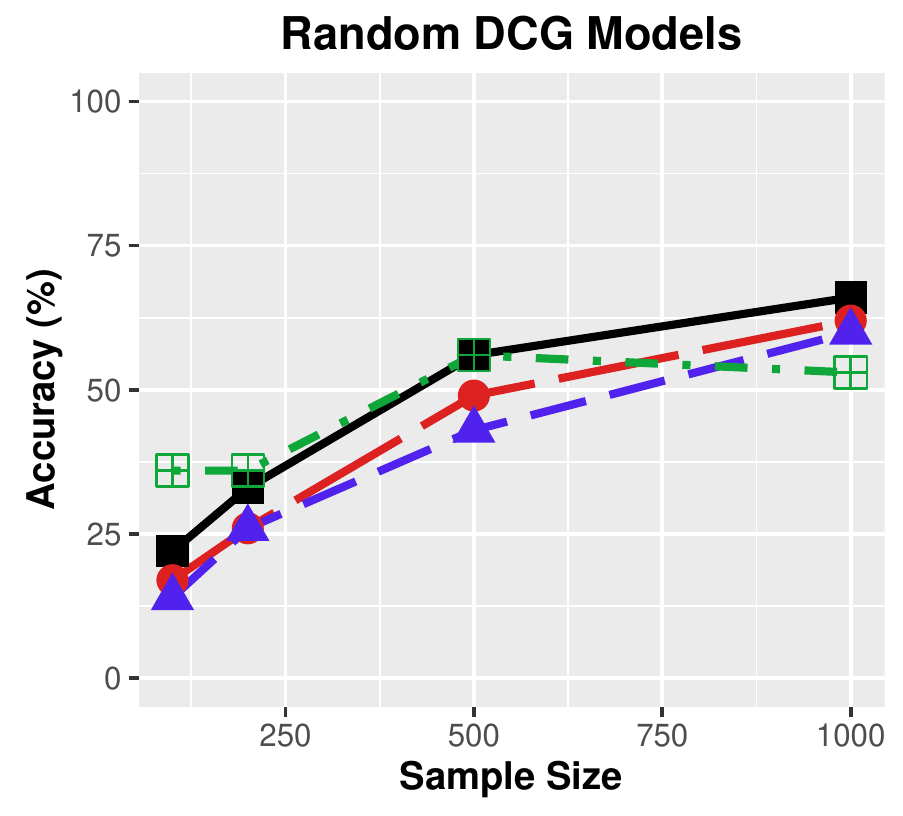}
		\caption{Neighborhood sizes: 2}
	\end{subfigure}
	\hspace{-2mm}
	\begin{subfigure}[!htb]{.30\textwidth}
		\includegraphics[width=\textwidth,height= 40mm]{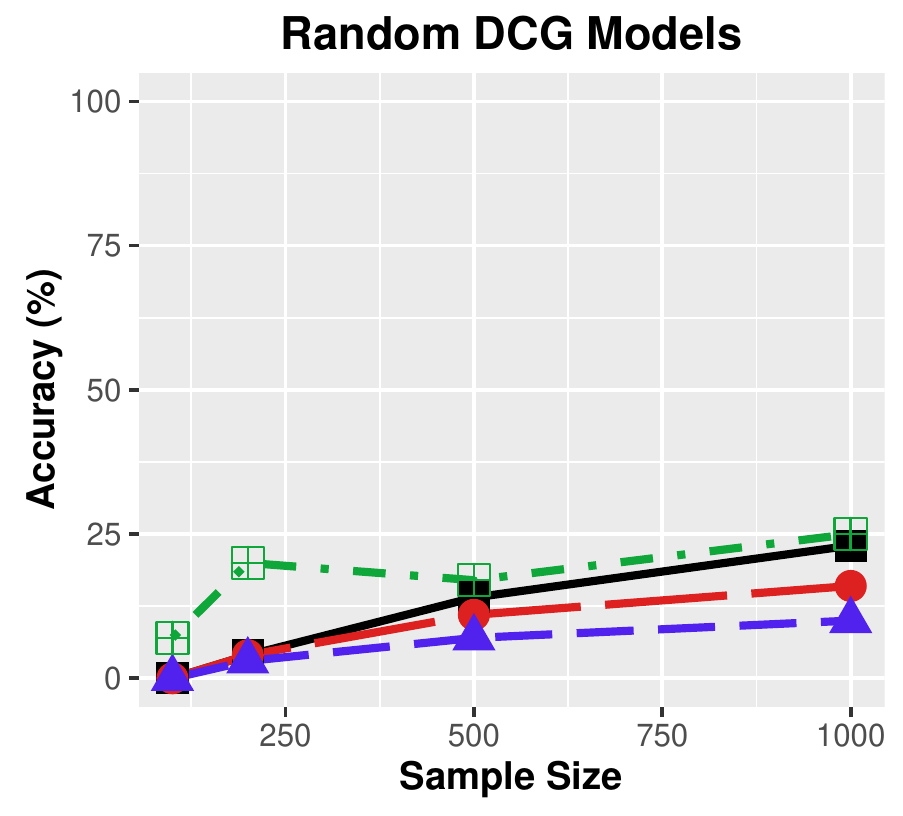}
		\caption{Neighborhood sizes: 3}
	\end{subfigure}\hspace{-1mm}
	\begin{subfigure}[!htb]{.10\textwidth}
		\includegraphics[width=\textwidth,height= 32mm, trim = 4mm 0mm 2mm 17mm, clip]{legend03.png}
	\end{subfigure}
	\caption{Accuracy rates of recovering skeletons of 5-node random DCG models using the \MDR~and identifiable \SMR~assumptions, the GES algorithm, and FCI$+$ algorithm with different expected neighborhood sizes, varying sample size}
	\label{fig:Sec5d} \vspace{-2mm} 
\end{figure}

Figures~\ref{fig:Sec5c} and~\ref{fig:Sec5d} show recovery rates of skeletons for random DCG models with sample sizes $n \in \{100, 200, 500, 1000\}$ and expected neighborhood sizes from $1$ (sparse graph) to $4$ (fully connected graph). Our simulation results show that the accuracy increases as sample size increases because of fewer errors in CI tests. Algorithms~\ref{algorithm} based on the \MDR~and identifiable \SMR~assumptions outperforms the FCI$+$ algorithm on average. For dense graphs, we see that the GES algorithm out-performs other algorithms because the GES algorithm often prefers dense graphs. However, the GES algorithm is not theoretically consistent and cannot recover directed graphs with cycles while other algorithms are designed for recovering DCG models (see e.g., Figure~\ref{fig:Sec5f}). 

\begin{figure}[!t]
	\centering \hspace{-2mm}
	\begin{subfigure}[!htb]{.30\textwidth}
		\includegraphics[width=\textwidth,height= 40mm]{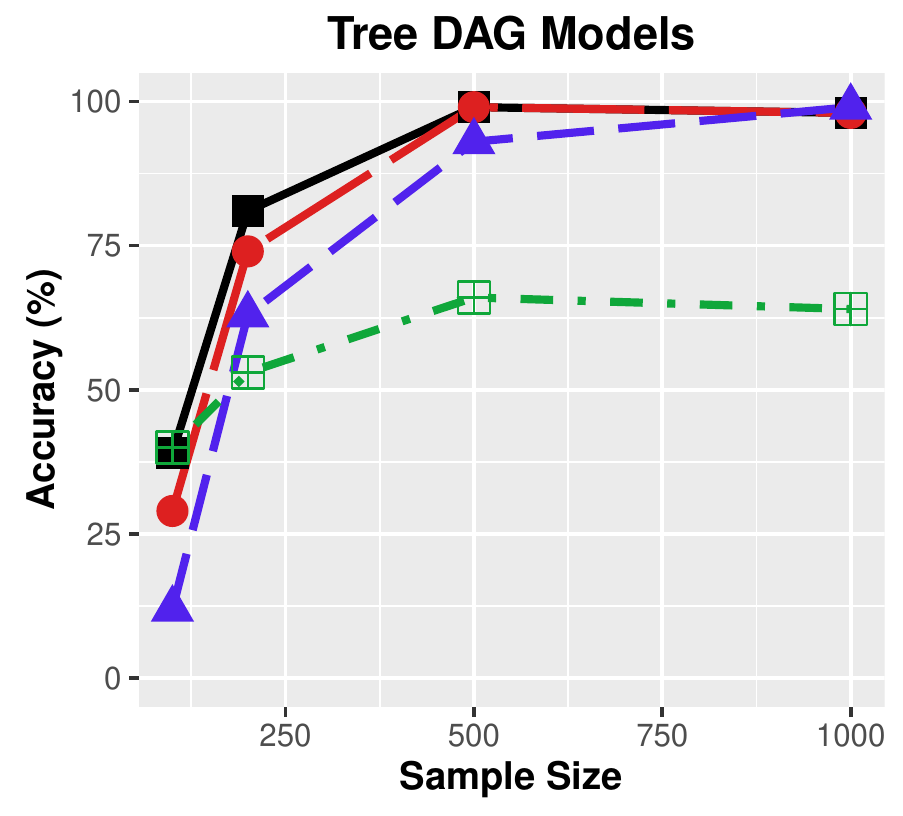}
		\caption{Tree}
	\end{subfigure}
	\hspace{-2mm}
	\begin{subfigure}[!htb]{.30\textwidth}
		\includegraphics[width=\textwidth,height= 40mm]{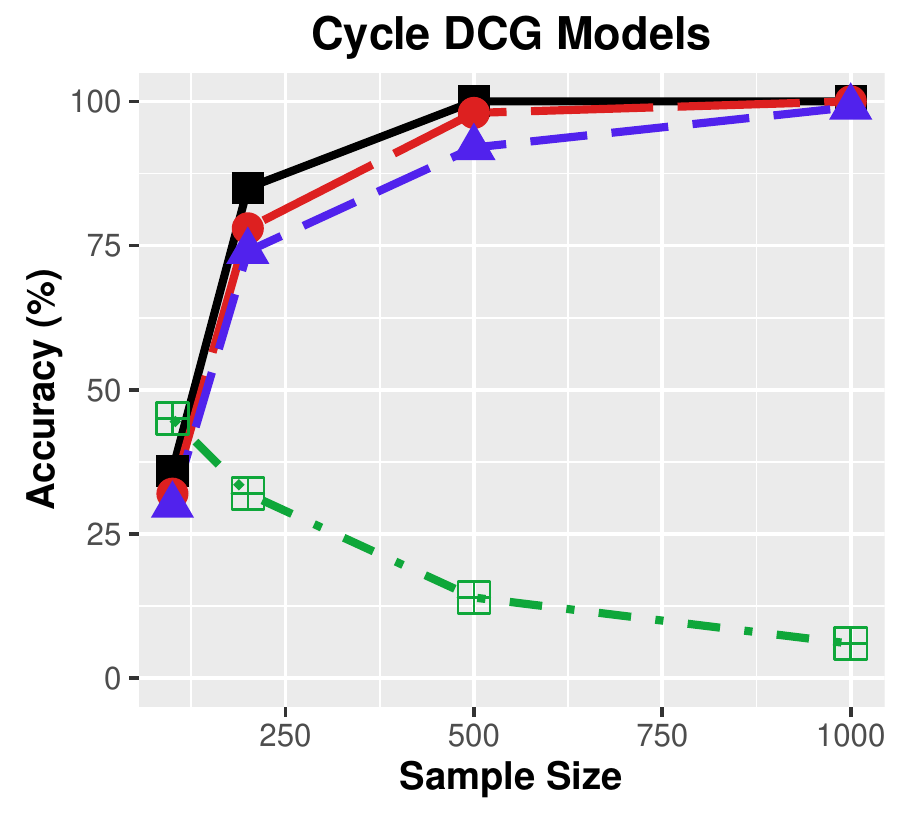}
		\caption{Cycle}
	\end{subfigure}
	\hspace{-2mm}
	\begin{subfigure}[!htb]{.30\textwidth}
		\includegraphics[width=\textwidth,height= 40mm]{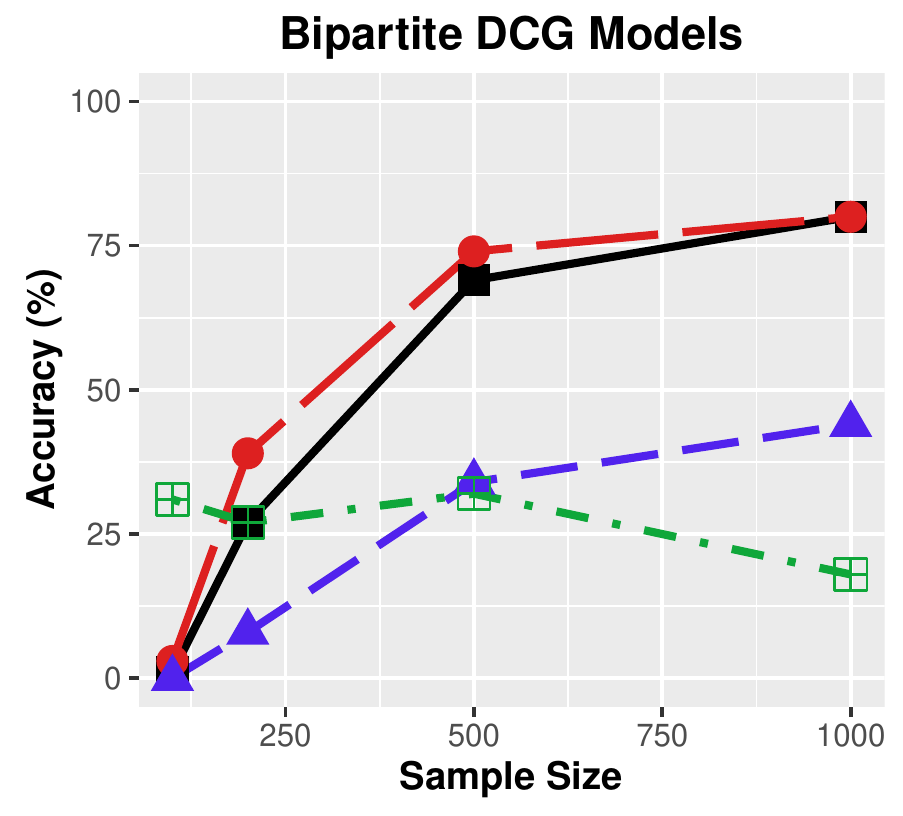}
		\caption{Bipartite}
	\end{subfigure}\hspace{-1mm}
	\begin{subfigure}[!htb]{.10\textwidth}
		\includegraphics[width=\textwidth,height= 32mm, trim = 4mm 0mm 2mm 17mm, clip]{legend03.png}
	\end{subfigure}
	\caption{Accuracy rates of recovering skeletons of special types of 5-node random DAG and DCG models using the \MDR~and identifiable \SMR~assumptions, the GES algorithm, and the FCI$+$ algorithm, varying sample size}
	\label{fig:Sec5f} \vspace{-2mm} 
\end{figure}

Figure~\ref{fig:Sec5f} shows the accuracy for each type of graph (Tree, Cycle, Bipartite) using Algorithms~\ref{algorithm} based on the \MDR~and identifiable \SMR~assumptions and the GES and the FCI$+$ algorithms. Simulation results show that Algorithms~\ref{algorithm} based on the \MDR~and identifiable \SMR~assumptions are favorable in comparison to the FCI+ and GES algorithms for small-scale DCG models.

\section*{Acknowledgement}
GP and GR were both supported by NSF DMS-1407028 over the duration of this project.

\clearpage

\bibliographystyle{abbrv} 

\bibliography{reference_DCG}

\begin{thebibliography}{10}

\bibitem{chickering2002finding}
D.~M. Chickering and C.~Meek.
\newblock Finding optimal bayesian networks.
\newblock In {\em Proceedings of the Eighteenth conference on Uncertainty in
  artificial intelligence}, pages 94--102. Morgan Kaufmann Publishers Inc.,
  2002.

\bibitem{claassen2013learning}
T.~Claassen, J.~Mooij, and T.~Heskes.
\newblock Learning sparse causal models is not np-hard.
\newblock {\em arXiv preprint arXiv:1309.6824}, 2013.

\bibitem{forster2015frugal}
M.~Forster, G.~Raskutti, R.~Stern, and N.~Weinberger.
\newblock The frugal inference of causal relations.
\newblock {\em British Journal for the Philosophy of Science}, 2015.

\bibitem{Glymour1987}
C.~Glymour, R.~Scheines, P.~Spirtes, and K.~Kelly.
\newblock Discovering causal structure: Artificial intelligence.
\newblock {\em Philosophy of science, and Statistical Modeling}, pages
  205--212, 1987.

\bibitem{hyttinen2010causal}
A.~Hyttinen, F.~Eberhardt, and P.~O. Hoyer.
\newblock Causal discovery for linear cyclic models with latent variables.
\newblock {\em on Probabilistic Graphical Models}, page 153, 2010.

\bibitem{hyttinen2012causal}
A.~Hyttinen, F.~Eberhardt, and P.~O. Hoyer.
\newblock Causal discovery of linear cyclic models from multiple experimental
  data sets with overlapping variables.
\newblock {\em arXiv preprint arXiv:1210.4879}, 2012.

\bibitem{hyttinen2012learning}
A.~Hyttinen, F.~Eberhardt, and P.~O. Hoyer.
\newblock Learning linear cyclic causal models with latent variables.
\newblock {\em The Journal of Machine Learning Research}, 13(1):3387--3439,
  2012.

\bibitem{hyttinen2013experiment}
A.~Hyttinen, F.~Eberhardt, and P.~O. Hoyer.
\newblock Experiment selection for causal discovery.
\newblock {\em The Journal of Machine Learning Research}, 14(1):3041--3071,
  2013.

\bibitem{hyttinen2013discovering}
A.~Hyttinen, P.~O. Hoyer, F.~Eberhardt, and M.~Jarvisalo.
\newblock Discovering cyclic causal models with latent variables: A general
  sat-based procedure.
\newblock {\em arXiv preprint arXiv:1309.6836}, 2013.

\bibitem{Kalisch2012}
M.~Kalisch, M.~M{\"a}chler, D.~Colombo, M.~H. Maathuis, and P.~B{\"u}hlmann.
\newblock Causal inference using graphical models with the r package pcalg.
\newblock {\em Journal of Statistical Software}, 47(11):1--26, 2012.

\bibitem{lauritzen1996graphical}
S.~L. Lauritzen.
\newblock {\em Graphical models}.
\newblock Clarendon Press, Oxford, 1996.

\bibitem{Lauritzen1990}
S.~L. Lauritzen, A.~P. Dawid, B.~N. Larsen, and H.-G. Leimer.
\newblock Independence properties of directed markov fields.
\newblock {\em Networks}, 20(5):491--505, 1990.

\bibitem{mooij2011causal}
J.~M. Mooij, D.~Janzing, T.~Heskes, and B.~Sch{\"o}lkopf.
\newblock On causal discovery with cyclic additive noise models.
\newblock In {\em Advances in neural information processing systems}, pages
  639--647, 2011.

\bibitem{Pearl2014}
J.~Pearl.
\newblock {\em Probabilistic reasoning in intelligent systems: networks of
  plausible inference}.
\newblock Morgan Kaufmann, San Mateo, California, 1988.

\bibitem{pearl2000}
J.~Pearl.
\newblock Causality: models, reasoning and inference.
\newblock {\em Economet. Theor}, 19:675--685, 2003.

\bibitem{Raskutti2013}
G.~Raskutti and C.~Uhler.
\newblock Learning directed acyclic graphs based on sparsest permutations.
\newblock {\em arXiv preprint arXiv:1307.0366}, 2013.

\bibitem{Richardson1994}
T.~Richardson.
\newblock Properties of cyclic graphical models.
\newblock {\em MS Thesis, Carnegie Mellon Univ}, 1994.

\bibitem{Richardson1996}
T.~Richardson.
\newblock A discovery algorithm for directed cyclic graphs.
\newblock In {\em Proceedings of the Twelfth international conference on
  Uncertainty in artificial intelligence}, pages 454--461. Morgan Kaufmann
  Publishers Inc., 1996.

\bibitem{Richardson1995}
T.~Richardson.
\newblock A polynomial-time algorithm for deciding markov equivalence of
  directed cyclic graphical models.
\newblock In {\em Proceedings of the Twelfth international conference on
  Uncertainty in artificial intelligence}, pages 462--469. Morgan Kaufmann
  Publishers Inc., 1996.

\bibitem{scutari2009learning}
M.~Scutari.
\newblock Learning bayesian networks with the bnlearn r package.
\newblock {\em arXiv preprint arXiv:0908.3817}, 2009.

\bibitem{Spirtes1995}
P.~Spirtes.
\newblock Directed cyclic graphical representations of feedback models.
\newblock In {\em Proceedings of the Eleventh conference on Uncertainty in
  artificial intelligence}, pages 491--498. Morgan Kaufmann Publishers Inc.,
  1995.

\bibitem{Spirtes1991}
P.~Spirtes and C.~Glymour.
\newblock An algorithm for fast recovery of sparse causal graphs.
\newblock {\em Social science computer review}, 9(1):62--72, 1991.

\bibitem{Spirtes2000}
P.~Spirtes, C.~N. Glymour, and R.~Scheines.
\newblock {\em Causation, prediction, and search}.
\newblock MIT press, Cambridge, Massachusetts, 2000.

\bibitem{Uhler2013}
C.~Uhler, G.~Raskutti, P.~B{\"u}hlmann, B.~Yu, et~al.
\newblock Geometry of the faithfulness assumption in causal inference.
\newblock {\em The Annals of Statistics}, 41(2):436--463, 2013.

\bibitem{van2013ell}
S.~Van~de Geer, P.~B{\"u}hlmann, et~al.
\newblock $\ell_0$-penalized maximum likelihood for sparse directed acyclic
  graphs.
\newblock {\em The Annals of Statistics}, 41(2):536--567, 2013.

\bibitem{udea1991equivalence}
T.~Verma and J.~Pearl.
\newblock Equivalence and synthesis of causal models.
\newblock In {\em Proceedings of Sixth Conference on Uncertainty in Artijicial
  Intelligence}, pages 220--227, 1991.

\bibitem{verma1992algorithm}
T.~Verma and J.~Pearl.
\newblock An algorithm for deciding if a set of observed independencies has a
  causal explanation.
\newblock In {\em Proceedings of the Eighth international conference on
  uncertainty in artificial intelligence}, pages 323--330. Morgan Kaufmann
  Publishers Inc., 1992.

\bibitem{Zhang2013}
J.~Zhang.
\newblock A comparison of three occam's razors for markovian causal models.
\newblock {\em The British Journal for the Philosophy of Science}, page axs005,
  2012.

\end{thebibliography}

\clearpage

\section{Appendix}

\subsection*{Examples for Theorem~\ref{Thm:Sec3a} (d)}

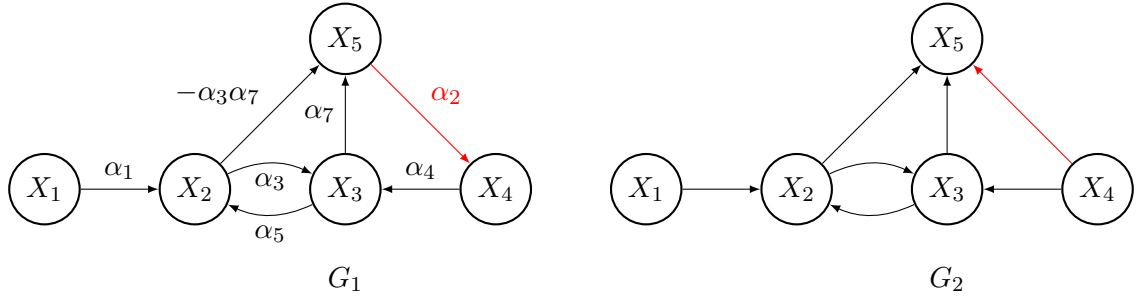
\begin{figure}[!ht]
	\centering
	\begin {tikzpicture}[ -latex ,auto,
	state/.style={circle, draw=black, fill= white, thick, minimum size= 2mm},
	label/.style={thick, minimum size= 2mm}
	]
	\node[state] (A) at(0,0) {$X_1$};
	\node[state] (B) at(2.0,0) {$X_2$};
	\node[state] (C) at(4, 0) {$X_3$};
	\node[state] (D) at(6,0) {$X_4$};
	\node[state] (E) at(4,2) {$X_5$};
	\node[label] (G1) at(4, -1.2) {$G_1$};
	
	\node[state] (A2) at(8,0) {$X_1$};
	\node[state] (B2) at(10,0) {$X_2$};
	\node[state] (C2) at(12,0) {$X_3$};
	\node[state] (D2) at(14,0) {$X_4$};
	\node[state] (E2) at(12,2) {$X_5$};
	\node[label] (G1) at(12, -1.2) {$G_2$};
	
	\path (A) edge [right=25] node[above]  { $\alpha_1$ } (B);
	\path (B) edge [bend left =25] node[below]  { $\alpha_3$ } (C);
	\path (C) edge [bend left =25] node[below]  { $\alpha_5$ } (B);
	\path (D) edge [left =25] node[above] {  $\alpha_4$ } (C);
	\path (B) edge [left =25] node[above left]  { $-\alpha_3 \alpha_7$ } (E);
	\path (C) edge [left =25] node[left]  { $\alpha_7$ } (E);
	\path (E) edge [left =25, color= red] node[above right]  { $\alpha_2$} (D);

	\path (A2) edge [right=25] node[above]  { } (B2);
	\path (B2) edge [bend left =25] node[below]  {  } (C2);
	\path (C2) edge [bend left =25] node[below]  {  } (B2);
	\path (D2) edge [left =25] node[above] {  } (C2);
	\path (B2) edge [left =25] node[above left]  { } (E2);
	\path (C2) edge [left =25] node[left]  {  } (E2);
	\path (D2) edge [left =25,  color= red] node[above right]  { } (E2);
	
\end{tikzpicture}
\caption{5-node examples for Theorem \ref{Thm:Sec3a} (d)}
\label{Fig:App2}
\end{figure}

Suppose that $(G_1,\mathbb{P})$ is a Gaussian linear DCG model~with specified edge weights in Figure~\ref{Fig:App2}. With this choice of distribution $\mathbb{P}$ based on $G_1$ in Figure~\ref{Fig:App2}, we have a set of CI statements which are the same as the set of d-separation rules entailed by $G_1$ and an additional set of CI statements, $CI(\mathbb{P}) \supset \{ X_1 \independent X_4 |~ \emptyset \textrm{, or }  X_5,~ X_1 \independent X_5 |~ \emptyset \textrm{, or } X_4\}$.

It is clear that $(G_2, \mathbb{P})$ satisfies the CMC, $D_{sep}(G_1) \subset D_{sep}(G_2)$ and $D_{sep}(G_1) \neq D_{sep}(G_2)$ (explained in Section~\ref{SecSMRFrugality}). This implies that $(G_1, \mathbb{P})$ fails to satisfy the P-minimality assumption. 

Now we prove that $(G_1, \mathbb{P})$ satisfies the weak \SMR~assumption. Suppose that $(G_1, \mathbb{P})$ does not satisfy the weak \SMR~assumption. Then there exists a $G$ such that $(G,\mathbb{P})$ satisfies the CMC and has fewer edges than $G_1$. By Lemma~\ref{Lem:Sec3b}, if $(G, \mathbb{P})$ satisfies the CFC, $G$ satisfies the weak \SMR~assumption.
Note that $G_1$ does not have edges between $(X_1, X_4)$ and $(X_1, X_5)$. Since the only additional conditional independence statements that are not entailed by $G_1$ are $\{ X_1 \independent X_4 |~ \emptyset \textrm{, or } X_5,~ X_1 \independent X_5 |~ \emptyset \textrm{, or } X_4\}$, no graph that satisfies the CMC with respect to $\mathbb{P}$ can have fewer edges than $G_1$. This leads to a contradiction and hence $(G_1, \mathbb{P})$ satisfies the weak \SMR~assumption.

\subsection{Proof of Lemma~\ref{Lem:Sec4a} (a) }

\label{Proof:lemma(a)}

\begin{proof}
	
	\begin{figure}[!t]
		\centering
		\begin {tikzpicture}[ -latex ,auto,
		state/.style={circle, draw=black, fill= white, thick, minimum size= 2mm},
		state2/.style={circle, draw=black, fill= white, minimum size= 5mm},
		label/.style={thick, minimum size= 2mm}
		]
		
		\node[state] (Y1)  at (0,0) {$X_1$};
		\node[state] (Y2)  at (2,1.7) {$X_2$};
		\node[state] (Y3)  at (2,0) {$X_3$};
		\node[state] (Y4)  at (2,-1.7) {$X_4$};
		\node[state] (Y5)  at (4,0) {$X_5$};
		\node[label] (Y12) at (2,-2.7) {$G_1$};
		
		\path (Y1) edge [right =-35] node[above]  { } (Y3);
		\path (Y2) edge [bend right = 30, color = red] node[above]  { } (Y1);
		\path (Y2) edge [bend right =-35] node[above]  { } (Y4);
		\path (Y2) edge [bend right = -30] node[above]  { } (Y5);
		\path (Y4) edge [bend right = 30 ] node[above]  { } (Y5);
		\path (Y5) edge [bend right =35, color = red] node[above]  { } (Y1);
		\path (Y5) edge [right =35] node[above]  { } (Y3);

		\node[state] (X1)  at (7,0) {$X_1$};
		\node[state] (X2)  at (9,1.7) {$X_2$};
		\node[state] (X3)  at (9,0) {$X_3$};
		\node[state] (X4)  at (9,-1.7) {$X_4$};
		\node[state] (X5)  at (11,0) {$X_5$};
		\node[label] (X12) at (9,-2.7) {$G_2$};
		
		\path (X1) edge [bend right =-30, color = red] node[above]  { } (X2);
		\path (X1) edge [right =25] node[above]  { } (X3);
		\path (X4) edge [bend right =-30, color = red] node[above]  { } (X1);
		\path (X2) edge [bend right =-30] node[above]  { } (X5);
		\path (X5) edge [right =25] node[above]  { } (X3);
		\path (X4) edge [bend right =30] node[above]  { } (X5);
		
	\end{tikzpicture}
	
	\caption{5-node examples for Lemma~\ref{Lem:Sec4a}.(a)}
	\label{fig:Sec4cA}
\end{figure}
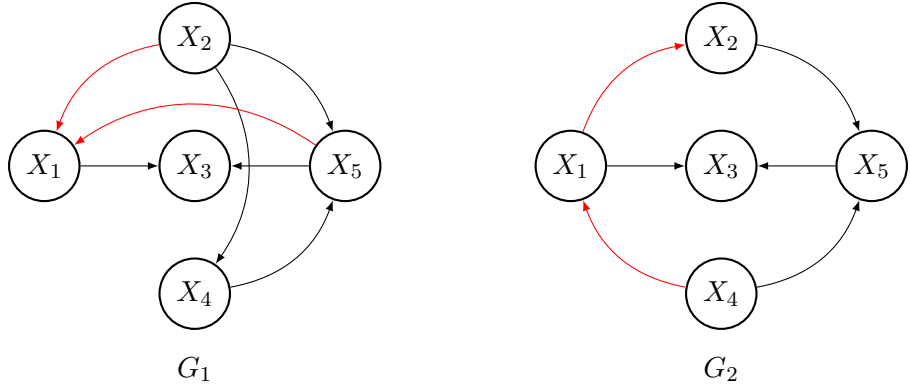

Here we show that $(G_1,\mathbb{P})$ satisfies the identifiable SMR assumption and and $(G_2,\mathbb{P})$ satisfies the MDR assumption, where $\mathbb{P}$ has the following CI statements:
\begin{align*}
CI(\mathbb{P}) = \{ & X_2 \independent X_3 \mid (X_1, X_5) \textrm{ or } (X_1, X_4, X_5);  X_2 \independent X_4 \mid X_1; \\
& X_1 \independent X_4 \mid (X_2, X_5) \textrm{ or } (X_2, X_3, X_5); X_1 \independent X_5 \mid (X_2, X_4); \\
& X_3 \independent X_4 \mid (X_1, X_5), (X_2, X_5),\textrm{ or } (X_1, X_2, X_5) \}.
\end{align*}

Clearly both DAGs $G_1$ and $G_2$ do not belong to the same \MEC~since they have different skeletons. To be explicit, we state all d-separation rules entailed by $G_1$ and $G_2$. Both graphs entail the following sets of d-separation rules:

\begin{itemize}
	\item $X_2$ is d-separated from $X_3$ given $(X_1, X_5)$ or $(X_1, X_4, X_5)$. 
	\item $X_3$ is d-separated from $X_4$ given $(X_1, X_5)$ or $(X_1, X_2, X_5)$.
\end{itemize}

The set of d-separation rules entailed by $G_1$ which are not entailed by $G_2$ is as follows:

\begin{itemize}
	\item $X_1$ is d-separated from $X_4$ given $(X_2, X_5)$ or $(X_2, X_4, X_5)$.
	\item $X_3$ is d-separated from $X_4$ given $(X_2, X_5)$.
\end{itemize}

Furthermore, the set of d-separation rules entailed by $G_2$ which are not entailed by $G_1$ is as follows:
\begin{itemize}
	\item $X_1$ is d-separated from $X_5$ given $(X_2, X_4)$.
	\item $X_2$ is d-separated from $X_4$ given $X_1$.
\end{itemize}

With our choice of distribution, both DAG models $(G_1, \mathbb{P})$ and $(G_2, \mathbb{P})$ satisfy the CMC and it is straightforward to see that $G_2$ has fewer edges than $G_1$ while $G_1$ entails more d-separation rules than $G_2$. 

It can be shown from an exhaustive search that there is no graph $G$ such that $G$ is sparser or as sparse as $G_2$ and $(G, \mathbb{P})$ satisfies the CMC. Moreover, it can be shown that $G_1$ entails the maximum d-separation rules amongst graphs satisfying the CMC with respect to the distribution again through an exhaustive search. Therefore $(G_1, \mathbb{P})$ satisfies the \MDR~assumption and $(G_2, \mathbb{P})$ satisfies the identifiable \SMR~assumption.

\end{proof}

\subsection{Proof of Lemma \ref{Lem:Sec4a} (b) }

\label{Proof:lemma(b)}

\begin{proof}
	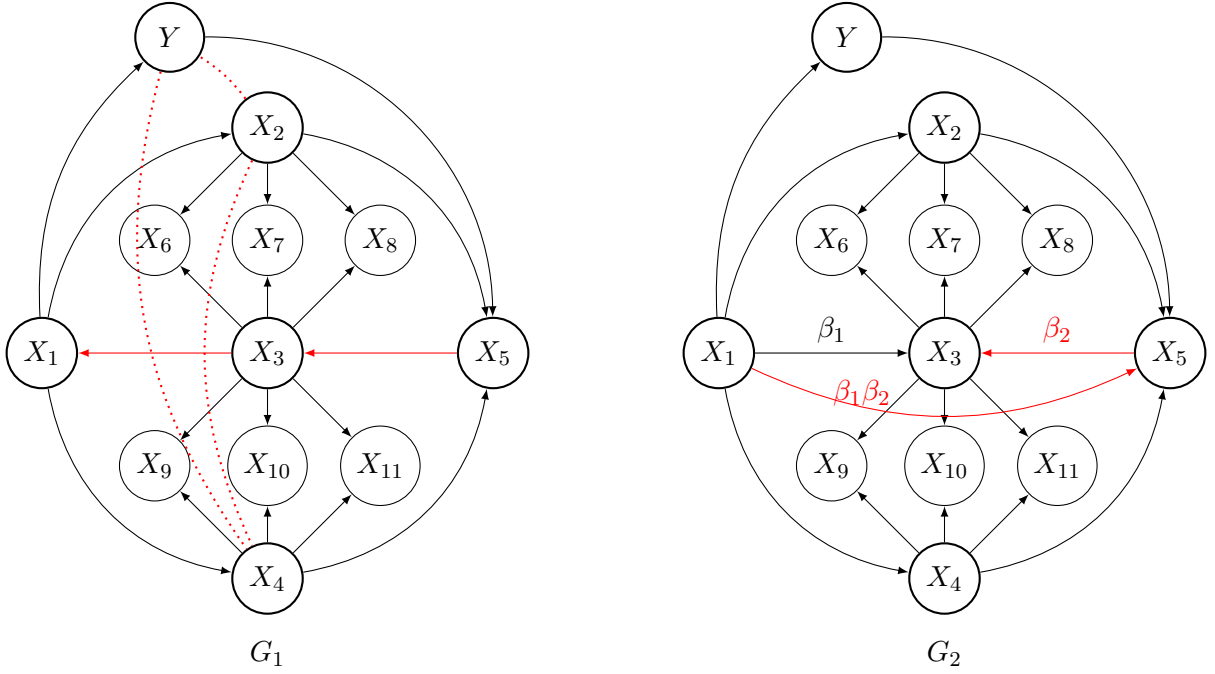
\begin{figure}[!h]
		\centering
		\begin {tikzpicture}[ -latex ,auto,
		state/.style={circle, draw=black, fill= white, thick, minimum size= 2mm},
		state2/.style={circle, draw=black, fill= white, minimum size= 2mm},
		label/.style={thick, minimum size= 2mm}
		]
		\node[state] (Z1)  at (0,  0) {$X_1$};
		\node[state] (Z2)  at (3,  3) {$X_2$};
		\node[state] (Z3)  at (3,  0) {$X_3$};
		\node[state] (Z4)  at (3, -3) {$X_4$};
		\node[state] (Z5)  at (6,  0) {$X_5$};
		\node[state2] (Z6)  at (1.5, 1.5) {$X_6$};
		\node[state2] (Z7)  at (3,   1.5) {$X_7$};
		\node[state2] (Z8)  at (4.5, 1.5) {$X_8$};
		\node[state2] (Z9)  at (1.5,-1.5) {$X_9$};
		\node[state2] (Z10) at (3,  -1.5) {$X_{10}$};
		\node[state2] (Z11) at (4.5,-1.5) {$X_{11}$};
		
		\node[state] (K1) at (1.7, 4.2) {$~Y~$} ;
		
		\node[label] (Z12) at (3,-4) {$G_1$};
		
		\path (Z1) edge [bend right=-35] node[above]  { } (Z2);
		\path (Z3) edge [right=25, color =red] node[above]  { } (Z1);
		\path (Z1) edge [ bend right= 35] node[above]  { } (Z4);
		\path (Z2) edge [bend right= -35] node[above]  { } (Z5);
		\path (Z5) edge [right=25, color = red] node[above]  { } (Z3);
		\path (Z4) edge [bend right=35] node[above]  { } (Z5);
		
		\path (Z2) edge [right=25] node[above]  { } (Z6);
		\path (Z2) edge [right=25] node[above]  { } (Z7);
		\path (Z2) edge [right=25] node[above]  { } (Z8);
		
		\path (Z3) edge [right=25] node[above]  { } (Z6);
		\path (Z3) edge [right=25] node[above]  { } (Z7);
		\path (Z3) edge [right=25] node[above]  { } (Z8);
		
		\path (Z3) edge [right=25] node[above]  { } (Z9);
		\path (Z3) edge [right=25] node[above]  { } (Z10);
		\path (Z3) edge [right=25] node[above]  { } (Z11);
		
		\path (Z4) edge [right=25] node[above]  { } (Z9);
		\path (Z4) edge [right=25] node[above]  { } (Z10);
		\path (Z4) edge [right=25] node[above]  { } (Z11);
		
		\path (Z2) edge [-, dotted, color =red, thick, bend right=25] node[above]  { } (Z4);
		\path (Z2) edge [-, dotted, color =red, thick, bend right=10] node[above]  { } (K1);
		\path (K1) edge [-, dotted, color =red, thick, bend right=25] node[above]  { } (Z4);	
		\path (Z1) edge [bend right=-25] node[above]  { } (K1);
		\path (K1) edge [bend right=-45] node[above]  { } (Z5);
		
		\node[state] (K2) at (10.7, 4.2) {$~Y~$} ;
		
		\node[state] (Y1)  at (9,0) {$X_1$};
		\node[state] (Y2)  at (12,3) {$X_2$};
		\node[state] (Y3)  at (12,0) {$X_3$};
		\node[state] (Y4)  at (12,-3) {$X_4$};
		\node[state] (Y5)  at (15,0) {$X_5$};
		\node[state2] (Y6)  at (10.5,1.5) {$X_6$};
		\node[state2] (Y7)  at (12,1.5) {$X_7$};
		\node[state2] (Y8)  at (13.5,1.5) {$X_8$};
		\node[state2] (Y9)  at (10.5,-1.5) {$X_9$};
		\node[state2] (Y10) at (12,-1.5) {$X_{10}$};
		\node[state2] (Y11) at (13.5,-1.5) {$X_{11}$};
		\node[label] (Y12) at (12,-4) {$G_2$};
		
		\path (Y1) edge [bend right=-35] node[above left]  {  } (Y2);
		\path (Y1) edge [right=25] node[above]  { $\beta_1$ } (Y3);
		\path (Y1) edge [bend right=35] node[below left]  {  } (Y4);
		\path (Y2) edge [bend right=-35] node[above right]  {  } (Y5);
		\path (Y5) edge [right=25, color = red] node[above]  { $\beta_2$ } (Y3);
		\path (Y4) edge [bend right=35] node[below right]   {  } (Y5);
		
		\path (Y2) edge [right=25] node[above]  { } (Y6);
		\path (Y2) edge [right=25] node[above]  { } (Y7);
		\path (Y2) edge [right=25] node[above]  { } (Y8);
		
		\path (Y3) edge [right=25] node[above]  { } (Y6);
		\path (Y3) edge [right=25] node[above]  { } (Y7);
		\path (Y3) edge [right=25] node[above]  { } (Y8);
		
		\path (Y3) edge [right=25] node[above]  { } (Y9);
		\path (Y3) edge [right=25] node[above]  { } (Y10);
		\path (Y3) edge [right=25] node[above]  { } (Y11);
		
		\path (Y4) edge [right=25] node[above]  { } (Y9);
		\path (Y4) edge [right=25] node[above]  { } (Y10);
		\path (Y4) edge [right=25] node[above]  { } (Y11);
		\path (Y1) edge [bend right= 25, color = red] node[ above left ]  {$\beta_1 \beta_2~~~~~~$ } (Y5);
		
		\path (Y1) edge [bend right=-25] node[above]  { } (K2);
		\path (K2) edge [bend right=-45] node[above]  { } (Y5);
		
	\end{tikzpicture}
	
	\caption{12-node examples for Lemma~\ref{Lem:Sec4a}.(b)}
	\label{fig:Sec4dA}
\end{figure}

Suppose that the pair $(G_2,\mathbb{P})$ is a Gaussian linear DCG model with specified edge weights in Figure~\ref{fig:Sec4dA}, where the non-specified edge weights can be chosen arbitrarily. Once again to be explicit, we state all d-separation rules entailed by $G_1$ and $G_2$. Both graphs entail the following sets of d-separation rules:
\begin{itemize}
	\item[(1)] For any node $A \in \{X_6,X_7,X_8\}$ and $B \in \{X_1,X_5\}$, $A$ is d-separated from $B$ given $\{X_2, X_3\} \cup C$ for any $C \subset \{ X_1,X_4,X_5,X_6,X_7,X_8, X_9, X_{10}, X_{11},Y \} \setminus \{A,B\}$. 
	\item[(2)] For any node $A \in \{X_9,X_{10},X_{11}\}$ and $B \in \{X_1,X_5\}$, $A$ is d-separated from $B$ given $\{X_3, X_4\} \cup C$ for any $C \subset \{X_1, X_2, X_3, X_5,X_6,X_7,X_8, X_9, X_{10}$ $, X_{11},Y \} \setminus \{A,B\}$. 
	\item[(3)] For any nodes $A,B \in \{X_6,X_7, X_8\}$, $A$ is d-separated from $B$ given $\{X_2,$ $X_3\} \cup C$ for any $C \subset \{X_1,X_4,X_5,X_6,X_7,X_8,X_9,X_{10},X_{11},Y \}\setminus\{A,B\}$. 
	\item[(4)] For any nodes $A,B \in \{X_9,X_{10}, X_{11}\}$, $A$ is d-separated from $B$ given $\{X_3,X_4\} \cup C$ for any $C \subset \{X_1,X_2,X_5,X_6,X_7,X_8,X_9,X_{10},X_{11},Y \}\setminus\{A,B\}$. 
	\item[(5)] For any nodes $A \in \{X_6,X_7, X_8\}$ and $B \in \{X_4\}$, $A$ is d-separated from $B$ given $\{X_2,X_3\} \cup C$ for any $C \subset \{X_1,X_4,X_5,X_6,X_7,X_8,X_9,X_{10},X_{11},Y \}\setminus\{A,B\}$, or given $\{X_1,X_2,X_5\} \cup D$ for any $D \subset \{X_4,X_6,X_7,X_8,Y \}\setminus\{A,B\}$.
	\item[(6)] For any nodes $A \in \{X_6, X_7, X_8\}$ and $B \in \{Y\}$, $A$ is d-separated from $B$ given $\{X_2,X_3\} \cup C$ for any $C \subset \{X_1,X_4,X_5,X_6,X_7,X_8,X_9,X_{10},X_{11},Y \}\setminus\{A,B\}$, or given $\{X_1,X_2,X_5\} \cup D$ for any $D \subset \{X_4,X_6,X_7,X_8,,X_9,X_{10}$ $,X_{11},Y \}\setminus\{A,B\}$.
	\item[(7)] For any nodes $A \in \{X_9,X_{10}, X_{11}\}$ and $B \in \{X_2\}$, $A$ is d-separated from $B$ given $\{X_3,X_4\} \cup C$ for any $C \subset \{X_1,X_2,X_5,X_9,X_{10},X_{11},Y \}\setminus\{A,B\}$, or given $\{X_1,X_4,X_5\} \cup D$ for any $D \subset \{X_2,X_9,X_{10},X_{11},Y \}\setminus\{A,B\}$.
	\item[(8)] For any nodes $A \in \{X_9,X_{10}, X_{11}\}$ and $B \in \{Y\}$, $A$ is d-separated from $B$ given $\{X_3,X_4\} \cup C$ for any $C \subset \{X_1,X_2,X_5,X_6,X_7,X_8,X_9,X_{10},X_{11},Y \}\setminus\{A,B\}$, or given $\{X_1,X_4,X_5\} \cup D$ for any $D \subset \{X_2,X_6,X_7,X_8,X_9,X_{10}$ $,X_{11},Y \}\setminus\{A,B\}$.
	\item[(9)] For any nodes $A\in \{X_6,X_7, X_8\}$, $B \in \{X_9,X_{10}, X_{11}\}$, $A$ is d-separated from $B$ given $\{X_3\} \cup C \cup D$ for $C \subset \{X_1,X_2,X_4\}$, $C \neq \emptyset$ and $D \subset \{X_1,X_2,X_4,X_5,X_6,X_7,X_8,X_9,X_{10},X_{11},Y \}\setminus\{A,B,C\}$.  
	\item[(10)] $X_2$ is d-separated from $X_3$ given $\{X_1, X_5\} \cup C$ for any $C \subset \{X_1,X_4,X_5,$ $X_9,X_{10},X_{11},Y\}$. 
	\item[(11)] $X_3$ is d-separated from $X_4$ given $\{X_1, X_5\} \cup C$ for any $C \subset \{X_1,X_4,X_5,X_6$ $,X_7,X_8,Y\}$. 
	\item[(12)] $X_3$ is d-separated from $Y$ given $\{X_1, X_5\} \cup C$ for any $C \subset \{X_1,X_4,X_5,X_6$ $,X_7,X_8,X_9,X_{10},X_{11}\}$. 
	\item[(13)] $X_2$ is d-separated from $X_3$ given $\{X_1, X_5\} \cup C$ for any $C \subset \{X_4,X_9$ $,X_{10},X_{11}, Y\}$. 
	\item[(14)] $X_4$ is d-separated from $X_3$ given $\{X_1, X_5\} \cup C$ for any $C \subset \{X_2,X_6,X_7$ $,X_8, Y\}$. 	
	\item[(15)] $Y$ is d-separated from $X_3$ given  $\{X_1, X_5\} \cup C$ for any $C \subset \{X_2,X_6,X_7,X_8$ $,X_4,X_9,X_{10},X_{11}\}$. 	
\end{itemize}

The set of d-separation rules entailed by $G_1$ that is not entailed by $G_2$ is as follows:
\begin{itemize}
	\item[(a)] $X_1$ is d-separated from $X_5$ given $\{X_2,X_3,X_4,Y\} \cup C$ for any $C \subset \{X_6,X_7$ $,X_8, X_9,X_{10},X_{11}\}$. 
\end{itemize}

Furthermore, the set of d-separation rules entailed by $G_2$ that is not entailed by $G_1$ is as follows:
\begin{itemize}
	\item[(b)] $X_2$ is d-separated from $X_4$ given $X_1$ or $\{ X_1, Y\}$.
	\item[(c)] $X_2$ is d-separated from $Y  $ given $X_1$ or $\{ X_1, X_4\}$.	
	\item[(d)] $X_4$ is d-separated from $Y  $ given $X_1$ or $\{ X_1, X_2\}$.		
\end{itemize}

It can then be shown that by using the co-efficients specified for $G_2$ in Figure~\ref{fig:Sec4dA}, $CI(\mathbb{P})$ is the union of the CI statements implied by the sets of  d-separation rules entailed by both $G_1$ and $G_2$. Therefore $(G_1,\mathbb{P})$ and $(G_2,\mathbb{P})$ satisfy the CMC. It is straightforward to see that $G_2$ is sparser than $G_1$ while $G_1$ entails more d-separation rules than $G_2$. 

Now we prove that $(G_1, \mathbb{P})$ satisfies the \MDR~assumption and $(G_2, \mathbb{P})$ satisfies the identifiable \SMR~assumption. First we prove that $(G_2, \mathbb{P})$ satisfies the identifiable \SMR~assumption. Suppose that $(G_2,\mathbb{P})$ does not satisfy the identifiable \SMR~assumption. Then there exists a $G$ such that $(G, \mathbb{P})$ satisfies the CMC and $G$ has the same number of edges as $G_2$ or fewer edges than $G_2$. Since the only additional CI statements that are not implied by the d-separation rules of $G_2$ are $X_1 \independent X_5 \mid \{X_2,X_3,X_4,Y\} \cup C$ for any $C \subset \{X_6,X_7,X_8, X_9,X_{10},X_{11}\}$ and $(G, \mathbb{P})$ satisfies the CMC, we can consider two graphs, one with an edge between $(X_1, X_5)$ and another without an edge between $(X_1, X_5)$. We firstly consider a graph without an edge between $(X_1, X_5)$. Since $G$ does not have an edge between $(X_1, X_5)$ and by Lemma~\ref{Lem:Sec3a}, $G$ should entail at least one d-separation rule from (a) $X_1$ is d-separated from $X_5$ given $\{X_2,X_3,X_4,Y\} \cup C$ for any $C \subset \{X_6,X_7,X_8, X_9,X_{10},X_{11}\}$. If $G$ does not have an edge between $(X_2, X_3)$, by Lemma~\ref{Lem:Sec3a} $G$ should entail at least one d-separation rule from (10) $X_2$ is d-separated from $X_3$ given $\{X_1, X_5\} \cup C$ for any $C \subset \{X_1,X_4,X_5,X_9,X_{10},X_{11},Y\}$. These two sets of d-separation rules can exist only if a cycle $X_1 \to X_2 \to X_5 \to X_3 \to X_1$ or $X_1 \leftarrow X_2 \leftarrow X_5 \leftarrow X_3 \leftarrow X_1$ exists. In the same way, if $G$ does not have edges between $(X_3, X_4)$ and $(X_3, Y)$, there should be cycles which are $X_1 \to A \to X_5 \to X_3 \to X_1$ or $X_1 \leftarrow A \leftarrow X_5 \leftarrow X_3 \leftarrow X_1$ for any $A \in \{X_4, Y\}$ as occurs in $G_1$. However these cycles create virtual edges between $(X_2, X_4), (X_2, Y)$ or $(X_4, Y)$ as occurs in $G_1$. Therefore $G$ should have at least 3 edges either real or virtual edges. This leads to a contradiction that $G$ has the same number of edges of $G_2$ or fewer edges than $G_2$. 

Secondly, we consider a graph $G$ with an edge between $(X_1, X_5)$ such that $(G, \mathbb{P})$ satisfies the CMC and $G$ has fewer edges than $G_2$. Note that $G_1$ entails the maximum number of d-separation rules amongst graphs with an edge between $(X_1, X_5)$ satisfying the CMC because $CI(\mathbb{P}) \setminus \{X_1 \independent X_5 \mid \{X_2,X_3,X_4,Y\} \cup C$ for any $C \subset \{X_6,X_7,X_8, X_9, X_{10},X_{11}\}$ is exactly matched to the d-separation rules entailed by $G_1$. This leads to $D_{sep}(G) \subset D_{sep}(G_1)$ and $D_{sep}(G) \neq D_{sep}(G_1)$. By Lemma~\ref{Lem:Sec3b}, $G$ cannot contain fewer edges than $G_1$. However since $G_2$ has fewer edges than $G_1$, it is contradictory that $G$ has the same number of edges of $G_2$ or fewer edges than $G_2$. Therefore, $(G_2,\mathbb{P})$ satisfies the identifiable \SMR~assumption. 

Now we prove that $(G_1, \mathbb{P})$ satisfies the \MDR~assumption. Suppose that $(G_1, \mathbb{P})$ fails to satisfy the \MDR~assumption. Then, there is a graph $G$ such that $(G, \mathbb{P})$ satisfies the CMC and $G$ entails more d-separation rules than $G_1$ or as many d-separation rules as $G_1$. Since $(G, \mathbb{P})$ satisfies the CMC, in order for $G$ to entail at least the same number of d-separation rules entailed by $G_1$, $G$ should entail at least one d-separation rule from  (b) $X_2$ is d-separated from $X_4$ given $X_1$ or $\{ X_1, Y\}$, (c) $X_2$ is d-separated from $Y$ given $X_1$ or $\{ X_1, X_4\}$ and (d) $X_4$ is d-separated from $Y  $ given $X_1$ or $\{ X_1, X_2\}$. This implies that $G$ does not have an edge between $(X_2, X_4)$, $(X_2, Y)$ or $(X_4, Y)$ by Lemma~\ref{Lem:Sec3a}. As we discussed, there is no graph satisfying the CMC without edges $(X_2, X_4)$, $(X_2, Y)$, $(X_4, Y)$, and $(X_1, X_5)$ unless $G$ has additional edges as occurs in $G_1$. Note that the graph $G$ entails at most six d-separation rules than $G_1$ (the total number of d-separation rules of (b), (c), and (d)). However, adding any edge in the graph $G$ generates more than six more d-separation rules because by Lemma~\ref{Lem:Sec3a}, $G$ loses an entire set of d-separation rules from the sets (1) to (15) which each contain more than six d-separation rules. This leads to a contradiction that $G$ entails more d-separation rules than $G_1$ or as many d-separation rules as $G_1$. 

\end{proof}

\end{document}